\def\eqref#1{equation~\ref{#1}}
\def\ceil#1{\lceil #1 \rceil}
\def\floor#1{\lfloor #1 \rfloor}
\def\1{\bm{1}}
\def\mB{{\bm{B}}}
\def\mC{{\bm{C}}}
\def\mF{{\bm{F}}}
\def\mG{{\bm{G}}}
\def\mH{{\bm{H}}}
\def\mN{{\bm{N}}}
\DeclareMathAlphabet{\mathsfit}{\encodingdefault}{\sfdefault}{m}{sl}
\SetMathAlphabet{\mathsfit}{bold}{\encodingdefault}{\sfdefault}{bx}{n}
\newcommand{\E}{\mathbb{E}}
\newcommand{\R}{\mathbb{R}}
\newcommand{\N}{\mathbb{N}}
\DeclareMathOperator*{\argmin}{arg\,min}
\def\mB{\mathcal{B}}
\def\P {\mathbb{P}}
\def\E {\mathbb{E}}
\theoremstyle{plain}
\newtheorem{thm}{Theorem}[section]
\newtheorem{prop}[thm]{Proposition}
\newtheorem{lem}[thm]{Lemma}
\newtheorem{cor}[thm]{Corollary}
\newcommand{\refine}{\textsc{ReFine}\xspace}
\def\NN{\text{NN}}
\def\u{\text{u}}
\def\mF{\mathcal{F}}
\def\mG{\mathcal{G}}
\def\mH{\mathcal{H}}
\def\mC{\mathcal{C}}
\def\mN{\mathcal{N}}
\def\rep{\text{rep}}
\def\t{\text{t}}
\def\s{\text{s}}
\def\scvar{\text{sc}}
\def\ftvar{\text{ft}}
\title{Residual Feature Integration is Sufficient to Prevent Negative Transfer}
\author{
Yichen Xu$^{1}$\thanks{Equal contribution} \quad
Ryumei Nakada$^{2}$\footnotemark[1] \quad
Linjun Zhang$^{3}$\thanks{Corresponding author} \quad
Lexin Li$^{1}$ \\
$^{1}$University of California, Berkeley \\
$^{2}$Harvard University \\
$^{3}$Rutgers University \\
\texttt{\{yichen\_xu, lexinli\}@berkeley.edu} \\
\texttt{lz412@stat.rutgers.edu} \\
\texttt{ryumei\_nakada@hms.harvard.edu}
}
\begin{document}

\maketitle

\begin{abstract}
    Transfer learning has become a central paradigm in modern machine learning, yet it suffers from the long-standing problem of negative transfer, where leveraging source representations can harm rather than help performance on the target task. Although empirical remedies have been proposed, there remains little theoretical understanding of how to reliably avoid negative transfer. In this paper, we investigate a simple yet remarkably effective strategy: augmenting frozen, pretrained source-side features with a trainable target-side encoder that adapts target features to capture residual signals overlooked by models pretrained on the source data. We show this residual feature integration strategy is sufficient to provably prevent negative transfer, by establishing theoretical guarantees that it has no worse convergence rate than training from scratch under the informative class of target distributions up to logarithmic factors, and that the convergence rate can transition seamlessly from nonparametric to near-parametric when source representations are informative. To our knowledge, this is the first theoretical work that ensures protection against negative transfer. We carry out extensive numerical experiments across image, text and tabular benchmarks, and empirically verify that the method consistently safeguards performance under distribution shift, label noise, semantic perturbation, and class imbalance. We additionally demonstrate that this residual integration mechanism uniquely supports adapt-time multimodality extension, enabling a pretrained single-cell foundation model to incorporate spatial signals for lymph-node anatomical classification despite the source model being trained without them. Our study thus advances the theory of safe transfer learning, and provides a principled approach that is simple, robust, architecture-agnostic, and broadly applicable.
\end{abstract}

\section{Introduction}
\label{sec:intro}

Transfer learning provides a fundamental paradigm in modern machine learning, where knowledge acquired from one task (source domain) is leveraged to enhance performance on another related task (target domain). It encompasses a wide range of applications, from adapting models across different sources or domains, to distilling knowledge from large, pretrained models into smaller, task-specific models. Yet, a critical and persistent challenge is negative transfer: the phenomenon where transferring knowledge degrades performance compared to simply training on the target data from scratch. This issue, which arises from mismatches between source and target distributions, has been documented across numerous scenarios \cite{nguyen23, compton23, lin22, jha20, zhang23, wu17, sorocky20}. It is especially concerning in high-stakes applications such as healthcare, where transferring from broad datasets like ImageNet to medical imaging can be detrimental \cite{raghu19, compton23}. Despite its prevalence, there remains little theoretical
understanding of how to reliably avoid negative transfer. 
 
In this article, we identify and validate a simple yet remarkably effective strategy that provably prevents negative transfer, i.e., augmenting frozen, pretrained source-side features with a trainable target-side encoder that adapts target features to capture residual signals overlooked by models pretrained on the source data. We call this strategy Residual Feature Integration (REFINE). Its implementation is straightforward: after obtaining the transferred representation $f_\rep(x)$ from the source domain, instead of relying solely on $f_\rep(x)$, we further introduce a residual connection with a trainable feature encoder $h(x)$ that is learned from the target domain. We then combine $f_\rep(x)$ and $h(x)$, and fit a \emph{shallow} neural network on the concatenated representation $(f_\rep(x), h(x))$. Intuitively, while $f_\rep(x)$ captures transferable features, it may omit target-specific signals that are critical for accurate prediction in the target domain. The residual connection via $h(x)$ compensates for this omission, ensuring that key information in the target domain is preserved. Furthermore, because $f_\rep(x)$ already encodes a substantial portion of the predictive signal, learning from the joint representations $(f_\rep(x), h(x))$ can potentially be achieved with a much simpler class of functions than learning from $x$ or $h(x)$ alone. We demonstrate, both theoretically and empirically, that this strategy is \emph{sufficient to prevent negative transfer} across a broad range of settings.

Our contributions are threefold. First, we identify the residual connection, a widely adopted structural component originally devised to address optimization challenges in deep neural networks \citep{he2016deep, ke17}, as a powerful mechanism for provably avoiding negative transfer. This strategy in turn offers a lightweight, robust, architecture-agnostic, and broadly applicable enhancement to transfer learning pipelines. We further identify an under-explored form of negative transfer that arises when source models lack modalities available only at adaptation time, and demonstrate that \refine uniquely enables such adapt-time multi-modality extension on a single-cell foundation model for lymph-node domain classification. Second, we formally justify this simple yet remarkably effective approach through a rigorous theoretical analysis, which is the main contribution of this article. Specifically, we show that augmenting any frozen $f_\rep$ with a trainable $h(x)$ guarantees that the resulting predictor achieves a convergence rate of prediction risk that is never worse than that obtained by training from scratch on the target data alone. In other words, \refine is inherently robust against negative transfer in the worst-case scenario. Moreover, our prediction risk bound seamlessly transitions from a nonparametric convergence rate to a near-parametric rate when source representations are informative. Finally, we conduct extensive experiments on benchmark datasets spanning image, text, and tabular domains, and compare \refine with multiple alternative solutions. We empirically verify that our method consistently mitigates negative transfer, especially under significant representational mismatch or task divergence.

\section{Related Work}
\label{sec:related}

\noindent \textbf{Transfer learning}. Linear probing \citep{kumar22} and adapter-based feature extraction \citep{houlsby19} are two of the most widely used transfer learning approaches. Both methods operate by extracting penultimate-layer features from a pretrained model in the source domain, followed by fine-tuning the final layer using data in the target domain. The main difference between the two is that linear probing employs a linear layer, while the adapter method uses a shallow neural network. Both are computationally efficient, but both are vulnerable to negative transfer. Knowledge distillation is another widely used transfer learning technique, where a large pretrained foundation model (the teacher) transfers knowledge to a simpler model (the student) that is typically fine-tuned in the target domain with substantially reduced complexity \citep{hinton15}. However, distillation remains vulnerable to negative transfer, especially when the teacher is poorly aligned with the target domain or when the transferred knowledge is too complex for the student to absorb effectively \citep{gou2021knowledge}. Our approach is applicable not only to knowledge transfer in foundation models, but also to general transfer learning settings.

\noindent \textbf{Negative transfer mitigation}. 
To mitigate negative transfer, various empirical remedies have been proposed, most of which focus on developing metrics that estimate similarity between source and target domains \citep{gretton12, lin17, ge17, muhammad18}. Yet in practice, such similarity measures are often difficult to quantify, and sometimes require specialized loss functions or architectures, which limits their applicability \citep{hosna2022}. \cite{li21} proposed SAFEW, which constructs an ensemble of source-domain models using a min–max framework. While theoretically sound, this method is computationally intensive and relies on the assumption that the optimal predictor can be expressed as a convex combination of source classifiers. \cite{wang19} introduced DANN-GATE, a state-of-the-art solution that reduces negative transfer by combining adversarial training with a gating mechanism to filter out misleading source samples. While practically effective, this method requires direct access to source data and is primarily empirical, lacking theoretical guarantees. In contrast, our method does not require access to original training data in the source domain and comes with rigorous theoretical guarantees. We also examine a largely overlooked form of negative transfer in which the source model lacks modalities that become available only at adaptation time. This setting is seldom discussed in multimodal learning \citep{baltrusaitis19}, where it is typically assumed that all modalities are present during source-model training. Existing approaches cannot exploit such missing-modality information without retraining on source data. \refine uniquely enables adapt-time multimodality extension without access to source data.

\noindent \textbf{Residual learning, stacking, and parameter-efficient fine-tuning}. Several methods are conceptually related to \refine, although they do not explicitly target negative transfer in transfer learning. Residual learning, a core idea in architectures such as ResNet \citep{he2016deep} and algorithms like gradient boosting \citep{ke17}, was originally developed to ease optimization challenges or improve prediction. Its potential for addressing negative transfer, however, remains unexplored. Stacking is an ensemble technique that combines predictions from multiple base models through a meta-learner trained on validation outputs. This approach is generally more robust than simple model averaging \citep{ju18}, but it assumes that all external models are reliable \citep{ghorbani19, grover23}, and requires aligned output spaces, which restricts its applicability across different types of tasks. Parameter-efficient fine-tuning methods, such as LoRA \citep{hu21}, insert lightweight, trainable modules into pretrained models to enable domain adaptation without modifying the original weights. Such approaches are effective and significantly reduces parameter costs, but struggles when source representations misalign with the target domain. Besides, it requires access to pretrained model weights and computational graphs, limiting their flexibility, particularly in the multi-source transfer setting.

\section{Problem Formulation and Algorithm}

Transfer learning aims to leverage knowledge from a source task to improve performance on a related target task. A common practice is to use a representation function $f_\rep$ learned from a large source dataset $D^\s$ under a source distribution $\P^\s$ as an extracted feature for the target task. However, if $f_\rep$ does not align well with the target distribution $\P^\t$, naively reusing it can lead to negative transfer, resulting in degraded performance compared to using the target data alone.

We formalize the Residual Feature Integration (\refine) approach. The objective is to construct a method such that, when $f_\rep$ aligns well with the target distribution, we effectively leverage transferred 
\begin{wrapfigure}{l}{0.625\textwidth}
    \centering
    \includegraphics[width=\linewidth]{./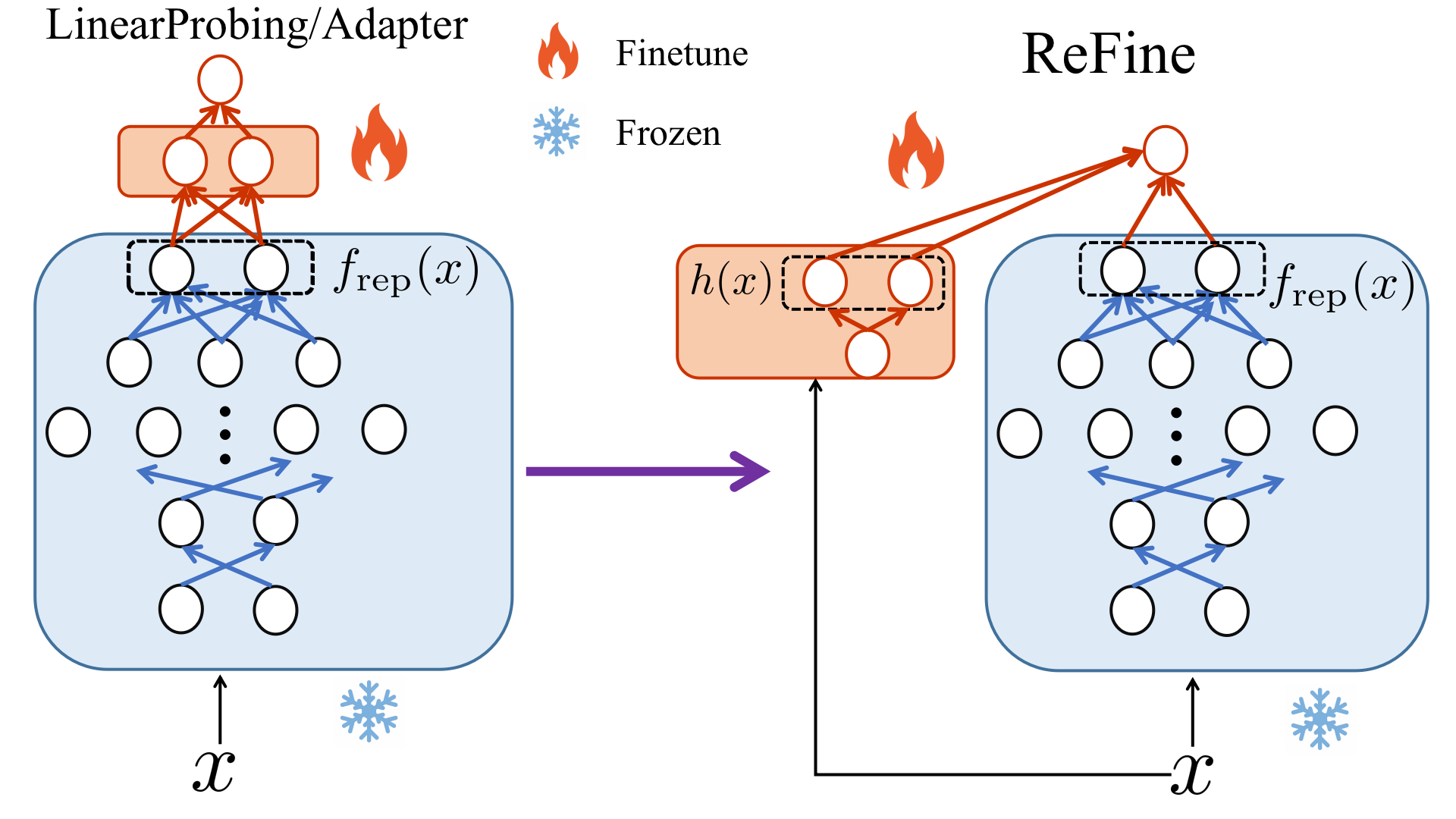}
    \caption{A schematic overview of \refine.}
    \label{fig:semantic}
\end{wrapfigure}
knowledge and outperform models trained from scratch on  target data only, and when $f_\rep$ misaligns with the target distribution, safeguard against negative transfer and outperform models that rely solely on $f_\rep(x)$. We focus on the supervised learning task. Let $D^\t=\{ (x_i, y_i) \}_{i=1}^n \sim \P^\t$ denote the labeled dataset from the target task. Assume access to a frozen extracted feature $f_\rep:\mathcal X\to\R^p$ trained on an external source data $D^\s$. Define a class $\mathcal H$ of trainable feature encoders $h: \mathcal X\to\R^q$ and a class $\mathcal{W}$ of trainable adapters $w: \R^{p+q} \to \R^k$ on top of $(f_\rep(x), h(x))$. 
Let $\hat w_\ftvar$ be the trained adapter on top of the baseline model, and let $\hat g_\scvar$ be the model trained from scratch on $x$. 
We seek to learn both the encoder $\hat h$ and the adapter $\hat w$, such that the expected excess risk of $\hat w \circ (f_\rep, \hat h)$ over the target distribution is bounded by the minimum of the excess risks of the two baselines: $\hat w_\ftvar \circ f_\rep$ and $\hat g_\scvar$.

\Cref{alg:refine} outlines the \refine approach. It extracts $f_{\rep}(x)$ from the penultimate layer of a frozen pretrained model, and combines it with the residual connection $h(x)$. The concatenated features $(f_\rep(x), h(x))$ are passed to a linear classifier for prediction, where only $h(x)$ and the adapter $w$ are updated, whereas the pretrained model and $f_{\rep}(x)$ remain unchanged. This design allows \refine to efficiently complement transferred knowledge with adapted features from the target data, and thus recover potentially lost information during the forward pass in the frozen source model. Figure~\ref{fig:semantic} gives a schematic overview of \refine. 

\begin{algorithm}[b!]
\caption{The residual feature integration (\refine) method.}
\label{alg:refine}
\begin{algorithmic}[1]
\STATE \textbf{Input:} Training data $\mathcal{D}_{\text{train}} = (X_i, Y_i)_{i}$, test data $\mathcal{D}_{\text{test}}$, pretrained model $f$, loss function $\ell$.

\STATE \textbf{Output:} Prediction of the label $\hat{y}(x_0)$ for $x_0 \in \mathcal{D}_{\text{test}}$.
        
\STATE \textbf{Training Phase:}
\STATE \hspace{1.5em} (a) Extract $f_{\rep}(x)$ from the penultimate-layer of a frozen pretrained model $f$. 
\STATE \hspace{1.5em} (b) Construct the concatenated features $C_h(x) := (f_\rep(x), h(x))$.
\STATE \hspace{1.5em} (c) Let $(\hat w, \hat h)$ be the minimizer of $\sum_i \ell(w(C_h(X_i)), Y_i)$ while freezing $f_\rep$.
\STATE \textbf{Prediction Phase:}
\STATE \hspace{1.5em} (a) Compute $C_h(x_0)$ with the frozen $f$.
\STATE \hspace{1.5em} (b) Obtain the final prediction $\hat{y}(x_0)$ based on $\hat w (C_{\hat h}(x_0))$.
\end{algorithmic}
\end{algorithm}

\section{Theoretical Analysis}
\label{sec: theory}

We provide a theoretical analysis to prove that \refine is robust to negative transfer. The intuition and core insight is that the residual connection provides a natural transition: if the external representation $f_{\rep}$ is uninformative, the residual network $h$ can still learn the target function from the raw input, recovering the performance of training from scratch. Conversely, if $f_{\rep}$ is informative, $h$ only needs to learn the simpler residual function, reducing the effective complexity of the problem and accelerating the learning. This intuition is formalized in two ways: a no-negative-transfer guarantee showing that, under mild growth conditions on model capacity, \refine is never worse than either training from scratch or using a linear probe on $f_{\rep}$, and a risk bound showing that its convergence rate smoothly interpolates between the standard nonparametric rate and a near-parametric rate depending on the quality of the external representation.

We formalize this intuition within the framework of nonparametric regression. We consider the model with a \textit{trainable} residual feature encoder $h$: 
$$
    g(x)=u h(x)+v^\top f_{\rep}(x),
$$
where $h(x)$ is a (clipped) ReLU network over raw input, combined with a linear probe on the feature $f_{\rep}(x)$. We establish the risk bound demonstrating that, for moderate capacity of $h$, \refine's excess risk is no worse than the excess risk of the model trained from scratch or the linear probe on $f_{\rep}(x)$. Furthermore, when the capacity of $h$ is tuned to the difficulty of the residual task, the rate adapts and improves, showcasing its ability to effectively leverage useful prior information from $f_{\rep}(x)$.

\paragraph{Formal Setup.} We consider the nonparametric regression setup adopted in the statistical analysis of deep neural networks \citep{suzuki2018adaptivity,schmidt2020nonparametric,kohler2021rate}. Specifically, we observe $n$ i.i.d.\ pairs $(X_i, Y_i)_{i \in [n]} \sim \P^\t$ with support on $[0, 1]^d \times \R$ following the model
\begin{align}
    Y_i = f^*(X_i) + \epsilon_i,\label{eq: model}
\end{align}
where $f^*: [0, 1]^d \to [-1, 1]$ is the ground-truth regression function, $(X_i)_{i \in [n]}$ are i.i.d. samples from the marginal distribution $\P^\t_X$ on $X$, and $(\epsilon_i)_{i \in [n]}$ are i.i.d.\ Gaussian with variance $\sigma^2=\Theta(1)$, independent of $(X_i)_{i \in [n]}$. We assume $\P^\t_X$ admits a positive continuous density on $[0,1]^d$ upper bounded by an absolute constant. Under this set-up, the expected loss for a given function $g$ is
$\mathcal{R}_{\P^\t}(g)=\E_{(X,Y)\sim\P^\t}[(g(X)-Y)^2]$.

To facilitate the theoretical analysis, following the standard setup of nonparametric regression, we consider $f^*$ to be H\"older smooth. Specifically, for a non-integer $\beta>0$, the H\"older norm for $f^*$ that are $\floor{\beta}$-times differentiable on $[0,1]^d$ is
\begin{align*}
    \|f\|_{\mC^\beta} := \max\Bigl\{\max_{a \in \N^d:\|a\|_1 \leq \floor{\beta}} \ \sup_{x \in [0,1]^d} |\partial^a f(x)|,\ \max_{a \in \N^d:\|a\|_1=\floor{\beta}} \ \sup_{x\neq x'} \frac{|\partial^a f(x)-\partial^a f(x')|}{\|x-x'\|^{\beta-\floor{\beta}}}\Bigr\}.
\end{align*}
The unit ball is $\mC^\beta_\u := \{f:[0,1]^d\to\R: f \text{ is }\floor{\beta}\text{-times differentiable and } \|f\|_{\mC^\beta}\leq 1\}$.

Further, we assume the residual connection $h:\R^d\to\R$ is realized by a ReLU network with width at most $W$, depth at most $L$, and weight magnitude at most $B$:
\begin{align}
    h(x) = A_{L'} x^{(L'-1)} + b_{L'},\quad x^{(\ell)} = \sigma(A_\ell x^{(\ell-1)} + b_\ell)\ (\ell \in [L'-1]),\quad x^{(0)}=x, \label{eq: relu network}
\end{align}
for some $L'\leq L$, where $d_0=d$, $d_{L'}=1$, and $d_\ell\leq W$. Here $\sigma(z)=\max\{0,z\}$ is applied element-wise, $A_\ell\in[-B,B]^{d_\ell\times d_{\ell-1}}$, and $b_\ell\in[-B,B]^{d_\ell}$. The class is $\mH_d(W,L,B)$, and we use its clipped counterpart $\bar\mH_d(W,L,B):=\{x\mapsto \min\{1,\max\{-1,h(x)\}\}: h\in \mH_d(W,L,B)\}$.

\paragraph{Empirical risk minimization for \refine.}
We consider squared loss $\ell(y,y')=(y-y')^2$. Let $f_\rep:[0,1]^d\to \mB_p(1)$ be an external representation with $\mB_p(R)=\{u\in\R^p:\|u\| \leq R\}$. Define the \refine class
\begin{small}
\begin{align*}
    \mG_{d,p}(W,L,B; f_\rep) = \Bigl\{g:[0,1]^d\to\R \ \Big|\ g(x)=v^\top f_\rep(x)+u h(x),\ |u|\leq 1,\ \|v\|\leq 1,\ h\in \bar\mH_d(W,L,B)\Bigr\}.
\end{align*}
\end{small}
We train $\hat g$ via empirical risk minimization,
\begin{align}
    \hat g = \argmin_{g \in \mG_{d,p}(W,L,B; f_\rep)} \frac{1}{n}\sum_{i\in[n]} \ell(g(X_i),Y_i). \label{eq: ERM}
\end{align}

The effectiveness of \refine depends on the quality of $f_\rep$. We quantify this by defining the best possible linear probe and the corresponding residual. Specifically, for any $f_\rep:[0,1]^d\to \mB_p(1)$, the best linear probe is defined as
\begin{align*}
    v^* = \argmin_{v \in \R^p} \E\bigl[\{v^\top f_\rep(X_1)-f^*(X_1)\}^2\bigr].
\end{align*}
The difficulty of learning the residual is then captured by its H\"older norm, which we denote as $\rho^* := \|v^{*\top} f_\rep - f^*\|_{\mC^\beta}$. A small $\rho^*$  indicates that $f_\rep$ is highly informative for the target task.


We state a theorem that provides an upper bound on the generalization error of the empirical risk minimizer in \eqref{eq: ERM}, when the model capacity is chosen appropriately. 
\begin{thm}[Generalization Error of \refine]\label{thm: ERM ub}
    Assume $v^{* \top} f_\rep - f^* \in \mC^\beta$.
    Let $\rho \geq 0$ be a tuning parameter, which serves as a proxy for the residual norm, and choose the network parameters for $h$ as
    \begin{align}
        L=c_1,\qquad W=c_2 \max\{n^{d/(2\beta+d)} \rho^{2d/(2\beta+d)}, 1\},\qquad B= (\rho^* \vee 1) \max\{n\rho^2,1\}^{c_3}, \label{eq: parameter choice}
    \end{align}
    where $c_1,c_2,c_3>0$ depend on $\beta$, $d$ and $\gamma$. Let $\hat g$ be the empirical risk minimizer in (\ref{eq: ERM}) with the parameter specified as in (\ref{eq: parameter choice}). Then there exists $C>0$, which depends on $\beta,d$, such that
    \begin{align}
        \E[\mathcal{R}_{\P^\t}(\hat g) - \mathcal{R}_{\P^\t}(f^*)] \leq C \Bigl\{ \bigl(\rho^{2d/(2\beta+d)} \log n + \rho^{* 2} \rho^{-4\beta/(2\beta+d)}\bigr) n^{-2\beta/(2\beta+d)} + \frac{p \log n}{n} \Bigr\}. \label{eq:orig-bound}
    \end{align}
\end{thm}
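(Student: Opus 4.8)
The plan is to run the classical approximation–estimation (bias–variance) decomposition for least-squares estimation over a uniformly bounded function class, specialized to the \refine\ class $\mG_{d,p}(W,L,B;f_\rep)$. Since $f^*$ is the regression function under \eqref{eq: model} and every $g\in\mG_{d,p}$ is uniformly bounded (indeed $|g|\le\|v\|\,\|f_\rep\|+|u|\le 2$), the excess risk equals the population $L^2$ error, $\mathcal{R}_{\P^\t}(g)-\mathcal{R}_{\P^\t}(f^*)=\|g-f^*\|_{L^2(\P^\t_X)}^2$. I would therefore first invoke a standard oracle inequality for empirical risk minimization with squared loss and sub-Gaussian noise, of the type in \citep{schmidt2020nonparametric,kohler2021rate}, which yields, up to a multiplicative constant and a $\log n$ factor,
\begin{align*}
\E\bigl[\mathcal{R}_{\P^\t}(\hat g)-\mathcal{R}_{\P^\t}(f^*)\bigr]\ \lesssim\ \underbrace{\inf_{g\in\mG_{d,p}}\|g-f^*\|_\infty^2}_{\text{approximation}}\ +\ \underbrace{\frac{\log\mathcal N\bigl(\mG_{d,p},1/n\bigr)}{n}}_{\text{estimation}},
\end{align*}
where $\mathcal N(\cdot,\delta)$ is the sup-norm covering number. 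The two remaining tasks are then to (i) bound the metric entropy of the joint class and (ii) control the approximation error by exploiting the residual structure.

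For the estimation term I would split the covering number across the linear probe and the network summands, $\log\mathcal N(\mG_{d,p},\delta)\lesssim \log\mathcal N(\{v^\top f_\rep:\|v\|\le1\},\delta/2)+\log\mathcal N(\{uh:|u|\le1,h\in\bar\mH_d\},\delta/2)$. The first piece is that of a $p$-dimensional bounded linear class with bounded features, giving $\lesssim p\log(1/\delta)$ and hence the $p\log n/n$ term. The second uses the standard Lipschitz-in-parameters bound for clipped ReLU networks, $\log\mathcal N\lesssim (\#\text{params})\log(BWL/\delta)$; with the constant depth $L=c_1$ the effective parameter count of the tuned architecture scales like $W$, and since $B$ is polynomial in $n$ this is $\lesssim W\log n$. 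Substituting $W=c_2\max\{n^{d/(2\beta+d)}\rho^{2d/(2\beta+d)},1\}$ reproduces exactly the $\rho^{2d/(2\beta+d)}(\log n)\,n^{-2\beta/(2\beta+d)}$ contribution.

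For the approximation term I would use the decomposition $f^*=v^{*\top}f_\rep+r^*$ with residual $r^*:=f^*-v^{*\top}f_\rep$ satisfying $\|r^*\|_{\mC^\beta}=\rho^*$, take $v=v^*$ in the linear probe, and approximate $r^*$ by the network summand $u\,h$. Writing $h$ as a clipped ReLU approximant of the rescaled unit-ball function $r^*/(\rho^*\vee1)$ and choosing $u$ accordingly, the standard Hölder approximation rate for ReLU networks of width $W$ \citep{suzuki2018adaptivity,schmidt2020nonparametric} gives $\|u h-r^*\|_\infty\lesssim \rho^* W^{-\beta/d}$; squaring and inserting the chosen $W$ produces the $\rho^{*2}\rho^{-4\beta/(2\beta+d)}n^{-2\beta/(2\beta+d)}$ term, and the weight budget needed to realize this approximant forces the scaling $B\asymp(\rho^*\vee1)(n\rho^2)^{c_3}$. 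Collecting the three contributions yields \eqref{eq:orig-bound}.

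The main obstacle I anticipate is the approximation step under the class constraints rather than the entropy bookkeeping. Three points need care: the linear-probe coefficient $v^*$ is the \emph{unconstrained} minimizer, so one must argue that it (or a suitable surrogate) lies in the feasible set $\{\|v\|\le1\}$; the summand is the \emph{product} $u\,h$ with the hard constraint $|u|\le1$ and a network that is \emph{clipped} to $[-1,1]$, so the rescaling that moves the factor $\rho^*$ between $u$, $h$, and the weight bound $B$ must be arranged so that clipping never truncates the signal on the relevant region; and, most delicately, the width is tuned through the \emph{proxy} $\rho$ while the true residual norm is $\rho^*$, so the approximation rate $\rho^*W^{-\beta/d}$ and the complexity $W\log n$ are governed by different quantities. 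Reconciling the two, and verifying that the stated $B$ is simultaneously large enough for approximation yet only polynomial in $n$ so the entropy stays at $O(W\log n)$, is the crux of the argument. Once the approximation lemma is established with the correct dependence on $\rho^*$, $\rho$, and $W$, the remaining work is the routine balancing already encoded in \eqref{eq: parameter choice}.
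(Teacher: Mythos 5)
Your proposal follows essentially the same route as the paper's proof: the same oracle inequality (the paper's bias--variance lemma applied with $\delta = 1/n$), the same covering-number split into a $p\log n$ linear-probe term and a $W\log(\cdot)$ network term, the same decomposition $f^* = v^{*\top} f_\rep + r^*$ with the amplitude $\rho^*$ absorbed into the last-layer weights (hence into the budget $B=(\rho^*\vee 1)\max\{n\rho^2,1\}^{c_3}$) rather than into $u$, and the same two-case balancing in $\rho$. The only cosmetic difference is that you control the approximation error in sup norm while the paper works in $L_2$ via its modification of Petersen--Voigtlaender (which is precisely what makes the constant depth $L=c_1$ suffice), and the ``crux'' issues you flag -- feasibility of $v^*$, the interaction of clipping with rescaling, and decoupling the proxy $\rho$ from the true $\rho^*$ -- are exactly the points the paper's rescaling argument resolves.
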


The bound in (\ref{eq:orig-bound}) splits into a parametric term $p\log n/n$ for learning $v^*$ on top of $f_\rep$, and a nonparametric term with the standard minimax rate $n^{-2\beta/(2\beta+d)}$ for learning the residual  modulated by the tuning parameter $\rho$ and the residual difficulty $\rho^*$. The tuning radius $\rho$ controls the effective capacity of $h$ via $W$ and $B$ in (\ref{eq: parameter choice}). That is, a larger $\rho$ increases the approximation power, achieving a smaller bias, but worsens the estimation, resulting in a larger variance factor $\rho^{2d/(2\beta+d)}$. On the other hand, a smaller $\rho$ regularizes $h$, which is preferable when the residual is genuinely small. 

\paragraph{Proof sketch of Theorem~\ref{thm: ERM ub}}
For any $v$, decompose
$$
    f^*(x) = \underbrace{f^*(x)-v^\top f_\rep(x)}_{\text{residual}} + \underbrace{v^\top f_\rep(x)}_{\text{linear in } f_\rep(x)}.
$$
The first term is fit by $h$ and the second by a linear probe on $f_\rep$. Approximation results for ReLU networks over $\mC^\beta$ functions give the residual term at rate $n^{-2\beta/(2\beta+d)}$ with a capacity-dependent multiplier governed by $\rho$. A standard linear estimation yields the $p/n$ term for $v$. Choosing $(W,L,B)$ as in (\ref{eq: parameter choice}) implements this bias-variance trade-off. The full proof is deferred to Appendix~\ref{sec: theory ap}.

We remark that our theoretical results are derived under the squared-loss objective, following a long line of work that analyzes classification problems through regression surrogates \citep{han2021neural, zhou2022optimization}. This approach aligns with common practice in the machine learning theory community, where regression surrogates are employed to derive insights for classification algorithms.

We further discuss two direct implications of Theorem~\ref{thm: ERM ub}. 
\begin{cor}[Fixed $\rho$]
    Under the same conditions as in Theorem~\ref{thm: ERM ub},
    for any fixed choice of $\rho>0$, the bound in (\ref{eq:orig-bound}) implies that
    \begin{align*}
        \E[\mathcal{R}_{\P^\t}(\hat g) - \mathcal{R}_{\P^\t}(f^*)] = \tilde O\qty(n^{-2\beta/(2\beta+d)} + \frac{p}{n}).
    \end{align*}
    \vspace{-1.5em}
\end{cor}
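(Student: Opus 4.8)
The plan is to derive this corollary as an immediate specialization of Theorem~\ref{thm: ERM ub}, reading the bound (\ref{eq:orig-bound}) in the regime where $\rho$ is held fixed at a positive constant independent of $n$. The central observation is that once $\rho$ is a constant and $\rho^*=\|v^{*\top}f_\rep-f^*\|_{\mC^\beta}$ is a problem-dependent quantity fixed by the representation $f_\rep$ and the target $f^*$ (and in particular not growing with $n$), every $\rho$- and $\rho^*$-dependent prefactor appearing in (\ref{eq:orig-bound}) reduces to an $O(1)$ constant.

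Carrying this out, I would first note that $\rho^{2d/(2\beta+d)}$, $\rho^{-4\beta/(2\beta+d)}$, and $\rho^{*2}$ are all bounded constants under the fixed-$\rho$ regime. Substituting them into the braces of (\ref{eq:orig-bound}) collapses the nonparametric contribution to
\[
    \left(\rho^{2d/(2\beta+d)}\log n + \rho^{*2}\rho^{-4\beta/(2\beta+d)}\right) n^{-2\beta/(2\beta+d)} = O\!\left(\log n \cdot n^{-2\beta/(2\beta+d)}\right),
\]
while the parametric contribution remains $p\log n / n$. Summing the two and absorbing the polylogarithmic factors in $n$ into the $\tilde O$ notation then yields exactly $\tilde O\left(n^{-2\beta/(2\beta+d)} + p/n\right)$.

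I do not expect any genuine obstacle, as the argument is purely a substitution followed by bookkeeping of logarithmic factors. The only point meriting care is to make explicit that $\rho^*$ is treated as a constant with respect to the sample size, so that its presence inflates only the hidden constant and never the stated polynomial rate in $n$ or $p$; this is consistent with the nonparametric convention in which the smoothness $\beta$, the dimension $d$, and the representation quality are regarded as fixed while $n$ grows. It is worth emphasizing that the $B$ parameter in (\ref{eq: parameter choice}) does grow with $n$, but since it never surfaces in the final risk bound (\ref{eq:orig-bound}), it plays no role in this specialization.
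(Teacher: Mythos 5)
Your proposal is correct and is exactly the route the paper intends: the paper explicitly omits the proof as a straightforward consequence of Theorem~\ref{thm: ERM ub}, and your argument---fixing $\rho>0$, noting $\rho^*=\|v^{*\top}f_\rep-f^*\|_{\mC^\beta}$ is a finite $n$-independent constant under the theorem's hypothesis, and absorbing the resulting constant prefactors and the $\log n$ factors into $\tilde O(\cdot)$---is precisely that specialization. No gaps; the remark about $B$ growing with $n$ but not appearing in the risk bound is a correct, if inessential, observation.
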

This corollary indicates that, by introducing an additional residual connection $h$, \refine never has a worse rate than $n^{-2\beta/(2\beta+d)}$ for fixed $p$, which is the standard minimax-optimal rate when training from scratch on $(X_i,Y_i)_{i\in[n]}$ for $\beta$-H\"older $f^*$ (See, for example, Theorem~3.2 in \citet{gyorfi2002distribution}). 

\begin{cor}[Tuned $\rho$]
    Under the same conditions as in Theorem~\ref{thm: ERM ub},
    balancing (\ref{eq:orig-bound}) by choosing $\rho\downarrow\rho^*$ yields
    \begin{align}
        \E[\mathcal{R}_{\P^\t}(\hat g) - \mathcal{R}_{\P^\t}(f^*)] = \tilde O\qty(\rho^{* 2d/(2\beta+d)}  n^{-2\beta/(2\beta+d)} + \frac{p}{n}). \label{eq:balance-bound}
    \end{align}
    \vspace{-1.5em}
\end{cor}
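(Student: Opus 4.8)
The plan is to treat the right-hand side of (\ref{eq:orig-bound}) as a function of the free tuning parameter $\rho \geq 0$ and minimize it, since Theorem~\ref{thm: ERM ub} supplies a valid bound for every admissible $\rho$ (each choice fixing an architecture $(W,L,B)$ through (\ref{eq: parameter choice}) and hence an estimator $\hat g$). First I would isolate the two $\rho$-dependent summands inside the braces, both carrying the common factor $n^{-2\beta/(2\beta+d)}$: the capacity term $\rho^{2d/(2\beta+d)} \log n$, which is increasing in $\rho$, and the approximation term $\rho^{* 2} \rho^{-4\beta/(2\beta+d)}$, which is decreasing in $\rho$. The remaining contribution $p\log n/n$ is independent of $\rho$ and carries through unchanged.

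Next I would balance the two competing terms. Dropping the logarithmic factor, equating $\rho^{2d/(2\beta+d)} = \rho^{* 2}\rho^{-4\beta/(2\beta+d)}$ and collecting exponents gives $\rho^{(2d+4\beta)/(2\beta+d)} = \rho^{* 2}$, and since $(2d+4\beta)/(2\beta+d) = 2$ this reduces to $\rho^2 = \rho^{* 2}$, so the minimizer is $\rho \asymp \rho^*$. This is precisely the choice $\rho \downarrow \rho^*$ in the statement; taking $\rho$ slightly above $\rho^*$ rather than exactly equal keeps the network capacity large enough to approximate a residual of H\"older norm $\rho^*$, as required by the approximation step underlying Theorem~\ref{thm: ERM ub}.

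With $\rho = \rho^*$ substituted, I would simplify the two $\rho$-dependent terms. The capacity term becomes $\rho^{* 2d/(2\beta+d)} \log n \cdot n^{-2\beta/(2\beta+d)}$. For the approximation term, a one-line exponent computation, $2 - 4\beta/(2\beta+d) = 2d/(2\beta+d)$, shows it equals $\rho^{* 2d/(2\beta+d)} n^{-2\beta/(2\beta+d)}$, matching the capacity term up to the $\log n$ factor. Hence both collapse to $\Theta(\rho^{* 2d/(2\beta+d)} n^{-2\beta/(2\beta+d)})$ modulo logarithms, and folding these logarithmic factors together with the $p\log n/n$ term into the $\tilde O$ notation yields exactly (\ref{eq:balance-bound}).

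I do not expect a serious obstacle here, as the corollary is a direct optimization of an already-established bound; the only point needing care is the degenerate regime $\rho^* = 0$, where a perfectly informative representation makes the limit $\rho \downarrow \rho^*$ require checking that $W$ and $B$ in (\ref{eq: parameter choice}) stay well-defined. In that case both $\rho$-dependent terms vanish and the bound reduces to the near-parametric rate $\tilde O(p/n)$, which remains consistent with (\ref{eq:balance-bound}).
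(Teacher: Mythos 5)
Your proposal is correct and is exactly the routine calculation the paper has in mind: the paper explicitly omits proofs of both corollaries as "straightforward" consequences of Theorem~\ref{thm: ERM ub}, and your substitution $\rho \asymp \rho^*$ with the exponent identity $2 - 4\beta/(2\beta+d) = 2d/(2\beta+d)$ is the intended derivation. Your handling of the degenerate case $\rho^* = 0$ via the limit $\rho \downarrow \rho^*$ (under which both $\rho$-dependent terms vanish and the bound reduces to $\tilde O(p/n)$) is also consistent with the theorem, whose parameter choices in (\ref{eq: parameter choice}) remain well-defined there.
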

This corollary indicates that, when $f_\rep$ is well aligned with the target, i.e., a small $\rho^*$, choosing $\rho\asymp\rho^*$ effectively regularizes the residual network $h$ via the parameter choice in (\ref{eq: parameter choice}), which shrinks the nonparametric term so that the bound is dominated by the near-parametric $p/n$ term. Conversely, when $f_\rep$ is misaligned, i.e., a large $\rho^*$, the nonparametric component dominates and the rate reverts to the classical $\beta$-H\"older minimax rate $n^{-2\beta/(2\beta+d)}$.

We now provide a corollary for no-negative-transfer guarantee. The key idea is to define a class of functions that can be approximated in the $\beta$-H\"older norm by a linear combination of $f_\rep$ up to an error $\gamma>0$:
$$
    \mF^\beta(f_\rep, \gamma) := \{f^* : [0, 1]^d \to \R \big| \min_{v: \|v\| \leq 1} \|v^\top f_\rep - f^*\|_{\mC^\beta} \leq \gamma\}.
$$
This class captures the functions that can be learned from the residual connection $h$ and the linear probe on $f_\rep$, and thus serves as a target for the empirical risk minimization in \eqref{eq: ERM}. 
In a special case where $f_\rep$ is not informative, i.e., $f_\rep = 0$, the class reduces to the standard H\"older ball with radius $\gamma$: $\{f^*  | \|f^*\|_{\mC^\beta} \leq \gamma\}$.
\begin{cor}[No-negative-transfer guarantee]\label{cor: no negative transfer in rate}
    Fix $d, p \in \N_+$ and $\beta > 0$. Also fix $f_\rep : [0,1]^d \to \R^p$ satisfying $v^\top f_\rep \in \mC^\beta_\u$ for any unit vector $v \in \mathbb{S}^{p-1}$. Consider the model trained from scratch  and the linear probe on $f_\rep$ with comparable capacity:
    \begin{align*}
        \hat g_\scvar = \argmin_{g \in \bar\mH_d(W,L,B)} \frac{1}{n}\sum_{i\in[n]} \ell(g(X_i),Y_i),\quad \hat w_\ftvar = \argmin_{w \in \R^p} \frac{1}{n}\sum_{i\in[n]} \ell(w^\top f_\rep(X_i), Y_i).
    \end{align*} 
    Then,
    \begin{align*}
        &\sup_{f^* \in \mF^\beta(f_\rep,\gamma)} \E[\mathcal{R}_{\P^\t}(\hat g) - \mathcal{R}_{\P^\t}(f^*)] \\
        &\quad= \tilde O\qty(\min\qty{\sup_{f^* \in \mF^\beta(f_\rep, \gamma)} \E[\mathcal{R}_{\P^\t}(\hat g_\scvar) - \mathcal{R}_{\P^\t}(f^*)], \sup_{f^* \in \mF^\beta(f_\rep, \gamma)} \E[\mathcal{R}_{\P^\t}(\hat w_\ftvar^\top f_\rep) - \mathcal{R}_{\P^\t}(f^*)]})
    \end{align*}
    holds for any $\gamma \in [0, 1)$.
\end{cor}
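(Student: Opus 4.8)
The plan is to reduce the statement to two one-sided comparisons. Since a bound of the form $\tilde O(\min\{A_\scvar,A_\ftvar\})$ is equivalent to simultaneously establishing $\tilde O(A_\scvar)$ and $\tilde O(A_\ftvar)$, where I abbreviate by $A_\scvar$ and $A_\ftvar$ the two worst-case baseline excess risks on the right-hand side and by $A$ the corresponding \refine quantity, it suffices to control $A$ by each baseline separately. Throughout I would rewrite excess risk as an $L^2$ distance: under the squared loss and model \eqref{eq: model}, $\E[\mathcal R_{\P^\t}(\hat g)-\mathcal R_{\P^\t}(f^*)]=\E\|\hat g-f^*\|_{L^2(\P^\t_X)}^2$, and likewise for the two baselines. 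Two inclusions that hold for every $\gamma\in[0,1)$ drive the argument: taking $v=0$ gives $\gamma\,\mC^\beta_\u\subseteq\mF^\beta(f_\rep,\gamma)$, while taking the residual to be $0$ gives the parametric family $\{v^\top f_\rep:\|v\|\le 1\}\subseteq\mF^\beta(f_\rep,\gamma)$; conversely, homogeneity of the H\"older norm together with the assumption $v^\top f_\rep\in\mC^\beta_\u$ yields $\|f^*\|_{\mC^\beta}\le 1+\gamma$ for every $f^*\in\mF^\beta(f_\rep,\gamma)$.

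\emph{Upper bound for \refine.} For $f^*\in\mF^\beta(f_\rep,\gamma)$ I would pick a unit vector $\tilde v$ with $\|\tilde v^\top f_\rep-f^*\|_{\mC^\beta}\le\gamma$ and apply Theorem~\ref{thm: ERM ub} using $\tilde v$ in the oracle decomposition (the proof only needs $\|\tilde v\|\le 1$, not $L^2$-optimality) together with the tuning choice $\rho=\gamma$. Since $\gamma^2\gamma^{-4\beta/(2\beta+d)}=\gamma^{2d/(2\beta+d)}$, the two nonparametric contributions collapse, giving
\[
    A=\sup_{f^*\in\mF^\beta(f_\rep,\gamma)}\E[\mathcal R_{\P^\t}(\hat g)-\mathcal R_{\P^\t}(f^*)]=\tilde O\qty(\gamma^{2d/(2\beta+d)}\,n^{-2\beta/(2\beta+d)}+\tfrac{p}{n}).
\]
An elementary case split then yields $\gamma^{2d/(2\beta+d)}n^{-2\beta/(2\beta+d)}\le\max\{\gamma^2,n^{-1}\}$: when $\gamma\ge n^{-1/2}$ the factor $(\gamma^2 n)^{-2\beta/(2\beta+d)}\le 1$ bounds it by $\gamma^2$, and otherwise $\gamma^{2d/(2\beta+d)}\le n^{-d/(2\beta+d)}$ bounds it by $n^{-1}$.

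\emph{Lower bounds for the baselines.} Because $\hat g_\scvar$ and $\hat w_\ftvar$ are themselves estimators, each worst-case risk dominates the minimax risk over any sub-family of $\mF^\beta(f_\rep,\gamma)$. Restricting to $\gamma\,\mC^\beta_\u$ and scaling the classical H\"older minimax lower bound by the radius $\gamma$ gives $A_\scvar\gtrsim\gamma^{2d/(2\beta+d)}n^{-2\beta/(2\beta+d)}$, while restricting to $\{v^\top f_\rep:\|v\|\le1\}$ gives $A_\scvar\gtrsim p/n$ provided $\E[f_\rep f_\rep^\top]$ is nondegenerate (otherwise $p$ is replaced by its rank). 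Hence $A_\scvar$ dominates both terms of the \refine bound and $A=\tilde O(A_\scvar)$. For the probe, the same parametric restriction gives $A_\ftvar\gtrsim p/n$, and taking $f^*=\gamma\phi$ with $\phi\in\mC^\beta_\u$, $\|\phi\|_{L^2}\gtrsim 1$, and $\phi$ $L^2$-orthogonal to $\operatorname{span}(f_\rep)$ forces a squared bias $\gamma^2\|\phi\|_{L^2}^2\gtrsim\gamma^2$; thus $A_\ftvar\gtrsim\max\{\gamma^2,p/n\}$, which together with $\gamma^{2d/(2\beta+d)}n^{-2\beta/(2\beta+d)}\le\max\{\gamma^2,n^{-1}\}$ gives $A=\tilde O(A_\ftvar)$. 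The two bounds combine into the claim.

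\emph{Main obstacle.} The hard part is the lower-bound direction, precisely because the guarantee compares \refine to the \emph{minimum} of the baselines: I must certify how much room each baseline genuinely has rather than merely upper-bound it. This needs (i) the radius-$\gamma$ scaling of the H\"older minimax lower bound, (ii) the existence of a H\"older-unit function carrying constant-order $L^2$ mass orthogonal to the $p$-dimensional feature span, which supplies the $\gamma^2$ bias floor for the probe, and (iii) a nondegeneracy condition on $\E[f_\rep f_\rep^\top]$ to secure the $p/n$ estimation floor for both baselines. By contrast, the \refine upper bound and the exponent bookkeeping $\gamma^{2d/(2\beta+d)}n^{-2\beta/(2\beta+d)}\le\max\{\gamma^2,n^{-1}\}$ are routine once Theorem~\ref{thm: ERM ub} is in hand.
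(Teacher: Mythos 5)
Your proposal is correct, and it follows the same skeleton as the paper's own proof: (i) the \refine upper bound is obtained by rerunning Theorem~\ref{thm: ERM ub} with the H\"older-minimizing probe vector $\tilde v$ in place of the $L^2$-optimal $v^*$ and tuning $\rho=\gamma$ (the paper does exactly this, phrased as ``by a similar argument,'' arriving at the same bound $\gamma^{2d/(2\beta+d)}n^{-2\beta/(2\beta+d)}\log n + p\log n/n$); (ii) the scratch baseline is lower bounded by the minimax risk over $\gamma\mC^\beta_\u\subset\mF^\beta(f_\rep,\gamma)$; (iii) the probe baseline is lower bounded by a bias term plus a $p/n$ estimation term via the normal equations. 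Where you genuinely go beyond the paper is the combination step, which the paper compresses into the single sentence ``combining \dots concludes the proof.'' Two of your ingredients do not appear there at all: the parametric $p/n$ (or $\mathrm{rank}/n$) floor for $\hat g_\scvar$ obtained by restricting to $\{v^\top f_\rep:\|v\|\le 1\}\subset\mF^\beta(f_\rep,\gamma)$, and the quantitative $\gamma^2$ bias floor for $\hat w_\ftvar$ from a H\"older-unit function orthogonal to $\mathrm{span}(f_\rep)$; the paper instead leaves the probe bias as the unevaluated quantity $\sup_{f^*}\inf_v\E[(v^\top f_\rep(X_1)-f^*(X_1))^2]$. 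These additions matter: at $\gamma=0$ (a case the corollary explicitly covers) the paper's only scratch lower bound $\gamma^{2d/(2\beta+d)}n^{-2\beta/(2\beta+d)}$ degenerates to zero and cannot dominate \refine's $p\log n/n$ term, so the scratch side of the minimum is not actually closed by the paper's displayed bounds, whereas your parametric floor, together with the bookkeeping inequality $\gamma^{2d/(2\beta+d)}n^{-2\beta/(2\beta+d)}\le\max\{\gamma^2,n^{-1}\}$, closes the argument uniformly over $\gamma\in[0,1)$. The price is the nondegeneracy (or rank) condition on $\E[f_\rep(X_1)f_\rep(X_1)^\top]$ that you flag, but the paper implicitly relies on the same condition when asserting the probe's estimation term is $\Theta(p/n)$, so this costs nothing in substance. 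Two small items to record if you write this up: the orthogonal H\"older function should be constructed explicitly (e.g., $p+1$ disjoint bumps, one linear combination annihilating the $p$ feature directions, with $L^2$ mass a constant depending on $(p,d,\beta)$ --- legitimate since $p$ is fixed), and the degenerate corner $f_\rep\equiv 0$, $\gamma=0$ collapses the class to $\{0\}$ and is excluded by any version of the argument.
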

Specifically, when $\gamma = 0$, i.e., when $f^*$ lies exactly in the linear span of $f_\rep$, the excess risk of \refine is, up to logarithmic factors, no worse than that of the linear probe on $f_\rep$. When $\gamma > 0$, \refine attains the standard nonparametric rate for estimating $\beta$-Hölder functions; this rate improves upon that of the linear probe on $f_\rep$, which suffers from bias due to representational misalignment.
Hence \refine provably avoids negative transfer for this class of target functions.

We also provide the asymptotic no-negative-transfer guarantee for a \textit{fixed} $f^*$ under \textit{any} mild model capacity 
in Appendix~\ref{sec: asymptotic no negative transfer guarantee}.


\section{Numerical Experiments}
\label{sec:exp}

\subsection{Experiment setup}

We demonstrate that \refine consistently mitigates negative transfer through extensive numerical experiments across image, text, and tabular modalities, using benchmark datasets including CIFAR-10, CIFAR-100 \cite{krizhevsky09}, STL \citep{coates15}, Clipart, Sketch \citep{peng19}, USPS, MNIST, Books, Kitchen, DVD, and Electronics \citep{blitzer07}. We evaluate performance using classification accuracy, area under ROC (AUC), F1 score, and minimum class accuracy.  

We also compare \refine with a number of alternative solutions. In particular, NoTrans serves as a no-transfer baseline, reusing pretrained features without any adaptation. LinearProbe \cite{kumar22} trains only a linear classifier on top of frozen features, offering a lightweight baseline. Adapter \cite{houlsby19} inserts a small trainable module into pretrained models, enabling efficient adaptation with limited parameters. Distillation \cite{hinton15} transfers knowledge from a frozen teacher to a student model through a combination of hard labels and soft predictions. LoRA \cite{hu21} applies low-rank adaptations to weight matrices, achieving parameter-efficient fine-tuning. DANN-Gate \cite{wang19} combines adversarial training with gating to encourage domain-invariant representations.

We consider a variety of experimental settings. In~\Cref{sec:exp-single-dist}, we evaluate \refine on datasets exhibiting natural distribution shift. In~\Cref{sec:exp-single-stress}, we construct challenging scenarios to stress-test robustness under controlled perturbations. In~\Cref{sec:exp-spatial}, we examine an adapt-time multimodality extension setting based on spatial transcriptomics, where a new modality becomes available only after pretraining. Finally, in the Appendix, we include additional studies on source-free multi-source transfer (\Cref{sec:exp-multi}) and tabular benchmark evaluations (\Cref{sec:app-tabular}).

In our implementations, we train all models using stochastic gradient descent with a learning rate  $0.01$ and momentum $0.9$, with pretraining for $60$ epochs and fine-tuning for $30$ epochs. We consider both CNNs and transformer architectures for the pretrained model $f_\rep$ and the encoders $h$. We also carry out an ablation study in~\Cref{sec:app-ablation} regarding the complexity of the encoder $h$, showing that \refine remains effective across different choices of the model parameters for $h$. 

We provide more details about the experiment setup and implementations in~\Cref{sec:app-set-detail}.

\subsection{Single-source transfer with natural distribution shift}
\label{sec:exp-single-dist}

In the first experiment setting, we evaluate \refine on datasets that exhibit natural distribution shift. To provide a comprehensive assessment, we consider transfer tasks spanning both image and language, thereby covering cross-domain as well as cross-modality adaptation. For image, we include CIFAR-10, CIFAR-100, and STL-10, which offer complementary object recognition tasks with varying class granularity and image resolution. We further incorporate artistic domains, specifically, Clipart and Sketch, to capture substantial stylistic diversity, along with digit recognition benchmarks, USPS and MNIST, which provide structured and well-curated handwritten digits. For text, we adopt the datasets, Books, DVD, Electronics, and Kitchen, which span heterogeneous product categories and exhibit rich linguistic variations. We process the image datasets using convolutional neural networks (CNNs), and process the text datasets using transformers. This design allows us to assess transfer across distribution and domain shifts, and also under cross-modality and cross-model settings. Collectively, these datasets constitute a broad and rigorous benchmark for evaluating transfer learning methods. 

We use the notation $A \rightarrow B$ to denote transfer learning from source domain $A$ to target domain $B$. Our evaluation covers diverse scenarios. Specifically, CIFAR100$\rightarrow$10 and CIFAR10$\rightarrow$100 test transfers across datasets with overlapping but non-identical class spaces and label granularity; CIFAR10$\rightarrow$STL reflects natural distribution shift due to resolution and dataset construction; Clipart$\rightarrow$Sketch represents cross-style adaptation between artistic domains; USPS$\rightarrow$MNIST examines digit recognition under handwriting and design difference; and Books$\rightarrow$Kitchen and DVD$\rightarrow$Electronics capture cross-topic sentiment transfer, where vocabulary and linguistic style vary considerably. We exclude knowledge distillation \citep{hinton15} in this comparison, as it requires identical class spaces across source and target, which do not apply here.

\Cref{tab:cross_trans} reports the results. \refine consistently achieves competitive or superior performance compared to alternative methods across all scenarios. On transfers with large label-space difference, including CIFAR100$\rightarrow$10 and CIFAR10$\rightarrow$100, \refine improves accuracy by over $10-15\%$ relative to Adapter, LoRA, and DANN-Gate, substantially narrowing the gap to the no-transfer baseline while remaining robust to negative transfer. On transfers under natural resolution or stylistic shift, including CIFAR10$\rightarrow$STL, Clipart$\rightarrow$Sketch, \refine achieves $3-4\%$ accuracy gains over the strongest alternative, along with consistent improvements in AUC and F1. On transfers with digit benchmarks, including USPS$\rightarrow$MNIST, it yields $5-10\%$ accuracy gains, and much higher minimum class accuracy, indicating stronger preservation of performance on underrepresented classes. On transfers with cross-topics, including Books$\rightarrow$Kitchen, DVD$\rightarrow$Electronics, \refine delivers $2-4\%$ improvements across all metrics. Overall, \refine not only avoids the severe degradation observed in other methods, but also provides reliable accuracy lifts of $5-15\%$ across image and text domains under diverse settings of distribution shifts.

\begin{table}[t!]
\centering
\resizebox{\textwidth}{!}{%
\begin{tabular}{llcccc}
\toprule
Dataset & Method & Accuracy & AUC & F1 & Min CAcc \\
\midrule
\multirow{6}{*}{CIFAR100$\rightarrow$10} 
 & NoTrans    & $\mathbf{56.5820 \pm 0.3659}$ & $\mathbf{0.9005 \pm 0.0012}$ & $\mathbf{0.5634 \pm 0.0046}$ & $\mathbf{37.2000 \pm 3.4117}$ \\
 & LinearProb & $38.9260 \pm 0.5463$ & $0.8284 \pm 0.0017$ & $0.3815 \pm 0.0051$ & $16.9400 \pm 3.7441$ \\
 & Adapter    & $38.2320 \pm 0.3111$ & $0.8247 \pm 0.0016$ & $0.3754 \pm 0.0071$ & $16.4600 \pm 5.4544$ \\
 & LoRA       & $43.1360 \pm 0.3239$ & $0.8603 \pm 0.0003$ & $0.4237 \pm 0.0046$ & $20.1400 \pm 4.1020$ \\
 & DANN-Gate  & $43.2220 \pm 0.1295$ & $0.8605 \pm 0.0005$ & $0.4214 \pm 0.0040$ & $17.4800 \pm 4.7755$ \\
 & \refine     & $54.4000 \pm 0.3336$ & $0.8942 \pm 0.0026$ & $0.5406 \pm 0.0051$ & $33.6200 \pm 2.8273$ \\
\midrule
\multirow{6}{*}{CIFAR10$\rightarrow$100} 
 & NoTrans    & $18.3200 \pm 0.5254$ & $0.8140 \pm 0.0050$ & $0.1774 \pm 0.0052$ & $1.0000 \pm 0.8944$ \\
 & LinearProbe & $7.0140 \pm 0.3347$  & $0.7489 \pm 0.0011$ & $0.0496 \pm 0.0034$ & $0.0000 \pm 0.0000$ \\
 & Adapter    & $6.5640 \pm 0.2875$  & $0.7499 \pm 0.0008$ & $0.0459 \pm 0.0026$ & $0.0000 \pm 0.0000$ \\
 & LoRA       & $6.8240 \pm 0.1037$  & $0.7558 \pm 0.0010$ & $0.0463 \pm 0.0015$ & $0.0000 \pm 0.0000$ \\
 & DANN-Gate  & $5.1980 \pm 0.3924$  & $0.7341 \pm 0.0055$ & $0.0285 \pm 0.0033$ & $0.0000 \pm 0.0000$ \\
 & \refine     & $\mathbf{18.5880 \pm 0.5494}$ & $\mathbf{0.8276 \pm 0.0053}$ & $\mathbf{0.1787 \pm 0.0057}$ & $\mathbf{1.4000 \pm 0.8000}$ \\
\midrule
\multirow{6}{*}{CIFAR10$\rightarrow$STL} 
 & NoTrans    & $48.6925 \pm 0.6338$ & $0.8683 \pm 0.0032$ & $0.4831 \pm 0.0089$ & $\mathbf{26.8000 \pm 4.9006}$ \\
 & LinearProbe & $50.2725 \pm 0.3016$ & $0.8795 \pm 0.0015$ & $0.4955 \pm 0.0067$ & $18.9250 \pm 6.1546$ \\
 & Adapter    & $49.2900 \pm 0.7344$ & $0.8773 \pm 0.0008$ & $0.4865 \pm 0.0096$ & $15.6750 \pm 6.6340$ \\
 & LoRA       & $50.7550 \pm 0.3793$ & $0.8813 \pm 0.0016$ & $0.4930 \pm 0.0040$ & $5.6750 \pm 2.6933$ \\
 & DANN-Gate  & $47.7050 \pm 0.6586$ & $0.8659 \pm 0.0013$ & $0.4712 \pm 0.0104$ & $13.9250 \pm 5.3424$ \\
 & \refine     & $\mathbf{53.4175 \pm 0.3628}$ & $\mathbf{0.8944 \pm 0.0013}$ & $\mathbf{0.5301 \pm 0.0053}$ & $25.9750 \pm 3.5693$ \\
\midrule
\multirow{6}{*}{Clipart$\rightarrow$Sketch} 
 & NoTrans    & $18.8804 \pm 1.3709$ & $0.7170 \pm 0.0117$ & $0.1828 \pm 0.0119$ & $0.0000 \pm 0.0000$ \\
 & LinearProbe & $18.3430 \pm 0.8649$ & $0.7290 \pm 0.0065$ & $0.1727 \pm 0.0087$ & $0.0000 \pm 0.0000$ \\
 & Adapter    & $18.2356 \pm 0.5807$ & $\mathbf{0.7369 \pm 0.0059}$ & $0.1549 \pm 0.0040$ & $0.0000 \pm 0.0000$ \\
 & LoRA       & $16.9010 \pm 0.6906$ & $0.6937 \pm 0.0043$ & $0.1671 \pm 0.0069$ & $0.0000 \pm 0.0000$ \\
 & DANN-Gate  & $16.5786 \pm 0.4868$ & $0.6942 \pm 0.0021$ & $0.1544 \pm 0.0048$ & $0.0000 \pm 0.0000$ \\
 & \refine     & $\mathbf{20.3403 \pm 0.4768}$ & $0.7338 \pm 0.0043$ & $\mathbf{0.1968 \pm 0.0059}$ & $\mathbf{0.5263 \pm 1.0526}$ \\
\midrule
\multirow{6}{*}{USPS$\rightarrow$MNIST} 
 & NoTrans    & $62.0740 \pm 8.7771$ & $0.9566 \pm 0.0073$ & $0.5967 \pm 0.0969$ & $9.2863 \pm 12.1512$ \\
 & LinearProbe & $66.9960 \pm 1.0095$ & $0.9469 \pm 0.0050$ & $0.6563 \pm 0.0086$ & $9.1576 \pm 3.5478$ \\
 & Adapter    & $61.8660 \pm 3.0334$ & $0.9375 \pm 0.0085$ & $0.5952 \pm 0.0441$ & $8.8750 \pm 7.2427$ \\
 & LoRA       & $64.8240 \pm 0.8520$ & $0.9333 \pm 0.0045$ & $0.6435 \pm 0.0135$ & $29.3265 \pm 13.5652$ \\
 & DANN-Gate  & $52.2080 \pm 3.6669$ & $0.9012 \pm 0.0185$ & $0.4853 \pm 0.0482$ & $0.0198 \pm 0.0396$ \\
 & \refine     & $\mathbf{70.0460 \pm 2.1721}$ & $\mathbf{0.9582 \pm 0.0053}$ & $\mathbf{0.6954 \pm 0.0194}$ & $\mathbf{31.6157 \pm 14.5527}$ \\
\midrule
\multirow{6}{*}{Books$\rightarrow$Kitchen} 
 & NoTrans    & $71.6600 \pm 1.3632$ & $0.7848 \pm 0.0155$ & $0.7161 \pm 0.0137$ & $\mathbf{68.6000 \pm 2.9719}$ \\
 & LinearProbe & $66.7400 \pm 3.1455$ & $0.7568 \pm 0.0278$ & $0.6571 \pm 0.0401$ & $51.5600 \pm 9.7336$ \\
 & Adapter    & $71.3400 \pm 0.1356$ & $0.7839 \pm 0.0008$ & $0.7111 \pm 0.0015$ & $62.8800 \pm 2.9027$ \\
 & LoRA       & $66.9600 \pm 0.2154$ & $0.7279 \pm 0.0018$ & $0.6695 \pm 0.0022$ & $65.6400 \pm 0.4079$ \\
 & DANN-Gate  & $66.6000 \pm 0.0894$ & $0.7330 \pm 0.0006$ & $0.6659 \pm 0.0009$ & $64.6800 \pm 0.6997$ \\
 & \refine     & $\mathbf{72.7200 \pm 1.6522}$ & $\mathbf{0.8147 \pm 0.0133}$ & $\mathbf{0.7248 \pm 0.0189}$ & $65.5200 \pm 6.4778$ \\
\midrule
\multirow{6}{*}{DVD$\rightarrow$Electronics} 
 & NoTrans    & $68.5200 \pm 2.8979$ & $0.7585 \pm 0.0304$ & $0.6806 \pm 0.0338$ & $59.8000 \pm 9.8298$ \\
 & LinearProbe & $66.0600 \pm 0.5122$ & $0.7266 \pm 0.0017$ & $0.6580 \pm 0.0072$ & $58.3600 \pm 4.5579$ \\
 & Adapter    & $65.8600 \pm 0.3200$ & $0.7206 \pm 0.0008$ & $0.6577 \pm 0.0037$ & $61.4400 \pm 2.5935$ \\
 & LoRA       & $66.5600 \pm 0.3555$ & $0.7170 \pm 0.0013$ & $0.6656 \pm 0.0036$ & $\mathbf{65.4000 \pm 0.4899}$ \\
 & DANN-Gate  & $66.9000 \pm 0.1897$ & $0.7196 \pm 0.0013$ & $0.6686 \pm 0.0019$ & $63.5600 \pm 0.2653$ \\
 & \refine     & $\mathbf{70.3400 \pm 0.9972}$ & $\mathbf{0.7886 \pm 0.0115}$ & $\mathbf{0.6995 \pm 0.0122}$ & $61.7200 \pm 7.5181$ \\
\bottomrule
\end{tabular}%
}
\caption{Single-source transfer learning with natural distribution shift.}
\label{tab:cross_trans}
\end{table}

\subsection{Single-source transfer under label noise, semantic perturbation, and class imbalance}
\label{sec:exp-single-stress}

In the second experiment setting, we deliberately construct challenging scenarios to stress-test various transfer learning methods. Using CIFAR-10 with CNNs, we introduce four types of challenges in the pretraining data while keeping the target domain fixed: (i) heavy label noise with 40\% random label flips, (ii) extreme label noise with 80\% flips, (iii) semantic perturbation created by paired-class flipping combined with additive image noise, and (iv) class imbalance induced by resampling to a long-tailed distribution. In addition, we repeat the experiments on CIFAR-100 and also evaluate both CIFAR-10 and CIFAR-100 with transformer-based models. We report the corresponding results in~\Cref{sec:app-cifar-challenge}. 

\Cref{tab:cifar10_scen} summarizes the results. \refine consistently mitigates severe degradation and outperforms competing methods across all stress-test scenarios. In the moderate noise setting with 40\% label flips, it achieves the best overall balance, improving accuracy and F1 by about 1\% over Adapter and LoRA, while maintaining competitive minimum class accuracy. In the more extreme noise setting with 80\% flips, most baselines collapse, with LinearProbe, Adapter, and DANN-Gate drop below 25\% accuracy, whereas \refine remains close to the no-transfer baseline, improving accuracy by nearly 35\% over the strongest adaptive alternative. In the semantic confusion setting, with paired-class flips plus image noise, \refine gains 1-2\% in accuracy and F1 over NoTrans, while all other adaptive baselines perform worse, highlighting the robustness of \refine to perturbed label semantics. In the class imbalance setting, it surpasses LinearProbe, Adapter, and LoRA by 3-5\% in accuracy and F1, achieving the strongest overall results aside from a slightly lower minimum class accuracy than Distillation. Overall, \refine avoids the catastrophic failures common to existing transfer strategies under noise, semantic perturbation, and class imbalance, while consistently delivering performance gains across all stress-test conditions.

\begin{table}[t!]
\centering
\resizebox{\textwidth}{!}{%
\begin{tabular}{llcccccc}
\toprule
Dataset & Setting & Method & Acc & AUC & F1 & MinCAcc \\
\midrule
\multirow{35}{*}{CIFAR-10}
& \multirow{7}{*}{40\% flips} 
    & NoTrans   & $56.05 \pm 0.64$ & $0.9037 \pm 0.0028$ & $0.5580 \pm 0.0080$ & $32.40 \pm 5.84$ \\
&   & LinearProbe& $65.54 \pm 0.06$ & $0.9378 \pm 0.0003$ & $0.6561 \pm 0.0008$ & $42.82 \pm 1.45$ \\
&   & Adapter   & $65.78 \pm 0.19$ & $0.9376 \pm 0.0007$ & $0.6581 \pm 0.0024$ & $\mathbf{45.20 \pm 2.29}$ \\
&   & Distill   & $57.01 \pm 0.58$ & $0.9115 \pm 0.0016$ & $0.5674 \pm 0.0032$ & $34.84 \pm 4.53$ \\
&   & LoRA      & $65.47 \pm 0.12$ & $0.9374 \pm 0.0004$ & $0.6545 \pm 0.0018$ & $42.38 \pm 0.89$ \\
&   & DANN-Gate & $65.40 \pm 0.15$ & $0.9353 \pm 0.0006$ & $0.6539 \pm 0.0016$ & $43.40 \pm 2.22$ \\
&   & \refine   & $\mathbf{66.23 \pm 0.32}$ & $\mathbf{0.9388 \pm 0.0006}$ & $\mathbf{0.6625 \pm 0.0036}$ & $43.94 \pm 3.78$ \\
\cmidrule(lr){2-7}
& \multirow{7}{*}{80\% flips} 
    & NoTrans   & $56.57 \pm 0.64$ & $0.9057 \pm 0.0033$ & $0.5622 \pm 0.0055$ & $33.60 \pm 3.04$ \\
&   & LinearProbe& $19.46 \pm 0.75$ & $0.6895 \pm 0.0011$ & $0.1177 \pm 0.0108$ & $0.00 \pm 0.00$ \\
&   & Adapter   & $18.49 \pm 0.46$ & $0.6906 \pm 0.0006$ & $0.1219 \pm 0.0156$ & $0.00 \pm 0.00$ \\
&   & Distill   & $53.51 \pm 0.79$ & $0.8982 \pm 0.0021$ & $0.5269 \pm 0.0091$ & $26.80 \pm 2.49$ \\
&   & LoRA      & $22.92 \pm 1.73$ & $0.7202 \pm 0.0079$ & $0.1911 \pm 0.0308$ & $0.76 \pm 1.52$ \\
&   & DANN-Gate & $20.83 \pm 1.32$ & $0.7097 \pm 0.0084$ & $0.1341 \pm 0.0253$ & $0.00 \pm 0.00$ \\
&   & \refine   & $\mathbf{56.58 \pm 0.33}$ & $\mathbf{0.9067 \pm 0.0019}$ & $\mathbf{0.5655 \pm 0.0041}$ & $\mathbf{36.90 \pm 2.94}$ \\
\cmidrule(lr){2-7}
& \multirow{7}{*}{Schematic confusion} 
    & NoTrans   & $56.53 \pm 0.77$ & $0.9006 \pm 0.0021$ & $0.5639 \pm 0.0056$ & $35.76 \pm 2.75$ \\
&   & LinearProbe& $48.54 \pm 0.42$ & $0.8987 \pm 0.0008$ & $0.4757 \pm 0.0046$ & $18.44 \pm 7.89$ \\
&   & Adapter   & $47.17 \pm 0.82$ & $0.8998 \pm 0.0006$ & $0.4479 \pm 0.0148$ & $7.42 \pm 6.47$ \\
&   & Distill   & $57.80 \pm 0.44$ & $\mathbf{0.9068 \pm 0.0009}$ & $0.5772 \pm 0.0037$ & $35.92 \pm 3.00$ \\
&   & LoRA      & $49.96 \pm 0.26$ & $0.9039 \pm 0.0005$ & $0.4864 \pm 0.0116$ & $16.34 \pm 9.91$ \\
&   & DANN-Gate & $49.04 \pm 0.33$ & $0.9028 \pm 0.0006$ & $0.4719 \pm 0.0059$ & $11.40 \pm 1.53$ \\
&   & \refine   & $\mathbf{58.65 \pm 0.47}$ & $0.9034 \pm 0.0011$ & $\mathbf{0.5861 \pm 0.0048}$ & $\mathbf{38.40 \pm 3.10}$ \\
\cmidrule(lr){2-7}
& \multirow{7}{*}{Class imbalance} 
    & NoTrans   & $56.44 \pm 0.48$ & $0.9055 \pm 0.0019$ & $0.5599 \pm 0.0051$ & $32.80 \pm 4.54$ \\
&   & LinearProbe& $53.15 \pm 1.04$ & $0.8883 \pm 0.0145$ & $0.5238 \pm 0.0215$ & $28.36 \pm 14.04$ \\
&   & Adapter   & $51.64 \pm 0.99$ & $0.8960 \pm 0.0022$ & $0.5130 \pm 0.0150$ & $19.52 \pm 8.32$ \\
&   & Distill   & $54.89 \pm 0.49$ & $0.9063 \pm 0.0013$ & $0.5492 \pm 0.0065$ & $\mathbf{41.96 \pm 3.43}$ \\
&   & LoRA      & $53.21 \pm 0.19$ & $0.8975 \pm 0.0005$ & $0.5338 \pm 0.0022$ & $33.76 \pm 5.38$ \\
&   & DANN-Gate & $53.05 \pm 0.28$ & $0.8964 \pm 0.0009$ & $0.5281 \pm 0.0055$ & $32.62 \pm 3.60$ \\
&   & \refine   & $\mathbf{56.54 \pm 0.73}$ & $\mathbf{0.9103 \pm 0.0012}$ & $\mathbf{0.5619 \pm 0.0103}$ & $31.58 \pm 10.31$ \\
\bottomrule
\end{tabular}}
\caption{Single-source transfer learning with label noise, semantic perturbation, and class imbalance for CIFAR-10 using CNNs.}
\label{tab:cifar10_scen}
\end{table}

We also briefly remark that, an important advantage of \refine is that its complexity can be flexibly tuned through the choice of the encoder $h$. Such a design keeps it comparable in parameter efficiency to methods such as Adapter and Distillation. For instance, in this setting, for \refine, the number of trainable parameters is $4.88\%$ of the total number of parameters in the frozen source model, for Adapter, it is $5.46\%$, and for Distillation, $4.68\%$. Thus \refine achieves a comparable parameter efficiency, but clearly outperforms in mitigating negative transfer. The ablation study in~\Cref{sec:app-ablation} further shows that the performance of \refine remains stable across different parameter choices of $h$, indicating that the overall parameter complexity has relatively little impact. By contrast, increasing Adapter’s complexity fails to resolve negative transfer, suggesting that its limitation stems from design rather than capacity.


\subsection{Adapt-time multi-modality extension: Spatial-omics example}
\label{sec:exp-spatial}

Spatial transcriptomics provides a clean setting to study a phenomenon that is becoming increasingly common in biological data analysis: a pretrained model is strong, but a crucial modality appears only at adaptation time. In the SpatialGlue human lymph node dataset~\cite{Long24}, cells come with both transcriptomes and spatial coordinates, and the major anatomical domains cortex, medulla cords, follicles, capsule, pericapsular adipose tissue, and others form well structured spatial patterns. Because expert annotation of these domains is costly, only a small subset of cells typically receives labels. Importantly, scGPT~\cite{cui24}, like most foundation models for single cell data, is pretrained purely on dissociated RNA and therefore never observes spatial information. This naturally creates what we refer to as an adapt-time multimodality extension problem, where the model must incorporate a modality that was entirely absent during pretraining. Conventional fine-tuning or PEFT cannot reliably supply information the pretrained model has never learned.

The empirical results reflect this challenge. As shown in Figure~\ref{fig:metrics_row}, both LinearProbe and Adapter exhibit negative transfer when applied directly to scGPT representations. With 1000 labeled cells, their F1 scores remain near 0.24 to 0.29, far below a simple GNN trained from scratch, denoted as NoTrans, which already reaches about 0.47 by leveraging spatial structure directly. Even with more labeled data, their AUC and F1 improvements stagnate. In contrast, \refine adds a lightweight residual spatial encoder that complements the frozen scGPT features without modifying the backbone. This allows the model to integrate the missing spatial modality at adaptation time. The gains are substantial: at 1000 labels, \refine raises F1 to roughly 0.52, and surpasses 0.70 by 3000 labels, with consistently stronger AUC across all training sizes.

Figure~\ref{fig:spatial_pred_1000} illustrates the qualitative impact of this adapt-time modality gap. LinearProbe and Adapter compress the cortical region, miss several follicular and trabeculae regions. \refine reconstructs these domains much more faithfully, recovering cortical extent, follicle structure, and peripheral regions that the other approaches systematically miss. These results suggest a broader message. When a modality is entirely missing during pretraining, fine-tuning alone is insufficient, but a residual mechanism that injects the missing information at adaptation time can bridge the gap effectively and without imposing additional engineering burdens on practitioners.

\begin{figure}[t!]
\centering
\setlength{\tabcolsep}{0pt}
\renewcommand{\arraystretch}{0.75}

\begin{tabular}{ccc}
    \includegraphics[width=0.32\textwidth]{./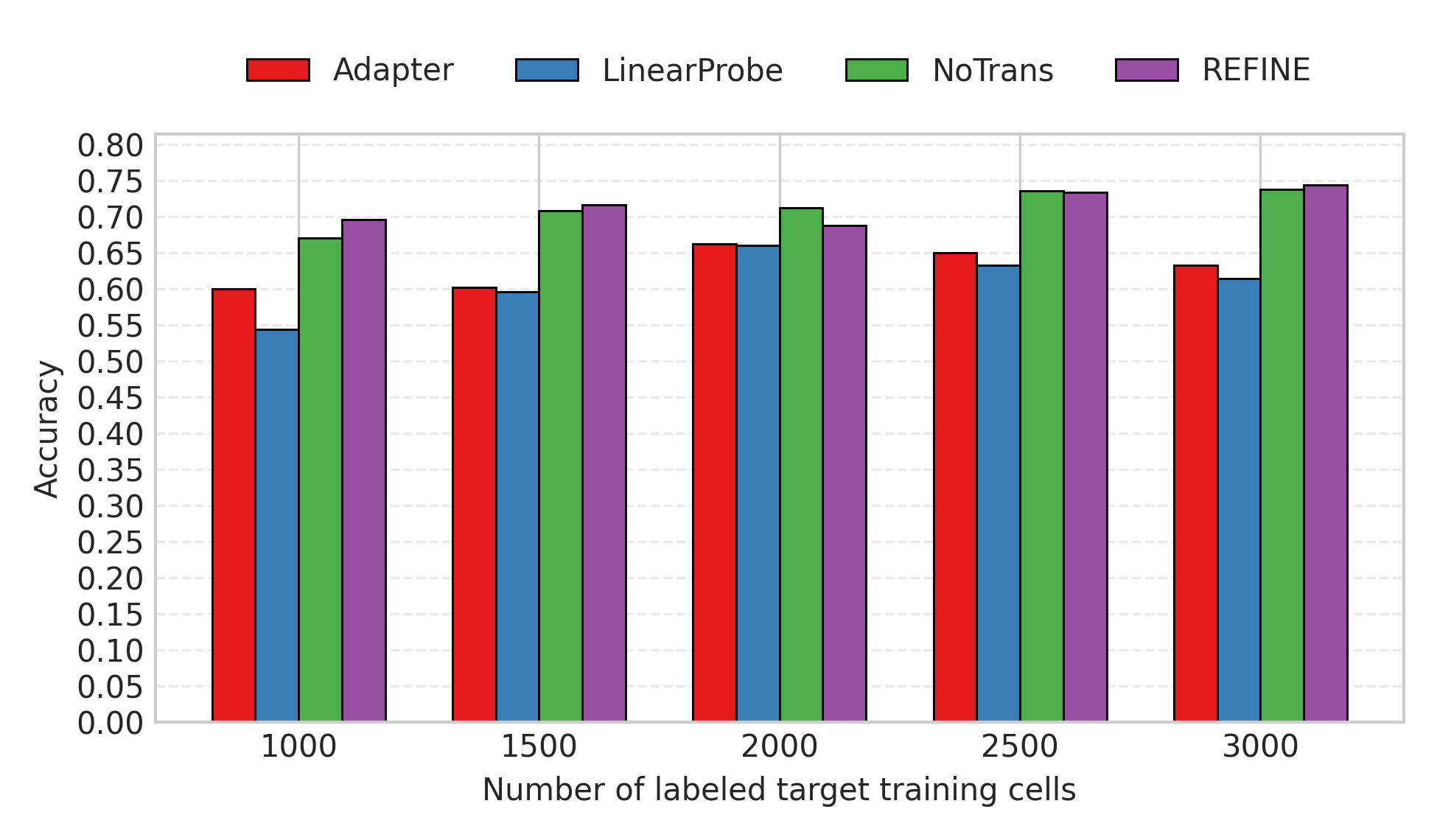} &
    \includegraphics[width=0.32\textwidth]{./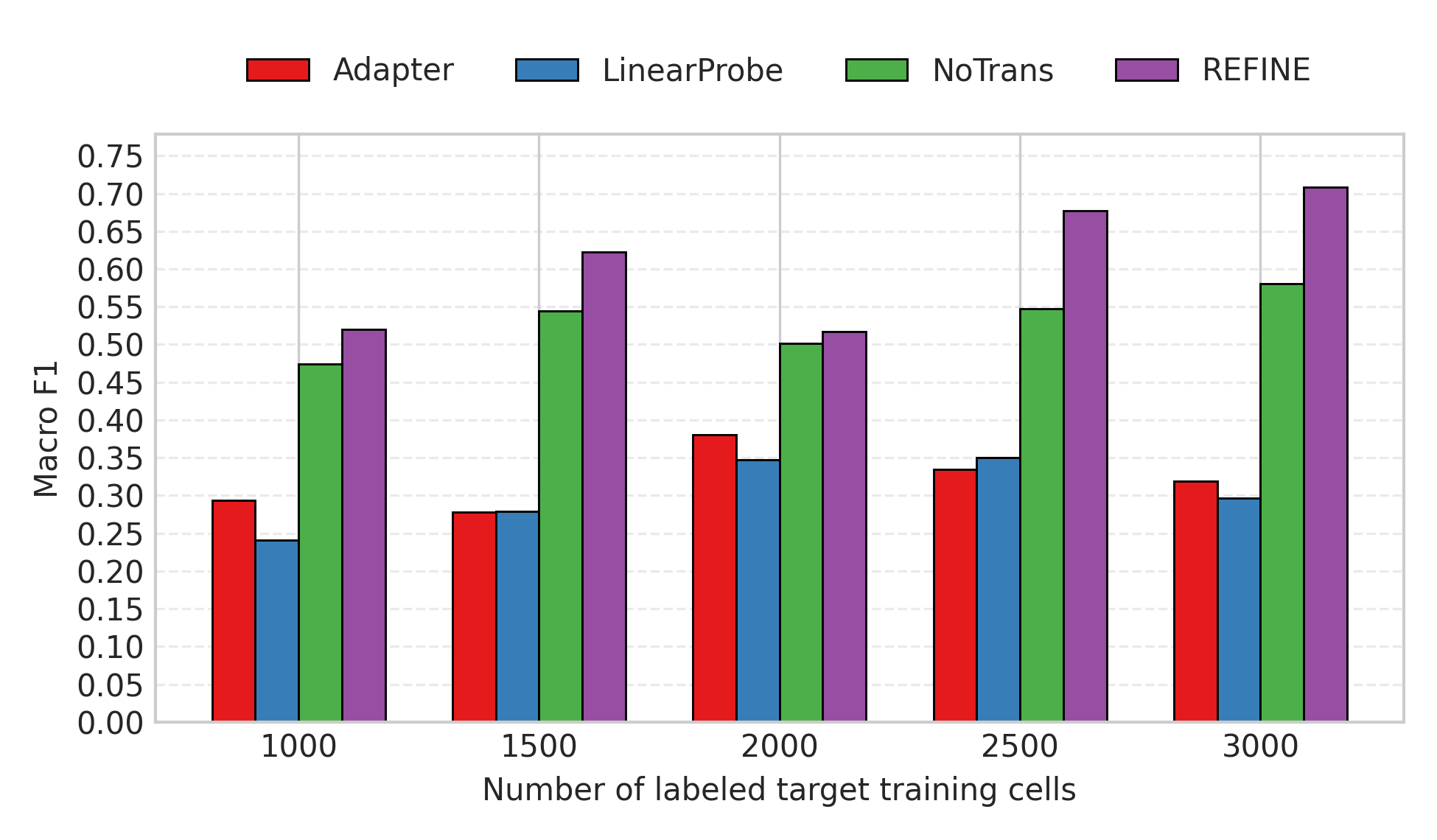} &
    \includegraphics[width=0.32\textwidth]{./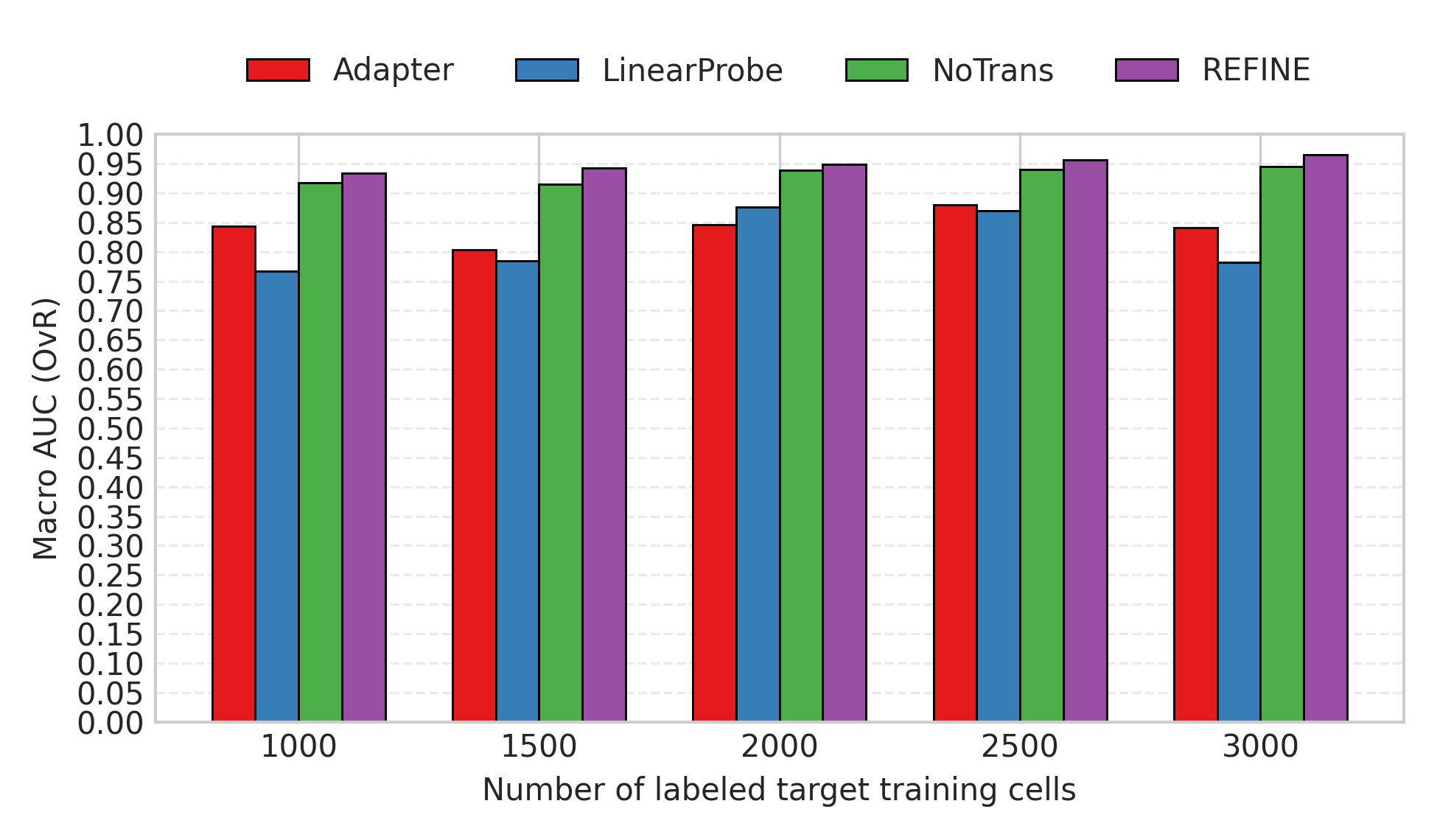} \\
    \footnotesize ACC & \footnotesize F1 & \footnotesize AUC \\
\end{tabular}

\caption{Metric comparison across labeled target sizes for Adapter, LinearProbe, NoTrans, and REFINE.}
\label{fig:metrics_row}
\end{figure}



\section*{Ethics Statement}
This research does not involve human subjects, personally identifiable information, or sensitive data. The datasets used are publicly available and widely used in the community. We are not aware of direct applications of our method that raise ethical concerns. Nevertheless, as with any machine learning system, there is a potential risk of misuse if deployed in contexts where fairness or bias are critical. We encourage future work to examine these dimensions before deployment in such settings.

\section*{Reproducibility Statement}
We have made efforts to ensure the reproducibility of our results. Detailed descriptions of datasets, preprocessing steps, and hyperparameters, optimizers are provided in~\Cref{sec:exp} and~\Cref{sec:app-set-detail}. All proofs for theoretical claims are provided in~\Cref{sec: theory} and~\Cref{sec: theory ap}. An anonymized version of our source code is included in the supplementary materials and will be released publicly upon acceptance.

\section*{Acknowledgment} The research of LZ was partially supported by NSF CAREER DMS-2340241 and Renaissance Philanthropy "AI for Math" Fund. The research of LL was partially supported by NIH grants UG3NS140730 and R01AG080043.


\bibliography{iclr2026_conference}
\bibliographystyle{iclr2026_conference}

\newpage

\appendix

\section*{Appendices}

In the appendices, we provide additional technical and empirical details. \Cref{sec: theory ap} provides the proof of the main theorem, supported by auxiliary lemmas in~\Cref{app:aux-lemmas}. \Cref{app:more-exp} expands the empirical evaluations, including additional results on more  benchmark data, tabular data, and an ablation study. \Cref{sec:app-set-detail} documents the experiment setup and implementation details for reproducibility. Together, they offer a complete account of the theory, validation, and practical details underlying our work.

\renewcommand{\thefigure}{S\arabic{figure}}
\renewcommand{\thetable}{S\arabic{table}}
\renewcommand{\theequation}{S.\arabic{equation}}
\setcounter{figure}{0}
\setcounter{table}{0}
\setcounter{equation}{0}


\section{Proofs of Main Results}\label{sec: theory ap}

In this section, we prove the main results in Section~\ref{sec: theory}.

\paragraph{Additional Notation.}

Let $\|\cdot\|_{L_q}$ denote the $L_q$ norm under the probability measure $\P^\t_X$ for any $q \in [1, \infty]$, where $\P^\t_X$ is the distribution of $X_i$ in the training data.
For $a, b \in \R$, we define $a \wedge b = \min\{a, b\}$ and $a \vee b = \max\{a, b\}$.

In addition, we would like to recall $\mathcal{R}_{\P^\t}(g)=\E_{(X,Y)\sim\P^\t}[(g(X)-Y)^2]$. As a result, 
\begin{align}    
    \mathcal{R}_{\P^\t}(g)-\mathcal{R}_{\P^\t}(f^*) &= \E_{(X,Y)\sim\P^\t}[(g(X)-f^*(X)-\epsilon)^2]-\sigma^2\nonumber\\
    &= \E_{(X,Y)\sim\P^\t}[(g(X)-f^*(X))^2]\asymp \|g-f^*\|_{L_2}^2,\label{eq: excess risk}
\end{align}
where the last equivalence $\asymp$ is due to the fact that we assume $X$ has positive continuous density on $[0,1]^d$ bounded by an absolute value. As $[0,1]^d$ is a compact space and the density function  of $X$ is continuous, this implies that the density function is both upper and lower bounded by absolute constants. 

\subsection{Generalization Error Upper Bound}\label{sec: theory main thm}

We now prove the main theorem on the prediction risk of \refine. The results of the two corollaries can be obtained straightforwardly, and we thus omit their proofs. 

\begin{proof}[Proof of Theorem~\ref{thm: ERM ub}.]
    Recall that $v^*$ is defined as the optimal linear probe of $f_\rep$, i.e., 
    $$
    v^* = \argmin_{v \in \R^p} \E[\{f^*(X) - v^\top f_\rep(X)\}^2].
    $$
    We begin by observing that the difficulty of the estimation problem is governed by the residual $r^* := f^* - v^{* \top} f_\rep$, since $f_\rep$ is assumed to be known, and $v^{* \top} f_\rep$ can be seen as a linear function of the known quantity. By appropriately choosing the parameters $W$, $L$, and $B$, we control the complexity of the neural network, and the bias of estimating $r^*$.

    Specifically, choose 
    \begin{align}
        L = (2 + \ceil{\log_2 \beta}) \qty(11 + \frac{\beta}{d}),\ \ W = c_1' \epsilon^{-d/\beta},\ \ B = (\rho^* \vee 1) \epsilon^{-c_2'},\label{eq: final choice}
    \end{align}
    where $c_1', c_2' > 0$ are constants appearing in Lemma~\ref{lem: approximation error}.
    Define $\rho^* := \|r^*\|_{\mC^\beta}$. 
    Set
    \begin{align}
        \epsilon := n^{-\beta/(2\beta+d)} \rho^{-2\beta/(2\beta+d)} \wedge 1,\label{eq: choice of epsilon}
    \end{align}
    where $\rho > 0$ is some tuning parameter. The choices in (\ref{eq: parameter choice}) are realized by taking $\epsilon=n^{-\beta/(2\beta+d)}\rho^{-2\beta/(2\beta+d)}$ and setting $c_1:=(2+\lceil\log_2\beta\rceil)(11+\beta/d)$, $c_2:=c'_1$, $c_3:=c'_2$.

    Note that $\sup_{g \in \mG_{d,p}(W,L,B; f_\rep)} \|g\|_{L_\infty} \leq 2$.
    From Lemma~\ref{lem: bias variance tradeoff} with the choice $\delta \gets 1/n$, we have
    \begin{align*}
        \E[\|\hat g - f^*\|_{L_2}^2] \lesssim \qty(\inf_{g \in \mG} \|g - f^*\|_{L_2}^2 + \frac{\log \mN(1/n, \mG_{d,p}(W,L,B; f_\rep), \|\cdot\|_{L_\infty})}{n} + \frac{1}{n}).
    \end{align*}
    Next we compute the first term and the second term separately.
    \paragraph{Part 1: Bounding the first term.}
    
    Suppose for now that $\rho^* > 0$.
    Rescale the residual by noting that $(1/\rho^*) r^* \in \mathcal{C}^\beta_\u$. Then, by Lemma~\ref{lem: approximation error}, there exists a neural network $r_\NN \in \mH_d(W,L,B)$ such that
    \begin{align}
        \|r_\NN - (1/\rho^*) r^*\|_{L_2} \lesssim \epsilon.\label{eq: r NN approx}
    \end{align}
    This inequality provides the approximation error of the ReLU network class. To translate this result to the bias term $\|g - f^*\|_{L_2}^2$, we proceed as follows.
    Write 
   $$
        r_\NN = r_{\NN, L} \circ r_{\NN, L-1} \circ \dots \circ r_{\NN, 1}(x),
   $$
    where $r_{\NN, \ell}(x) = \sigma(A_\ell x + b_\ell)$ for $\ell \in [L-1]$ and $r_{\NN,L}(x) = A_L x + b_L$.
    Define $r_{\NN,L}'(x) = (\rho^* A_L) x + (\rho^* b_L)$ to approximate $\rho^* r_\NN$.
    Then, it follows that the function 
   $$
        g^\circ(x) := 1 \wedge ((-1) \vee r_{\NN,L}' \circ r_{\NN,L-1} \circ \dots \circ r_{\NN,1}(x)) + v^{* \top} f_\rep(x)
   $$ 
    belongs to $\mG_{d,p}(W, L, B; f_\rep)$ since $\|v^*\| \leq 1$.
    Moreover, we can write $g^\circ$ as
    \begin{align*}
        g^\circ(x) = 1 \wedge ((-1) \vee \rho^* r_\NN(x)) + v^{* \top} f_\rep(x).
    \end{align*}
    Using (\ref{eq: r NN approx}), we have
    \begin{align*}
        \E[\{g^\circ(X_1) - f^*(X_1)\}^2]^{1/2} &= \|1 \wedge ((-1) \vee \rho^* r_\NN) + v^{* \top} f_\rep - f^*\|_{L_2}\\
        &= \rho^* \norm{\frac{1}{\rho^*} \wedge \qty(-\frac{1}{\rho^*} \vee r_\NN) - \frac{1}{\rho^*} r^*}_{L_2}\\
        &\leq \rho^* \norm{r_\NN - \frac{1}{\rho^*} r^*}_{L_2}\\
        &\lesssim \rho^* \epsilon,
    \end{align*}
    where we used the fact that $\|r^*/\rho^*\|_{L_\infty} \leq \|r^*/\rho^*\|_{\mC^\beta} \leq 1/\rho^*$.
    Thus, 
    \begin{align}
        \inf_{g \in \mG_{d,p}(W, L, B; f_\rep)} \E[\|g - f^*\|_{L_2}^2] &\leq \E[\|g^\circ - f^*\|_{L_2}^2] \lesssim \rho^{* 2} \epsilon^2.\label{eq: f NN approx}
    \end{align}

    If instead $\rho^* = 0$, then we can simply choose a ReLU network in $\mH_d(W,L,B)$ with all weights and biases set to zero. By taking $g^\circ = 0 + v^{* \top} f_\rep$, the bound in \eqref{eq: f NN approx} trivially holds.

    \paragraph{Part 2: Bounding the second term.}
    The covering number bound from Lemma~\ref{lem: covering number mN} with the choice of $W,L,B$ in (\ref{eq: final choice}), we have
    \begin{align*}
        \frac{\log \mN(1/n, \mG_{d,p}(W,L,B; f_\rep), \|\cdot\|_{L_\infty})}{n} &\leq \frac{C'}{n} (\epsilon^{-d/\beta} + p) \log\qty(\frac{n}{\epsilon}),
    \end{align*}
    where $C'$ is a constant depending on $d$ and $\beta$.

    \paragraph{Part 3: Balancing terms.}
    Finally, we combine the results from part 1 and part 2. Recalling the choice of $\epsilon$ in (\ref{eq: choice of epsilon}), we consider two cases depending on the value of $\rho$.
    
    When $1/\sqrt{n} \leq \rho$, we have $\epsilon = (n \rho^2)^{-\beta/(2\beta+d)}$. In this case, the bound becomes
    \begin{align}
        \E[\|\hat g - f^*\|_{L_2}^2] &\leq \rho^{* 2} \rho^{-4\beta/(2\beta+d)} n^{-2\beta/(2\beta+d)} + C' \qty( \rho^{2d/(2\beta+d)} n^{-2\beta/(2\beta+d)} + \frac{p}{n} ) \log n\nonumber\\
        &\leq (C' + 1) \qty( (\rho^{* 2} \rho^{-4\beta/(2\beta+d)} + \rho^{2d/(2\beta+d)} \log n) n^{-2\beta/(2\beta+d)} + \frac{p \log n}{n} ).\label{eq: risk case 1}
    \end{align}
    When $\rho \leq 1/\sqrt{n}$ (so that $\epsilon = 1$), the bound becomes
    \begin{align}
        \E[\|\hat g - f^*\|_{L_2}^2] &\leq \rho^{* 2} + C' \frac{p \log n}{n} \leq (C'+1) \qty( \rho^{* 2} \rho^{-4\beta/(2\beta+d)} n^{-2\beta/(2\beta+d)} + \frac{p \log n}{n} ).\label{eq: risk case 2}
    \end{align}
Combining the bounds in (\ref{eq: risk case 1}) and (\ref{eq: risk case 2}) with (\ref{eq: excess risk}), we obtain the desired result.

This completes the proof of Theorem~\ref{thm: ERM ub}.
\end{proof}

\subsection{Worst Case No-negative-transfer Guarantee}\label{sec: theory worst case no negative transfer guarantee}

\begin{proof}[Proof of Corollary~\ref{cor: no negative transfer in rate}.]
    By a similar argument as in the proof of Theorem~\ref{thm: ERM ub}, given any $f_\rep$ satisfying $v^\top f_\rep \in \mC^\beta_\u$ for all unit vector $v \in \mathbb{S}^{p-1}$, the prediction risk can be upper bounded as
    \begin{align}
        \sup_{f^* \in \mF^\beta(f_\rep, \gamma)} \E[\mathcal{R}_{\P^\t}(\hat g) - \mathcal{R}_{\P^\t}(f^*)] \lesssim \gamma^{\frac{2d}{2\beta+d}} n^{-\frac{2\beta}{2\beta+d}} \log n + \frac{p \log n}{n},\label{eq: rate-refine}
    \end{align}
    \if0
    Suppose for now that $\gamma > 0$. Then, there always exists $f^* \in \mF^\beta(f_\rep,\gamma)$ such that $\inf_{v: \|v\| \leq 1} \E[\|v^\top f_\rep(X_1) - f^*(X_1)\|^2] > 0$. By the equivalence in \eqref{eq: excess risk}, we have
    \begin{align*}
        \E[\mathcal{R}_{\P^\t}(\hat w_\ftvar^\top f_\rep) - \mathcal{R}_{\P^\t}(f^*)] &\geq \inf_{v: \|v\| \leq 1} \E[\|v^\top f_\rep(X_1) - f^*(X_1)\|^2] =: c,
    \end{align*}
    where $c>0$ is independent of $n$. Thus, we obtain
    \begin{align}
        \inf_{\hat w_\ftvar: \|\hat w_\ftvar\| \leq 1} \sup_{f^* \in \mF^\beta(f_\rep, \gamma)} \E[\mathcal{R}_{\P^\t}(\hat w_\ftvar^\top f_\rep) - \mathcal{R}_{\P^\t}(f^*)] &\geq c.\label{eq: linear probe lb}
    \end{align}
    \fi

    For the linear adapter, for any estimator $\hat w$, the prediction risk with respect to $f^*$ satisfies
    \begin{align*}
        \E\bigl[(\hat w^\top X_1 - f^*(X_1))^2\bigr]
        &= \E\bigl[(w^{* \top} X_1 - f^*(X_1))^2\bigr]
        + \E\bigl[(\hat w - w^*)^\top \E[X_1X_1^\top] (\hat w - w^*)\bigr],
    \end{align*}
    where $w^* = \argmin_{w \in \R^p} \E[(w^\top X - f^*(X))^2]$.
    This follows by the normal equation
    $$
        \E\bigl[X(f^*(X) - w^{*\top} X)\bigr] = 0,
    $$
    which imply that the cross term vanishes. Under the model in \eqref{eq: model}, the estimation term is of order $\Theta(p / n)$. Hence we have
    \begin{align}
        \sup_{f^* \in \mF^\beta(f_\rep, \gamma)} \E[\mathcal{R}_{\P^\t}(\hat w_\ftvar^\top f_\rep) - \mathcal{R}_{\P^\t}(f^*)] &\gtrsim \sup_{f^* \in \mF^\beta(f_\rep, \gamma)} \inf_{v \in \R^p} \E[\|v^\top f_\rep(X_1) - f^*(X_1)\|^2] + \frac{p}{n}.\label{eq: linear adapter lb}
    \end{align}

    For the training from scratch, note that $\gamma \mC^\beta_\u \subset \mF^\beta(f_\rep, \gamma)$. By Theorem~3.2 in \citet{gyorfi2002distribution}, we have
    \begin{align}
        \sup_{f^* \in \mF^\beta(f_\rep, \gamma)} \E[\mathcal{R}_{\P^\t}(\hat g_\scvar) - \mathcal{R}_{\P^\t}(f^*)] &\geq \inf_{\check g} \sup_{r^* \in \gamma \mC^\beta_\u} \E[(\check g(X_1) - r^*(X_1))^2] \gtrsim \gamma^{2d/(2\beta+d)} n^{-2\beta/(2\beta+d)}.\label{eq: scratch lb}
    \end{align}
    Therefore, combining \eqref{eq: linear adapter lb} and \eqref{eq: scratch lb} with \eqref{eq: rate-refine} concludes the proof.
\end{proof}

\subsection{Asymptotic No-negative-transfer Guarantee}\label{sec: asymptotic no negative transfer guarantee}

Here we compare the performance of \refine with two natural baselines: training from scratch with comparable model capacity, and fitting a linear probe on $f_{\rep}$, without any parameter tuning.
\begin{prop}[Asymptotic No-negative-transfer Guarantee]\label{prop: no negative transfer}
    Assume $v^{* \top} f_\rep - f^* \in \mC^\beta$.
    Suppose that the parameters $(W,L,B)$ satisfy $W \log(nLB^L (W+1)^L) = o(n)$ and that $p \log n = o(n)$ as $n \to \infty$. Consider the model trained from scratch  and the linear probe on $f_\rep$ with comparable capacity:
    \begin{align}
        \hat g_\scvar = \argmin_{g \in \bar\mH_d(W,L,B)} \frac{1}{n}\sum_{i\in[n]} \ell(g(X_i),Y_i),\quad \hat w_\ftvar = \argmin_{w \in \R^p, \|w\| \leq 1} \frac{1}{n}\sum_{i\in[n]} \ell(w^\top f_\rep(X_i), Y_i). \label{eq: baselines}
    \end{align}
    Then,
    \begin{align}
        \E[\mathcal{R}_{\P^\t}(\hat g) - \mathcal{R}_{\P^\t}(f^*)] &\leq (1 + o(1)) \min\{\E[\mathcal{R}_{\P^\t}(\hat g_\scvar) - \mathcal{R}_{\P^\t}(f^*)], \E[\mathcal{R}_{\P^\t}(\hat w_\ftvar^\top f_\rep) - \mathcal{R}_{\P^\t}(f^*)]\} + o(1),\label{eq:oracle-concrete}
    \end{align}
    as $n \to \infty$.
\end{prop}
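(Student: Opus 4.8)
The plan is to reduce the statement to a comparison of \emph{approximation errors}, exploiting that the \refine\ class $\mG_{d,p}(W,L,B;f_\rep)$ simultaneously contains both baseline classes. Using $\mathcal{R}_{\P^\t}(g)-\mathcal{R}_{\P^\t}(f^*)=\|g-f^*\|_{L_2}^2$ from \eqref{eq: excess risk}, introduce the three approximation errors
\[
\mathcal{A}_{\mathcal G}:=\inf_{g\in\mG_{d,p}(W,L,B;f_\rep)}\|g-f^*\|_{L_2}^2,\quad
\mathcal{A}_{\scvar}:=\inf_{g\in\bar\mH_d(W,L,B)}\|g-f^*\|_{L_2}^2,\quad
\mathcal{A}_{\ftvar}:=\inf_{\|v\|\leq1}\|v^\top f_\rep-f^*\|_{L_2}^2.
\]
The proof rests on three facts: (i) $\mathcal{A}_{\mathcal G}\le\min\{\mathcal{A}_{\scvar},\mathcal{A}_{\ftvar}\}$; (ii) each baseline's expected excess risk is lower bounded by its own approximation error; and (iii) \refine\ satisfies a \emph{sharp} oracle inequality whose leading constant on $\mathcal{A}_{\mathcal G}$ is $1+o(1)$.

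For (i), observe that $\bar\mH_d(W,L,B)\subseteq\mG_{d,p}(W,L,B;f_\rep)$ by taking $v=0,\ u=1$, and that $\{x\mapsto v^\top f_\rep(x):\|v\|\le1\}\subseteq\mG_{d,p}(W,L,B;f_\rep)$ by taking $u=0$; taking infima over the larger class gives the claim. For (ii), since $\hat g_\scvar\in\bar\mH_d(W,L,B)$ and $\hat w_\ftvar^\top f_\rep$ lies in the linear-probe class (the constraint $\|w\|\le1$ in \eqref{eq: baselines} matches the \refine\ constraint $\|v\|\le1$), the realized squared error of each estimator dominates the infimum over its own class pointwise in the sample, so after taking expectations $\E[\mathcal{R}_{\P^\t}(\hat g_\scvar)-\mathcal{R}_{\P^\t}(f^*)]\ge\mathcal{A}_{\scvar}$ and $\E[\mathcal{R}_{\P^\t}(\hat w_\ftvar^\top f_\rep)-\mathcal{R}_{\P^\t}(f^*)]\ge\mathcal{A}_{\ftvar}$. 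Combining (i) and (ii) yields $\mathcal{A}_{\mathcal G}\le\min\{\E[\mathcal{R}_{\P^\t}(\hat g_\scvar)-\mathcal{R}_{\P^\t}(f^*)],\E[\mathcal{R}_{\P^\t}(\hat w_\ftvar^\top f_\rep)-\mathcal{R}_{\P^\t}(f^*)]\}$, which is exactly the quantity on the right-hand side of \eqref{eq:oracle-concrete}.

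For (iii), I would invoke a refined version of the bias--variance oracle inequality (Lemma~\ref{lem: bias variance tradeoff}) for bounded least squares, sharpened so that for every $\delta\in(0,1)$,
\[
\E[\|\hat g-f^*\|_{L_2}^2]\le(1+\delta)\,\mathcal{A}_{\mathcal G}+\frac{C_\delta}{n}\bigl(\log\mN(1/n,\mG_{d,p}(W,L,B;f_\rep),\|\cdot\|_{L_\infty})+1\bigr).
\]
By Lemma~\ref{lem: covering number mN} the metric entropy of the \refine\ class splits as $\log\mN\lesssim \log\mN(1/n,\bar\mH_d(W,L,B),\|\cdot\|_{L_\infty})+p\log n\lesssim W\log(nLB^L(W+1)^L)+p\log n$, which is $o(n)$ by hypothesis; hence the remainder is $o(1)$. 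Letting $\delta=\delta_n\downarrow0$ slowly enough that $C_{\delta_n}/n\to0$ converts the leading factor into $1+o(1)$ while keeping the remainder $o(1)$, giving $\E[\mathcal{R}_{\P^\t}(\hat g)-\mathcal{R}_{\P^\t}(f^*)]\le(1+o(1))\mathcal{A}_{\mathcal G}+o(1)$. Chaining this with the bound from the previous paragraph yields \eqref{eq:oracle-concrete}.

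The main obstacle is step (iii): the generic oracle inequality underlying Theorem~\ref{thm: ERM ub} carries an unspecified constant in front of the approximation error, whereas here the leading constant must be driven to $1$. Obtaining this requires a localized empirical-process argument --- e.g., a basic-inequality/Bernstein analysis of the offset process $\{\|g-f^*\|_{L_2}^2 - \tfrac1n\sum_i[(g(X_i)-Y_i)^2-(f^*(X_i)-Y_i)^2]\}$ restricted to a variance ball around the $L_2$ projection of $f^*$ onto the class, using the uniform bound $\|g\|_{L_\infty}\le2$ on $\mG_{d,p}(W,L,B;f_\rep)$ and the sub-Gaussian noise to absorb the cross term. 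The remaining ingredients --- the containment structure and the covering-number control under the capacity assumptions --- are routine.
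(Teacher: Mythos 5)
Your proposal is correct and follows essentially the same route as the paper's proof: the class-containment argument giving $\mathcal{A}_{\mathcal G}\le\min\{\mathcal{A}_{\scvar},\mathcal{A}_{\ftvar}\}\le$ the baselines' expected excess risks, followed by the oracle inequality plus the covering-number bound of Lemma~\ref{lem: covering number mN}, with the tuning parameter sent to zero slowly to get the $(1+o(1))$ leading factor. The ``main obstacle'' you flag does not actually arise: Lemma~\ref{lem: bias variance tradeoff} as stated already carries the tunable leading constant $(1+\kappa)^2$ for all $\kappa\in(0,1]$, so no new localized empirical-process argument is needed --- the paper simply applies that lemma and chooses $\kappa=\sqrt{R}\wedge 1$, where $R=o(1)$ is the entropy remainder.
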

Proposition~\ref{prop: no negative transfer} shows that, provided the model capacity increases slowly enough with $n$ so that the estimation error vanishes, the excess risk of \refine{} is bounded, up to multiplicative and an additive $o(1)$ term, by the smaller of the two alternatives: training a comparable model from scratch or fitting a linear probe on $f_{\rep}$. In particular, \refine{} is asymptotically no worse than either baseline for \textit{any} moderate choice of $(W,L,B)$.
We also note that the choice of $(W, L, B)$ in the following theorem satisfies the conditions of Proposition~\ref{prop: no negative transfer}.

We now prove the proposition that \refine avoids negative transfer asymptotically, which provides a result when either $\mathcal{R}_{\P^\t}(\hat w_\ftvar^\top f_\rep)$ or $\mathcal{R}_{\P^\t}(\hat g_\scvar)$ is bounded from below as $n\to\infty$.
\begin{proof}[Proof of Proposition~\ref{prop: no negative transfer}.]
    Consider the hypothesis classes for training from scratch and linear probing:
    $$
        \mG_\scvar(W,L,B) := \{x \mapsto h(x) : h \in \bar\mH_d(W,L,B)\},\qquad \mG_\ftvar := \{x \mapsto v^\top f_\rep(x) : \|v\| \leq 1\}.
    $$
    Let $\hat g_\scvar$ and $\hat g_\ftvar$ be the empirical risk minimizers over $\mG_\scvar(W,L,B)$ and $\mG_\ftvar$, respectively.
    To ease notation, we write $\mG = \mG_{d,p}(W,L,B; f_\rep)$.
    By construction, we have $\mG_\ftvar \cup \mG_\scvar \subset\mG$. Hence
    \begin{equation}
        \inf_{g\in\mG}\mathcal{R}_{\P^\t}(g) \leq \min\{\inf_{g\in\mG_\scvar}\mathcal{R}_{\P^\t}(g),\ \inf_{g\in\mG_\ftvar}\mathcal{R}_{\P^\t}(g)\} \leq \min\{\E[\mathcal{R}_{\P^\t}(\hat g_\scvar)],\ \E[\mathcal{R}_{\P^\t}(\hat g_\ftvar)]\}.\label{eq: inf-G-bound}
    \end{equation}

    By assumption, $\|f^*\|_{L_\infty} \leq 1$ and $\sup_{g\in\mG}\|g\|_{L_\infty}\leq 2$. Lemma~\ref{lem: bias variance tradeoff} with the choice $\delta \gets 1/n$ and $F \gets 4$ gives
    \begin{align}
        \E\bigl[\mathcal{R}_{\P^\t}(\hat g) - \mathcal{R}_{\P^\t}(f^*)\bigr] \leq (1 + \kappa)^2 (\inf_{g\in\mG} \mathcal{R}_{\P^\t}(g) - \mathcal{R}_{\P^\t}(f^*)) + 
        C_1 \Bigl( \frac{\log \mN(1/n,\mG,\|\cdot\|_{L_\infty})}{n \kappa} + \frac{1}{n} \Bigr),\label{eq: oracle-general}
    \end{align}
    for some universal constant $C_1>0$, where we used $\mathcal{R}_{\P^\t}(g) - \mathcal{R}_{\P^\t}(f^*) = \E[(g(X_1) - f^*(X_1))^2]$.

    Lemma~\ref{lem: covering number mN} with the choice $\delta \gets 1/n$ gives
    \begin{align}
        \log \mN(1/n,\mG,\|\cdot\|_{L_\infty}) \leq C_2\Bigl\{ W\log\Bigl(n LB^L(W+1)^L\Bigr) + p\log n \Bigr\},\label{eq: mN-bound}
    \end{align}
    for some universal constant $C_2>0$. Combining \eqref{eq: inf-G-bound} and \eqref{eq: mN-bound} into the right hand side of \eqref{eq: oracle-general} yields
    \begin{align*}
        \E\bigl[\mathcal{R}_{\P^\t}(\hat g) - \mathcal{R}_{\P^\t}(f^*)\bigr] &\leq (1 + \kappa)^2 \min\{\E[\mathcal{R}_{\P^\t}(\hat g_\scvar)] - \mathcal{R}_{\P^\t}(f^*),\ \E[\mathcal{R}_{\P^\t}(\hat g_\ftvar)] - \mathcal{R}_{\P^\t}(f^*)\}\\
        &\quad+ C \qty{ \frac{1}{\kappa} \qty(\frac{W\log(n LB^L(W+1)^L)}{n} + \frac{p\log n}{n}) + \frac{1}{n} },
    \end{align*}
    where $C>0$ is some universal constant. Since $W \log(nLB^L (W+1)^L) = o(n)$ and that $p \log n = o(n)$ as $n \to \infty$, we have
    \begin{align*}
        \E\bigl[\mathcal{R}_{\P^\t}(\hat g) - \mathcal{R}_{\P^\t}(f^*)\bigr] &\leq (1 + \kappa)^2 \min\{\E[\mathcal{R}_{\P^\t}(\hat g_\scvar)] - \mathcal{R}_{\P^\t}(f^*),\ \E[\mathcal{R}_{\P^\t}(\hat g_\ftvar)] - \mathcal{R}_{\P^\t}(f^*)\} + \frac{R}{\kappa},
    \end{align*}
    where $R = o(1)$ as $n \to \infty$. Since $\kappa \in (0, 1]$ is arbitrary, we choose $\kappa = \sqrt{R} \wedge 1 (= o(1))$ to conclude the proof.
\end{proof}

\section{Auxiliary Lemmas}
\label{app:aux-lemmas}

In this section, we provide some auxiliary lemmas. 

The next lemma is about the entropy bound for $\mH_d(W,L,B)$.
\begin{lem}[Lemma 21 from \citet{nakada2020adaptive}]\label{lem: covering number}
Fix any $W$, $L$, and $B > 0$. Then, we have the covering number bound
\begin{align*}
\log\mN(\epsilon, \mH_d(W,L,B), \|\cdot\|_{L_\infty}) \leq W \log\qty( \frac{2LB^L(W+1)^L}{\epsilon} ).
\end{align*}
\end{lem}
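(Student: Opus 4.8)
The plan is to reduce the $L_\infty$ covering number of $\mH_d(W,L,B)$ to an $\ell_\infty$ covering number of its (compact) parameter space, by showing that the map from parameters to functions is Lipschitz. Every $h\in\mH_d(W,L,B)$ is specified by a parameter vector $\theta=(A_1,b_1,\dots,A_{L'},b_{L'})$ lying in a box $[-B,B]^{N}$, where $N$ is the number of weights and biases. If I can show that two parameter vectors $\theta,\theta'$ with $\max_\ell(\|A_\ell-A_\ell'\|_\infty\vee\|b_\ell-b_\ell'\|_\infty)\le\delta$ produce functions with $\|h_\theta-h_{\theta'}\|_{L_\infty}\le\Lambda\,\delta$ for an explicit constant $\Lambda$, then any $\ell_\infty$ $\delta$-net of the box (with $\delta=\epsilon/\Lambda$) induces an $\epsilon$-net of the function class. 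Since $[-B,B]$ admits a $\delta$-net of size $\lceil B/\delta\rceil$, the box has an $\ell_\infty$ $\delta$-net of size $(B/\delta)^{N}$, giving $\log\mN(\epsilon,\mH_d(W,L,B),\|\cdot\|_{L_\infty})\le N\log(B\Lambda/\epsilon)$. Matching this to the claimed bound then reduces to computing $\Lambda$ and reading off the leading coefficient $N$ (the number of parameters, which is the quantity written as $W$ in the cited statement).

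The heart of the argument is the Lipschitz-in-parameters estimate, obtained through two coupled inductions over layers, run on a common input $x\in[0,1]^d$. First I bound the activations: with $\|x^{(0)}\|_\infty\le1$, the recursion $x^{(\ell)}=\sigma(A_\ell x^{(\ell-1)}+b_\ell)$, the $1$-Lipschitzness of $\sigma$, and the row-sum bound $\|A_\ell u\|_\infty\le WB\|u\|_\infty$ yield $\|x^{(\ell)}\|_\infty\le WB\|x^{(\ell-1)}\|_\infty+B$, hence $\|x^{(\ell)}\|_\infty\lesssim B^{\ell}(W+1)^{\ell}$. Second, I propagate the discrepancy $\Delta_\ell:=\|x^{(\ell)}-x'^{(\ell)}\|_\infty$ between the two networks. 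Using $1$-Lipschitzness of $\sigma$ and a triangle-inequality split,
\begin{align*}
\Delta_\ell &\le \|A_\ell(x^{(\ell-1)}-x'^{(\ell-1)})\|_\infty + \|(A_\ell-A_\ell')x'^{(\ell-1)}\|_\infty + \|b_\ell-b_\ell'\|_\infty \\
&\le WB\,\Delta_{\ell-1} + \delta\,(W\|x'^{(\ell-1)}\|_\infty + 1),
\end{align*}
and the (ReLU-free) final layer obeys the same bound. Unrolling from $\Delta_0=0$ gives $\Delta_{L'}\lesssim\delta\sum_\ell (WB)^{L'-\ell}\bigl(WB^{\ell-1}(W+1)^{\ell-1}+1\bigr)\lesssim\delta\,L\,B^{L-1}(W+1)^{L}$; taking the supremum over $x$ yields $\Lambda\lesssim L\,B^{L-1}(W+1)^{L}$.

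Combining the two pieces, I set $\delta=\epsilon/\Lambda$ and obtain $\log\mN\le N\log(B\Lambda/\epsilon)\lesssim N\log\bigl(LB^{L}(W+1)^{L}/\epsilon\bigr)$, which is exactly the stated form once constants are absorbed into the factor $2$ and the slightly inflated exponent $(W+1)^{L}$. The main obstacle is precisely the Lipschitz step: one must track how a perturbation introduced at layer $\ell$ is amplified by the remaining $L'-\ell$ affine–ReLU maps while simultaneously being multiplied by the exponentially growing activation magnitudes $\|x'^{(\ell-1)}\|_\infty$, and then verify that the resulting sum telescopes into the clean product $B^{L}(W+1)^{L}$ rather than a larger power of $B$ or $W$. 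Finally, passing to the clipped class $\bar\mH_d(W,L,B)$ is immediate, since truncation to $[-1,1]$ is $1$-Lipschitz and only contracts $L_\infty$ distances, so the same cover works verbatim.
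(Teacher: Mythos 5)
Your overall strategy---discretize the parameter box $[-B,B]^N$ and transfer the net to function space via a Lipschitz-in-parameters estimate---is the standard route, and it is essentially how the cited source proves this bound; note that the paper itself contains no proof of Lemma~\ref{lem: covering number}, which is imported by citation from \citet{nakada2020adaptive}. Your two inductions are also essentially sound: the activation bound, the discrepancy recursion, and the telescoping to $\Lambda \lesssim L B^{L-1}(W+1)^{L}$ are correct, modulo the fact that your bounds $\|x^{(\ell)}\|_\infty \lesssim B^\ell (W+1)^\ell$ and $B^{L'-1}\le B^{L-1}$ implicitly require $B\ge 1$ (the lemma allows any $B>0$, so you should replace $B$ by $\max\{1,B\}$ or flag the assumption), and the clipping remark at the end is fine.

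The genuine gap is your final step, where you assert that your leading coefficient $N$---the total number of weights and biases---``is the quantity written as $W$ in the cited statement.'' In this paper's notation $W$ is the \emph{width}: $\mH_d(W,L,B)$ consists of networks with layer dimensions $d_\ell \le W$ and fully dense weight matrices, so $N=\sum_{\ell} d_\ell (d_{\ell-1}+1)$ can be of order $LW^2$ as soon as there are two hidden layers ($L\ge 3$). Hence what your argument actually proves is $\log \mN(\epsilon,\mH_d(W,L,B),\|\cdot\|_{L_\infty}) \lesssim L W (W+1) \log\bigl(LB^L(W+1)^L/\epsilon\bigr)$, which is quadratically weaker in $W$ than the stated inequality, and nothing in your argument closes that gap: a dense width-$W$, depth-$\ge 3$ class has on the order of $W^2$ free parameters, so no parameter-covering argument can yield a coefficient linear in the width. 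The discrepancy originates in the restatement of the lemma: in \citet{nakada2020adaptive} the leading factor is the number of (nonzero) network parameters---a sparsity bound---not the layer width, and that parameter-linear form is also what the downstream argument here actually uses (in Theorem~\ref{thm: ERM ub}, the entropy term must scale linearly in $W\asymp \epsilon^{-d/\beta}$ for the rate $n^{-2\beta/(2\beta+d)}$ to come out). To turn your write-up into a correct proof of the intended statement, either reinterpret $W$ as a bound on the number of nonzero parameters and cover only the $\le W$ active coordinates (the choice of support adds $\log\binom{N}{W}=O(W\log(LW))$, which is absorbed into the logarithm), or state your conclusion with $N$ in place of $W$; as written, the identification $N=W$ is false under the paper's definitions and the claimed bound does not follow.
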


The next lemma is modified from \citet{petersen2018optimal}, adapted to consider $L_2$ approximation error with respect to the probability measure $\P^\t_X$ over the domain $[0, 1]^d$, rather than the original $L_2$ error with a uniform measure on $[-1/2, 1/2]^d$.

\begin{lem}[Modification of Theorem 3.1 from \citet{petersen2018optimal}]\label{lem: approximation error}
Fix $d \in \N_+$ and $\beta > 0$. Suppose that $\P^\t_X$ has a density bounded by $O(1)$. Then, there exist constants $c_1', c_2' > 0$, depending on $d$ and $\beta$, such that for any $\epsilon \in (0, 1/2)$, if one chooses $W$, $L$, and $B$ satisfying
\begin{align*}
L \leq (2 + \ceil{\log_2 \beta}) \qty(11 + \frac{\beta}{d}),\ \ W \leq c_1' \epsilon^{-d/\beta},\ \ B \leq \epsilon^{-c_2'},
\end{align*}
then
\begin{align*}
\sup_{f^\# \in \mC^\beta_\u} \inf_{f_\NN \in \mH_d(W,L,B)} \|f_\NN - f^\#\|_{L_2} \lesssim \epsilon.
\end{align*}
\end{lem}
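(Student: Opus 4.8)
The plan is to \emph{inherit} the network construction of \citet{petersen2018optimal} and then correct for the only two differences between their setting and ours: their domain is $[-1/2,1/2]^d$ while $\mC^\beta_\u$ lives on $[0,1]^d$, and the error in the lemma is measured in $\|\cdot\|_{L_2}$ under $\P^\t_X$ rather than under the uniform (Lebesgue) measure. Since the architecture budget $L \leq (2+\ceil{\log_2\beta})(11+\beta/d)$, $W \leq c_1'\epsilon^{-d/\beta}$, $B \leq \epsilon^{-c_2'}$ is exactly the fixed-depth, growing-width regime produced by their theorem, the constants $c_1',c_2'$ are read off directly from their statement and no new approximation-theoretic construction is required; the remaining work is purely a change of domain and a change of measure.

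First I would reduce the domain. Let $\lambda$ denote the uniform probability (Lebesgue) measure on the relevant cube, let $T(x)=x-\tfrac12\mathbf{1}$ map $[0,1]^d$ onto $[-1/2,1/2]^d$, and for a fixed $f^\# \in \mC^\beta_\u$ set $g:=f^\#\circ T^{-1}$. Translation preserves every partial derivative and every Hölder seminorm, so $g\in\mC^\beta_\u$ on $[-1/2,1/2]^d$, and \citet{petersen2018optimal} supplies $g_{\NN}\in\mH_d(W,L,B)$ meeting the stated depth, width and weight bounds with $\|g_{\NN}-g\|_{L_2(\lambda)}\lesssim\epsilon$. Composing, $f_{\NN}:=g_{\NN}\circ T$ approximates $f^\#$ on $[0,1]^d$. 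The affine shift $T$ is folded into the first layer, replacing the first bias $b_1$ by $b_1-\tfrac12 A_1\mathbf{1}$, whose entries are bounded by $(1+d/2)B$ in magnitude. Since $d$ is a fixed constant and $\epsilon\in(0,1/2)$, we have $(1+d/2)\epsilon^{-c_2'}\leq \epsilon^{-c_2''}$ for a slightly larger constant, so renaming $c_2'\gets c_2''$ keeps $f_{\NN}\in\mH_d(W,L,B)$ with the depth and width untouched.

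Second I would transfer the error from $\lambda$ to $\P^\t_X$. Writing $p_X$ for the density of $\P^\t_X$ and using the hypothesis $p_X\leq M$ with $M=O(1)$, together with the fact that $T$ is measure-preserving,
\begin{align*}
\|f_{\NN}-f^\#\|_{L_2}^2 = \int_{[0,1]^d}\bigl(f_{\NN}(x)-f^\#(x)\bigr)^2 p_X(x)\,dx \leq M\,\|f_{\NN}-f^\#\|_{L_2(\lambda)}^2 = M\,\|g_{\NN}-g\|_{L_2(\lambda)}^2 \lesssim M\epsilon^2 .
\end{align*}
Hence $\|f_{\NN}-f^\#\|_{L_2}\lesssim\sqrt{M}\,\epsilon\lesssim\epsilon$. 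Taking the infimum over $\mH_d(W,L,B)$ and then the supremum over $f^\#\in\mC^\beta_\u$ delivers the claim, since the construction above is uniform in $f^\#$.

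The proof is essentially bookkeeping layered on top of \citet{petersen2018optimal}, so there is no substantial analytic obstacle; the two points needing care are verifying that folding the translation into the first layer keeps the weights within the $B$-budget (which holds precisely because $d$ is fixed) and confirming that the density bound $M=O(1)$ is the right hypothesis for the measure change. A pitfall to avoid is imagining that a \emph{lower} bound on $p_X$ is needed: it is not, because we only pass from the $\P^\t_X$-error up to the $\lambda$-error, a direction controlled solely by the upper bound $p_X\leq M$.
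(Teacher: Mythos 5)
Your proposal is correct and matches what the paper intends: the paper states this lemma without proof, describing it only as Theorem~3.1 of \citet{petersen2018optimal} ``adapted to consider $L_2$ approximation error with respect to the probability measure $\P^\t_X$ over the domain $[0,1]^d$, rather than the original $L_2$ error with a uniform measure on $[-1/2,1/2]^d$,'' and your two steps --- absorbing the translation $x \mapsto x - \tfrac12\mathbf{1}$ into the first-layer biases (with the harmless enlargement of $c_2'$ to cover the factor $1+d/2$, valid since $\epsilon < 1/2$ and $d$ is fixed) and then passing from the Lebesgue error to the $\P^\t_X$ error via the density upper bound --- are precisely the bookkeeping that this adaptation requires. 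Your closing observation is also correct and consistent with the lemma's hypothesis: only an upper bound on the density is needed for this direction of the measure comparison, whereas the lower bound is used elsewhere in the paper (in equation~(\ref{eq: excess risk})) to relate the $L_2$ error back to the excess risk.
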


The next lemma provides a bound on the prediction risk of the empirical risk minimizer in terms of the covering number of the function class and the approximation error.

\begin{lem}[Modification to Lemma 4 from \citet{schmidt2020nonparametric}]\label{lem: bias variance tradeoff}
Let $\mG$ be a function class, and let $\hat g$ be the minimizer of the empirical risk $(1/n) \sum_{i \in [n]} \ell(\hat g(X_i), Y_i)$ over $\mG$ under the data generating process introduced in Section~\ref{sec: theory}. Suppose that $\{f^*\} \cup \mG \subset \{[0, 1]^d \to [-F, F]\}$ for some $F \geq 1$. Then there exists a universal constant $C_0 > 0$ such that
\begin{align*}
    \E[\|\hat g - f^*\|_{L_2}^2] \leq (1 + \kappa)^2 \qty{ \inf_{g \in \mG} \|g - f^*\|_{L_2}^2 + C_0 \qty( F^2 \frac{\log \mN(\delta, \mG, \|\cdot\|_{L_\infty})}{n \kappa} + \delta F ) }
\end{align*}
for all $\kappa, \delta \in (0, 1]$.
\end{lem}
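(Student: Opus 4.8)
The statement is a standard oracle inequality for least-squares empirical risk minimization, and the plan is to adapt the argument behind Lemma~4 of \citet{schmidt2020nonparametric} to our Gaussian data-generating model, carrying through the explicit range bound $F$ and tracking the \emph{population} $L_2$ approximation error rather than a sup-norm surrogate. Fix $\kappa,\delta\in(0,1]$ and let $g^\dagger\in\mG$ nearly attain $\inf_{g\in\mG}\|g-f^*\|_{L_2}^2$. Writing $\|u\|_n^2=(1/n)\sum_{i\in[n]}u(X_i)^2$ for the empirical norm, the starting point is the basic inequality: since $\hat g$ minimizes the empirical squared loss and $Y_i=f^*(X_i)+\epsilon_i$, expanding the squares and cancelling the noise-only terms gives
\[
    \|\hat g-f^*\|_n^2 \leq \|g^\dagger-f^*\|_n^2 + \frac{2}{n}\sum_{i\in[n]}\epsilon_i\bigl(\hat g(X_i)-g^\dagger(X_i)\bigr).
\]
The problem then reduces to two tasks: (i) controlling the Gaussian cross term uniformly over $\mG$, and (ii) transferring the resulting empirical-norm bound to the population norm $\|\hat g-f^*\|_{L_2}^2$ that appears on the left of the claimed inequality.

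For the cross term I would split $\hat g-g^\dagger=(\hat g-f^*)-(g^\dagger-f^*)$. The part built from the fixed function $g^\dagger-f^*$ is a mean-zero Gaussian average with variance of order $\sigma^2\|g^\dagger-f^*\|_{L_2}^2/n$, which a Young-type inequality absorbs into the approximation term. The genuinely uniform part, $\sup_{g\in\mG}(2/n)\sum_i\epsilon_i(g-f^*)(X_i)$ localized by $\|g-f^*\|_{L_2}$, is the crux. I would discretize $\mG$ by a $\delta$-net in $\|\cdot\|_{L_\infty}$ of cardinality $\mN(\delta,\mG,\|\cdot\|_{L_\infty})$: replacing $g$ by its nearest net point perturbs every linear and squared term by at most $O(\delta F)$, since all functions lie in $[-F,F]$, and this produces the additive $\delta F$ contribution. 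Over the finite net a single union bound combines (a) sub-Gaussian tail bounds for each average $(1/n)\sum_i\epsilon_i(g_j-f^*)(X_i)$, and (b) Bernstein's inequality for the deviation of $\|g_j-f^*\|_n^2$ from $\|g_j-f^*\|_{L_2}^2$, valid because $(g_j-f^*)^2\leq 4F^2$. Each union bound costs a factor $\log\mN(\delta,\mG,\|\cdot\|_{L_\infty})$, which is the source of the $F^2\log\mN/(n\kappa)$ term once the weighting below is applied.

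The multiplicative $(1+\kappa)^2$ structure then emerges from the weighting step: the linear-in-$(\hat g-f^*)$ noise term is bounded by $\kappa\|\hat g-f^*\|_{L_2}^2+(\text{variance proxy})/\kappa$ via $2ab\leq\kappa a^2+\kappa^{-1}b^2$, while the empirical-to-population concentration from (b) lets me write $\|\hat g-f^*\|_n^2\geq(1+\kappa)^{-2}\|\hat g-f^*\|_{L_2}^2-O(F^2\log\mN/(n\kappa))$ on the high-probability event; rearranging isolates $\|\hat g-f^*\|_{L_2}^2$ with exactly the stated constants. Finally I would pass from the high-probability event to the expectation bound by integrating the Gaussian and Bernstein tails, choosing the failure probability so that its residual contribution is absorbed into the $\delta F$ and $1/n$ terms. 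The main obstacle is carrying out tasks (i) and (ii) \emph{simultaneously and uniformly}: the localized empirical process has to be controlled at the correct scale so that the cross term is a genuine small multiple of the \emph{population} norm rather than its empirical surrogate, and keeping the $(1+\kappa)^2$ multiplier, the $1/\kappa$ variance inflation, and the $\delta F$ discretization residual mutually consistent requires careful bookkeeping. This is precisely the interplay that the Schmidt-Hieber argument manages, and the only modification needed is to propagate the bounded range $F$ and the population $L_2$ approximation error through each estimate.
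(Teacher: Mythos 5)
Your proposal is correct and matches the paper's own justification: the paper states this lemma without an explicit proof, deferring to Lemma~4 of \citet{schmidt2020nonparametric}, and your argument---basic inequality for the ERM, sup-norm $\delta$-net with union bound (yielding the $\log\mN$ and $\delta F$ terms), sub-Gaussian/Bernstein concentration to pass between empirical and population $L_2$ norms, Young's inequality producing the $(1+\kappa)^2$ multiplier and $1/\kappa$ variance inflation, and tail integration for the expectation bound---is precisely the argument underlying that cited result. Beyond routine constant bookkeeping, there is no gap.
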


The next lemma provides a bound on the covering number of the \refine class $\mG_{d,p}(W,L,B; f_\rep)$.
\begin{lem}\label{lem: covering number mN}
Fix $W \in \N_+$, $L \in \N_+$, $B > 0$, and $\delta > 0$. Then, there exists a universal constant $C > 0$ such that
\begin{align*}
\log\mN(\delta, \mG_{d,p}(W,L,B; f_\rep), \|\cdot\|_{L_\infty})
&\leq C \qty{ W \log\qty( \frac{LB^L (W+1)^L}{\delta} ) + p \log\qty(\frac{1}{\delta}) }.
\end{align*}
\end{lem}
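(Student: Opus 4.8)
The plan is to exploit the additive structure of the \refine{} class: every $g \in \mG_{d,p}(W,L,B;f_\rep)$ splits as $g = g_{\mathrm{lin}} + g_{\mathrm{net}}$, where $g_{\mathrm{lin}}(x) = v^\top f_\rep(x)$ with $\|v\|\le 1$ and $g_{\mathrm{net}}(x) = u\,h(x)$ with $|u|\le 1$ and $h\in\bar\mH_d(W,L,B)$. Since the sup-norm of a sum is controlled by the sum of sup-norms, the Cartesian product of an $L_\infty$-cover of the linear part at scale $\delta/2$ and of the network part at scale $\delta/2$ yields an $L_\infty$-cover of $\mG_{d,p}(W,L,B;f_\rep)$ at scale $\delta$. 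Hence $\log\mN(\delta,\mG_{d,p}(W,L,B;f_\rep),\|\cdot\|_{L_\infty})$ is at most the sum of the two component log-covering numbers, which I bound separately.

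For the linear part, I use the fact that $f_\rep$ maps into the unit ball $\mB_p(1)$, so $\|f_\rep(x)\|\le 1$ for every $x$. Then for two probes $v,v'$, Cauchy--Schwarz gives $\|v^\top f_\rep - v'^\top f_\rep\|_{L_\infty} \le \sup_x\|f_\rep(x)\|\,\|v-v'\| \le \|v-v'\|$. Thus any Euclidean $(\delta/2)$-net of the unit ball $\{v:\|v\|\le 1\}\subset\R^p$ induces an $L_\infty$ $(\delta/2)$-cover of the linear class, and the standard volumetric bound on the covering number of a Euclidean ball yields $\log\mN(\delta/2,\{v^\top f_\rep:\|v\|\le 1\},\|\cdot\|_{L_\infty}) \le p\log(C_1/\delta)$ for a universal constant $C_1$.

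For the network part I treat the scalar $u$ and the network $h$ separately, using the clipping to control the bilinear product. Because $h\in\bar\mH_d(W,L,B)$ is clipped, $\|h\|_{L_\infty}\le 1$, and with $|u|\le1$ the product is $1$-Lipschitz in each argument: $\|u h - u'h'\|_{L_\infty} \le |u-u'|\,\|h'\|_{L_\infty} + |u|\,\|h-h'\|_{L_\infty} \le |u-u'| + \|h-h'\|_{L_\infty}$. I therefore combine a $(\delta/4)$-net of the interval $[-1,1]$, whose cardinality is $O(1/\delta)$, with a $(\delta/4)$-cover of $\bar\mH_d(W,L,B)$ in $L_\infty$. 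Since the clipping map $z\mapsto 1\wedge(-1\vee z)$ is $1$-Lipschitz, any $L_\infty$-cover of the unclipped class $\mH_d(W,L,B)$ projects to an $L_\infty$-cover of $\bar\mH_d(W,L,B)$ at the same scale, so Lemma~\ref{lem: covering number} gives $\log\mN(\delta/4,\bar\mH_d(W,L,B),\|\cdot\|_{L_\infty}) \le W\log(8LB^L(W+1)^L/\delta)$. Collecting terms, the network part contributes $\log(C_2/\delta) + W\log(8LB^L(W+1)^L/\delta)$.

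Summing the two contributions and absorbing the lower-order scalar term $\log(C_2/\delta)$ together with all numerical constants into a single universal constant $C$ gives the claimed bound. I do not anticipate a genuine obstacle here; the only care needed is bookkeeping of the scales (the $\delta/2$ split across the two summands, then the $\delta/4$ split across $u$ and $h$) so that the final constants combine cleanly, and verifying that the $1$-Lipschitz property of clipping lets me invoke the unclipped covering bound of Lemma~\ref{lem: covering number} without loss.
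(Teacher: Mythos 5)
Your proof is correct and follows essentially the same route as the paper's: the same $\delta/2$ additive split into the linear-probe part and the clipped-network part, the same Cauchy--Schwarz and Euclidean-ball volumetric bound giving the $p\log(1/\delta)$ term, and the same $\delta/4$ sub-split over $u \in [-1,1]$ and $h \in \bar\mH_d(W,L,B)$ combined with the $1$-Lipschitz clipping observation and Lemma~\ref{lem: covering number}. Your justification that clipping preserves covers is in fact slightly more explicit than the paper's one-line remark, but the argument is the same.
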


\begin{proof}
We next bound the covering number $\mN(\delta, \mG_{d,p}(W,L,B; f_\rep), \|\cdot\|_{L_\infty})$. Note that for any $\delta > 0$, we have
    \begin{align}
        &\log \mN(\delta, \mG_{d,p}(W,L,B; f_\rep), \|\cdot\|_{L_\infty})\nonumber\\
        &\quad\leq \log \mN\qty(\frac{\delta}{2}, \{x \mapsto u h(x) \mid u \in [-1, 1], h \in \bar \mH_d(W,L,B)\}, \|\cdot\|_{L_\infty})\nonumber\\
        &\quad\quad+ \log \mN\qty(\frac{\delta}{2}, \{x \mapsto v^\top f_\rep(x) \mid v \in \mB_p(1)\}, \|\cdot\|_{L_\infty}).\label{eq: covering number}
    \end{align}
    Recall that $f_\rep: [0, 1]^d \to \mB_p(1)$.
    Since $\|v^\top f_\rep - v'^\top f_\rep\|_{L_\infty} \leq \|v - v'\|$ for any $v, v' \in \mB_p(1)$, a standard argument shows that
    \begin{align}
        \mN\qty(\frac{\delta}{2}, \{x \mapsto v^\top f_\rep(x) \mid v \in \mB_p(1)\}, \|\cdot\|_{L_\infty}) \leq \mN\qty(\frac{\delta}{2}, \mB_p(1), \|\cdot\|) \leq \qty(\frac{6}{\delta})^p.\label{eq: covering number 1}
    \end{align}
    Furthermore, since $\|u_1 h_1 - u_2 h_2\|_{L_\infty} \leq \|h_1 - h_2\|_{L_\infty} + |u_1 - u_2|$ for any $u_1, u_2 \in [-1, 1]$ and $h_1, h_2 \in \bar \mH_d(W,L,B)$, we have
    \begin{align}
        &\mN(\frac{\delta}{2}, \{x \mapsto u h(x) \mid u \in [-1, 1], h \in \bar \mH_d(W,L,B)\}, \|\cdot\|_{L_\infty})\nonumber\\
        &\quad\leq \mN\qty(\frac{\delta}{4}, [-1, 1], |\cdot|) \mN\qty(\frac{\delta}{4}, \bar \mH_d(W,L,B), \|\cdot\|_{L_\infty})\nonumber\\
        &\quad\lesssim \frac{1}{\delta} \mN\qty(\frac{\delta}{4}, \mH_d(W,L,B), \|\cdot\|_{L_\infty}).\label{eq: covering number 2}
    \end{align}
    Note that clipping does not increase the covering number of $mH_d(W,L,B)$.
    Using (\ref{eq: covering number}), (\ref{eq: covering number 1}) and (\ref{eq: covering number 2}), combined with Lemma~\ref{lem: covering number}, we obtain
    \begin{align*}
        \log\mN(\delta, \mG_{d,p}(W,L,B; f_\rep), \|\cdot\|_{L_\infty})
        &\lesssim W \log\qty( \frac{LB^L (W+1)^L}{\delta} ) + p \log\qty(\frac{1}{\delta}).
    \end{align*}
This completes the proof of Lemma~\ref{lem: covering number mN}.
\end{proof}

\section{More Numerical Experiments}
\label{app:more-exp}

In this section, we present additional results that complement~\Cref{sec:exp}.

\subsection{Multi-source transfer}
\label{sec:exp-multi}

In the third experiment setting, we investigate multi-source transfer, an important yet underexplored setting where knowledge is drawn from multiple heterogeneous sources to achieve better generalization than any single source alone. Despite its practical relevance, most existing approaches, such as  LinearProbe, Adapter, and Distillation, are designed for single-source transfer and do not naturally extend to the multi-source case. To provide a fair comparison, we implement a Naive baseline that assigns each source domain its own feature extractor, concatenates the resulting representations, and trains a classifier on top of the joint embedding. This straightforward strategy captures the most natural way of leveraging multiple sources in the absence of specialized methods. For our experiments, we partition CIFAR-10 into eight disjoint subsets of 2000 samples each, treating them as distinct source domains and training separate CNNs on each. \refine then integrates the corresponding penultimate representations through its modular structure, mimicking multi-source transfer while keeping inference overhead modest. This setup enables a direct evaluation of principled multi-source integration against naive concatenation.

\begin{figure}[t!]
\centering
\setlength{\tabcolsep}{0pt} 
\renewcommand{\arraystretch}{0.9} 
\begin{tabular}{ccc}
        \includegraphics[width=0.32\textwidth]{./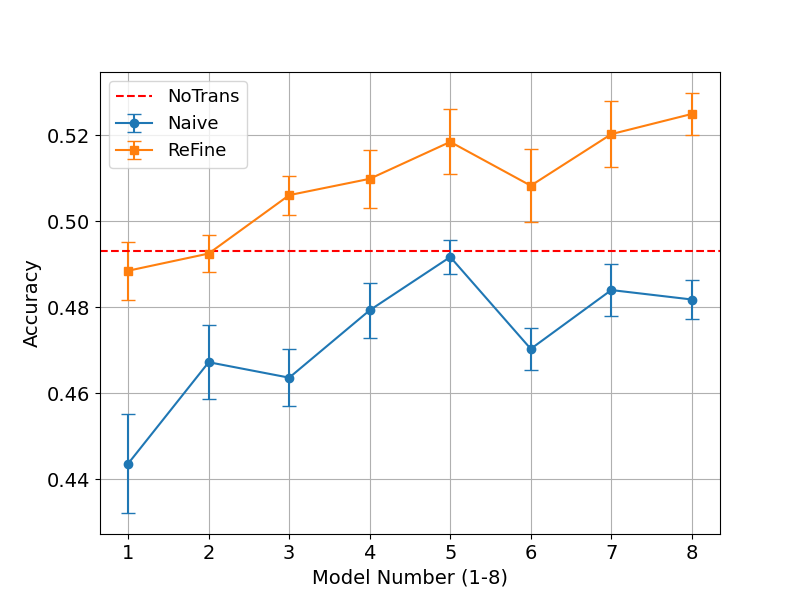} & 
        \includegraphics[width=0.32\textwidth]{./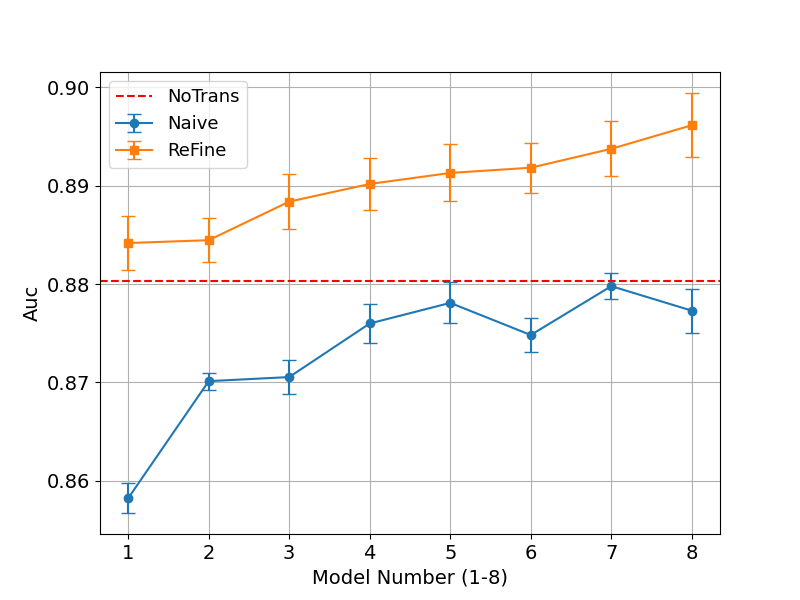} &
        \includegraphics[width=0.32\textwidth]{./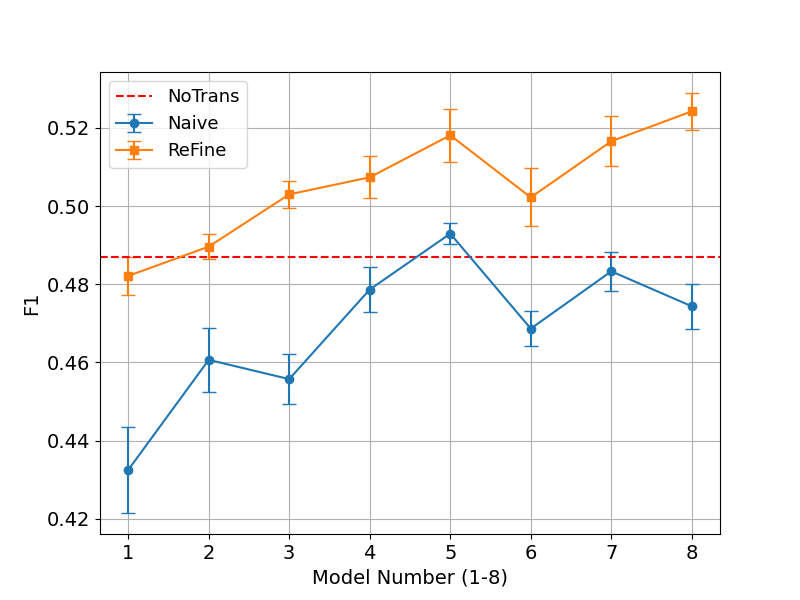} \\
        \footnotesize Noisy Acc & \footnotesize Noisy AUC & \footnotesize Noisy F1 \\ 
        
        \includegraphics[width=0.32\textwidth]{./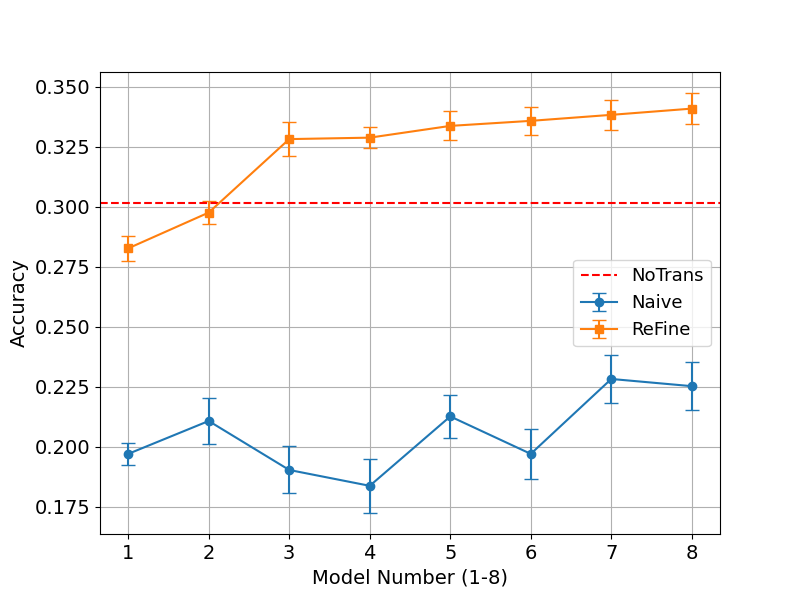} & 
        \includegraphics[width=0.32\textwidth]{./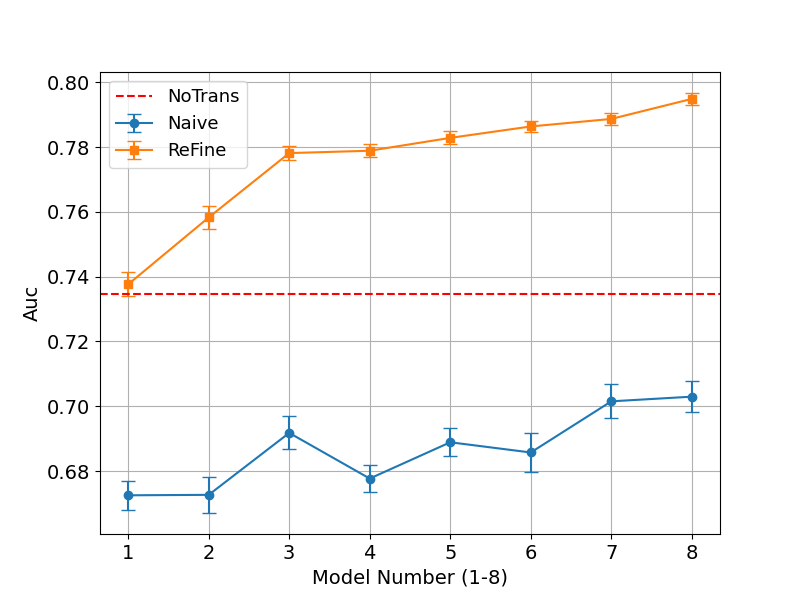} &
        \includegraphics[width=0.32\textwidth]{./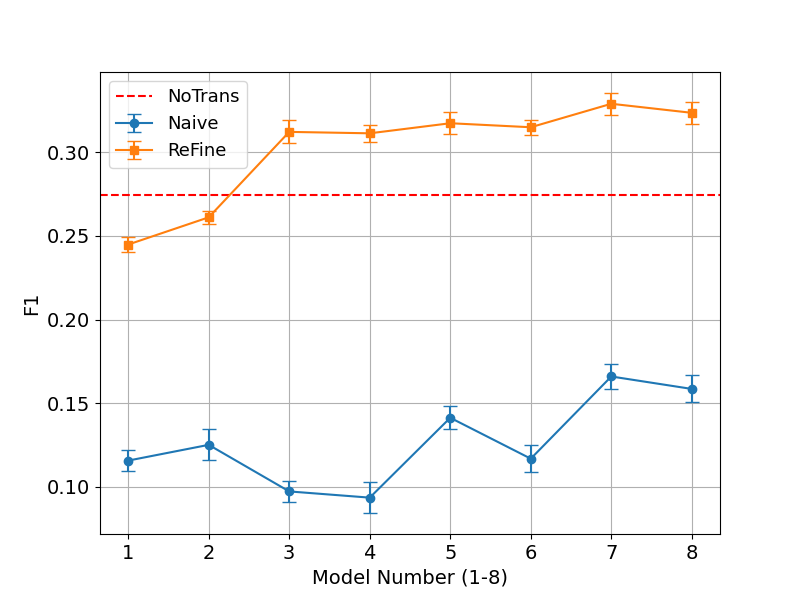} \\
        \footnotesize Low lr Acc & \footnotesize Low lr AUC & \footnotesize Low lr F1 \\
\end{tabular}
\caption{Results of multi-source transfer learning under noisy and low-learning-rate conditions.}
\label{fig:refine_vs_naive}
\end{figure}

\Cref{fig:refine_vs_naive} reports the results under two stress conditions, a noisy case with $50\%$ label corruption, testing robustness to unreliable label supervision, and a low learning rate case, testing training stability and efficiency. In the noisy case, \refine significantly outperforms both Naive and NoTrans as more external sources are integrated. With all eight sources, \refine achieves classification accuracy $52.5\%$, AUC $0.8962$, and F1 $0.5242$, compared to Naive's $48.2\%$, $0.8773$, and $0.4744$, and NoTrans's $49.3\%$, $0.8803$, and $0.4871$. Notably, Naive consistently performs worse than NoTrans, indicating negative transfer when external information is not integrated effectively. In the low learning rate case, \refine again improves steadily over NoTrans as the number of sources increases, while Naive suffers severe degradation. With all eight sources, \refine reaches $34.09\%$ classification accuracy, surpassing NoTrans's $30.16\%$ and Naive's $22.53\%$. Overall, these results demonstrate that \refine effectively integrates multiple sources, and remains robust under adverse supervision and training conditions. It avoids the pitfalls of naive concatenation and provides a stable approach for multi-source transfer.

\subsection{Discussion about multimodality extension}

\citet{baltrusaitis19} survey multimodal machine learning from a general taxonomy perspective, organizing existing methods into representation, translation, alignment, fusion, and co-learning paradigms. \citet{gao20} focus specifically on deep multimodal learning techniques that emphasize neural joint representation learning and fusion strategies, assuming that all participating modalities are known and available during training. \citet{stahlschmidt22} review biomedical multimodal fusion approaches that similarly rely on paired multimodal data and end-to-end or coordinated training with all modalities present beforehand. In contrast to these settings, we define an \emph{adapt-time multimodality extension} regime in which a foundation model is pretrained on a single modality, the backbone remains fixed, upstream data are inaccessible, and a previously unseen modality becomes available only at adaptation time; to our knowledge, this problem formulation is not explicitly identified or studied in prior multi-source transfer or multimodal learning literature.

\Cref{fig:spatial_pred_1000} provides additional qualitative comparisons of spatial domain predictions on the human lymph node dataset, showing ground truth alongside results from LinearProbe, Adapter, and \refine, as discussed in~\Cref{sec:exp-spatial}.

\begin{figure}[t!]
\centering
\includegraphics[width=0.8\textwidth]{./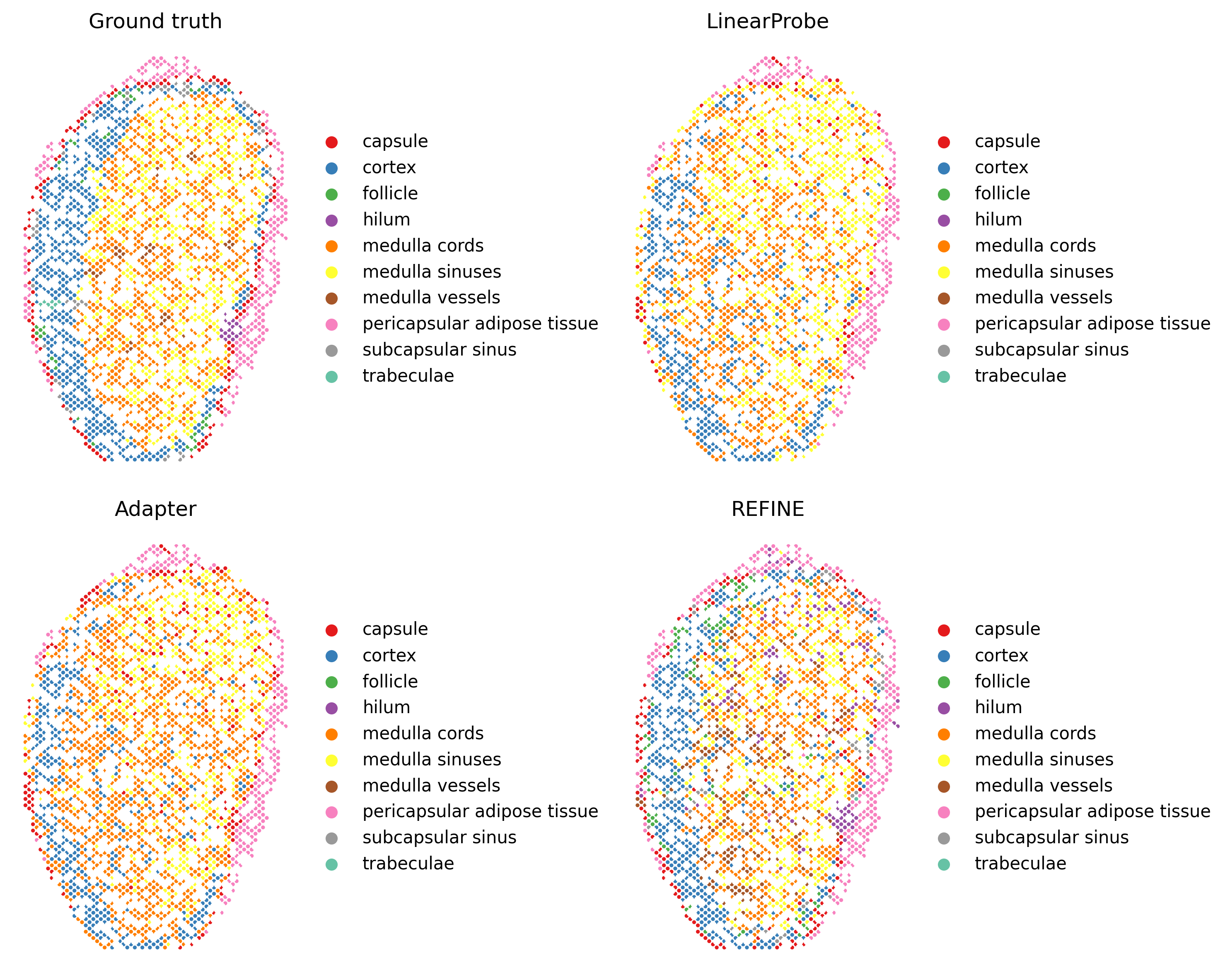}
\caption{Spatial predictions at target cell.}
\label{fig:spatial_pred_1000}
\end{figure}

\subsection{Single-source transfer under challenging scenarios}
\label{sec:app-cifar-challenge}

Similar to the setting considered in~\Cref{sec:exp-single-stress} for CIFAR-10, we run the experiments on CIFAR-100. Moreover, in addition to CNNs, we also evaluate both CIFAR-10 and CIFAR-100 with transformer-based models.

\Cref{tab:cifar100_scen} reports the results on CIFAR-100 with CNNs. Similar to CIFAR-10, \refine consistently outperforms the baseline methods under all four stress scenarios. In particular, in the  extreme noise setting with $80\%$ label flips, most competing methods collapse to near-random performance, whereas \refine remains stable and comparable to the no-transfer baseline. In the semantic confusion and class imbalance settings, \refine achieves the strongest improvements in classification accuracy and F1, highlighting its ability to mitigate negative transfer even when pretraining data is severely perturbed.

\Cref{tab:tf_cifar10} and~\Cref{tab:tf_cifar100} report the results on CIFAR-10 and CIFAR-100,  respectively, with transformer-based models. Similar to CNNs, existing adaptation methods degrade sharply under noisy or imbalanced pretraining, whereas \refine maintains stable and superior performance in accuracy, AUC, and F1. 

Together, these results demonstrate that the advantages of \refine are not tied to a specific model architecture or dataset size. By design, it reliably suppresses negative transfer and delivers consistent gains under challenging pretraining conditions.

\begin{table}[t!]
\centering
\resizebox{\textwidth}{!}{%
\begin{tabular}{llcccccc}
\toprule
Dataset & Setting & Method & Acc & AUC & F1 & MinCAcc \\
\midrule
\multirow{35}{*}{CIFAR-100}
& \multirow{7}{*}{40\% flips} 
    & NoTrans   & $17.82 \pm 0.36$ & $0.8259 \pm 0.0068$ & $0.1684 \pm 0.0039$ & $\mathbf{0.60 \pm 0.49}$ \\
&   & LinearProbe& $17.35 \pm 0.27$ & $\mathbf{0.8605 \pm 0.0015}$ & $0.1472 \pm 0.0043$ & $0.00 \pm 0.00$ \\
&   & Adapter   & $16.19 \pm 0.33$ & $0.8578 \pm 0.0019$ & $0.1303 \pm 0.0037$ & $0.00 \pm 0.00$ \\
&   & Distill   & $18.73 \pm 0.22$ & $\mathbf{0.8605 \pm 0.0035}$ & $0.1631 \pm 0.0024$ & $0.00 \pm 0.00$ \\
&   & LoRA      & $17.24 \pm 0.33$ & $0.8568 \pm 0.0018$ & $0.1463 \pm 0.0053$ & $0.00 \pm 0.00$ \\
&   & DANN-Gate & $15.02 \pm 0.39$ & $0.8472 \pm 0.0020$ & $0.1239 \pm 0.0041$ & $0.00 \pm 0.00$ \\
&   & \refine   & $\mathbf{19.28 \pm 0.34}$ & $0.8555 \pm 0.0042$ & $\mathbf{0.1805 \pm 0.0043}$ & $0.40 \pm 0.80$ \\
\cmidrule(lr){2-7}
& \multirow{7}{*}{80\% flips} 
    & NoTrans   & $\mathbf{17.52 \pm 0.60}$ & $\mathbf{0.8252 \pm 0.0059}$ & $\mathbf{0.1663 \pm 0.0047}$ & $\mathbf{0.60 \pm 0.49}$ \\
&   & LinearProbe& $1.00 \pm 0.00$  & $0.6740 \pm 0.0019$ & $0.0002 \pm 0.0000$ & $0.00 \pm 0.00$ \\
&   & Adapter   & $1.00 \pm 0.00$  & $0.5250 \pm 0.0058$ & $0.0002 \pm 0.0000$ & $0.00 \pm 0.00$ \\
&   & Distill   & $15.11 \pm 0.49$ & $0.8174 \pm 0.0069$ & $0.1227 \pm 0.0039$ & $0.00 \pm 0.00$ \\
&   & LoRA      & $2.01 \pm 0.18$  & $0.6251 \pm 0.0032$ & $0.0026 \pm 0.0006$ & $0.00 \pm 0.00$ \\
&   & DANN-Gate & $1.00 \pm 0.00$  & $0.5754 \pm 0.0113$ & $0.0002 \pm 0.0000$ & $0.00 \pm 0.00$ \\
&   & \refine   & $17.37 \pm 1.09$ & $0.8239 \pm 0.0060$ & $0.1641 \pm 0.0109$ & $0.20 \pm 0.40$ \\
\cmidrule(lr){2-7}
& \multirow{7}{*}{Schematic confusion} 
    & NoTrans   & $18.13 \pm 0.74$ & $0.8129 \pm 0.0044$ & $0.1747 \pm 0.0073$ & $1.20 \pm 0.75$ \\
&   & LinearProbe& $20.81 \pm 0.13$ & $0.8316 \pm 0.0003$ & $0.2006 \pm 0.0038$ & $0.60 \pm 0.80$ \\
&   & Adapter   & $19.99 \pm 0.24$ & $0.8308 \pm 0.0012$ & $0.1895 \pm 0.0052$ & $0.00 \pm 0.00$ \\
&   & Distill   & $20.06 \pm 0.89$ & $\mathbf{0.8361 \pm 0.0077}$ & $0.1959 \pm 0.0080$ & $1.00 \pm 0.63$ \\
&   & LoRA      & $20.05 \pm 0.18$ & $0.8246 \pm 0.0017$ & $0.1953 \pm 0.0035$ & $0.60 \pm 0.80$ \\
&   & DANN-Gate & $17.56 \pm 0.33$ & $0.8122 \pm 0.0023$ & $0.1720 \pm 0.0032$ & $0.00 \pm 0.00$ \\
&   & \refine   & $\mathbf{21.76 \pm 0.60}$ & $0.8308 \pm 0.0072$ & $\mathbf{0.2139 \pm 0.0067}$ & $\mathbf{2.00 \pm 1.10}$ \\
\cmidrule(lr){2-7}
& \multirow{7}{*}{Class imbalance} 
    & NoTrans   & $17.58 \pm 0.24$ & $0.8271 \pm 0.0033$ & $0.1656 \pm 0.0046$ & $\mathbf{1.00 \pm 0.00}$ \\
&   & LinearProbe& $22.41 \pm 0.48$ & $0.8687 \pm 0.0011$ & $0.2133 \pm 0.0048$ & $0.00 \pm 0.00$ \\
&   & Adapter   & $22.66 \pm 0.30$ & $0.8676 \pm 0.0014$ & $0.2102 \pm 0.0025$ & $0.00 \pm 0.00$ \\
&   & Distill   & $19.59 \pm 0.61$ & $0.8659 \pm 0.0034$ & $0.1752 \pm 0.0072$ & $0.00 \pm 0.00$ \\
&   & LoRA      & $22.56 \pm 0.39$ & $0.8535 \pm 0.0009$ & $0.2129 \pm 0.0022$ & $0.00 \pm 0.00$ \\
&   & DANN-Gate & $20.72 \pm 0.24$ & $0.8432 \pm 0.0021$ & $0.1966 \pm 0.0031$ & $0.00 \pm 0.00$ \\
&   & \refine   & $\mathbf{23.31 \pm 0.42}$ & $\mathbf{0.8719 \pm 0.0010}$ & $\mathbf{0.2264 \pm 0.0032}$ & $0.40 \pm 0.49$ \\
\bottomrule
\end{tabular}}
\caption{Single-source transfer learning with label noise, semantic perturbation, and class imbalance for CIFAR-100 using CNNs.}
\label{tab:cifar100_scen}
\end{table}

\begin{table}[t!]
\centering
\resizebox{\textwidth}{!}{%
\begin{tabular}{llcccccc}
\toprule
Dataset & Setting & Method & Acc & AUC & F1 & MinCAcc \\
\midrule
\multirow{28}{*}{CIFAR-10}
& \multirow{7}{*}{80\% flips} 
    & NoTrans   & $45.17 \pm 1.39$ & $0.8678 \pm 0.0028$ & $0.4391 \pm 0.0183$ & $16.24 \pm 4.52$ \\
&   & LinearProbe& $20.65 \pm 0.44$ & $0.6826 \pm 0.0025$ & $0.1410 \pm 0.0083$ & $0.00 \pm 0.00$ \\
&   & Adapter   & $17.88 \pm 0.73$ & $0.6682 \pm 0.0066$ & $0.1248 \pm 0.0111$ & $0.00 \pm 0.00$ \\
&   & Distill   & $40.19 \pm 0.57$ & $0.8445 \pm 0.0022$ & $0.3827 \pm 0.0068$ & $8.00 \pm 5.22$ \\
&   & LoRA      & $21.69 \pm 0.49$ & $0.6831 \pm 0.0010$ & $0.1511 \pm 0.0059$ & $0.00 \pm 0.00$ \\
&   & DANN-Gate & $21.37 \pm 0.27$ & $0.6829 \pm 0.0015$ & $0.1468 \pm 0.0075$ & $0.00 \pm 0.00$ \\
&   & \refine   & $\mathbf{45.53 \pm 0.95}$ & $\mathbf{0.8694 \pm 0.0047}$ & $\mathbf{0.4463 \pm 0.0105}$ & $\mathbf{18.68 \pm 4.97}$ \\
\cmidrule(lr){2-7}
& \multirow{7}{*}{Domain mismatch} 
    & NoTrans   & $44.37 \pm 0.74$ & $0.8628 \pm 0.0035$ & $0.4375 \pm 0.0055$ & $20.80 \pm 4.86$ \\
&   & LinearProbe& $46.04 \pm 0.71$ & $0.8643 \pm 0.0015$ & $0.4544 \pm 0.0080$ & $23.46 \pm 4.74$ \\
&   & Adapter   & $44.87 \pm 0.55$ & $0.8514 \pm 0.0029$ & $0.4445 \pm 0.0059$ & $26.74 \pm 1.89$ \\
&   & LoRA      & $47.74 \pm 0.38$ & $\mathbf{0.8752 \pm 0.0015}$ & $\mathbf{0.4750 \pm 0.0032}$ & $27.96 \pm 2.61$ \\
&   & DANN-Gate & $\mathbf{47.79 \pm 0.40}$ & $0.8750 \pm 0.0019$ & $0.4733 \pm 0.0036$ & $28.12 \pm 4.52$ \\
&   & \refine   & $44.85 \pm 0.38$ & $0.8524 \pm 0.0011$ & $0.4474 \pm 0.0035$ & $\mathbf{29.68 \pm 1.78}$ \\
\cmidrule(lr){2-7}
& \multirow{7}{*}{Schematic confusion} 
    & NoTrans   & $45.36 \pm 0.59$ & $0.8662 \pm 0.0033$ & $0.4455 \pm 0.0081$ & $18.98 \pm 7.49$ \\
&   & LinearProbe& $53.45 \pm 0.44$ & $0.9090 \pm 0.0002$ & $0.5259 \pm 0.0078$ & $26.28 \pm 6.59$ \\
&   & Adapter   & $52.67 \pm 0.33$ & $0.9089 \pm 0.0008$ & $0.5195 \pm 0.0050$ & $30.84 \pm 4.96$ \\
&   & Distill   & $46.01 \pm 1.11$ & $0.8736 \pm 0.0028$ & $0.4435 \pm 0.0143$ & $14.00 \pm 6.94$ \\
&   & LoRA      & $52.35 \pm 0.42$ & $0.9024 \pm 0.0008$ & $0.5176 \pm 0.0053$ & $32.50 \pm 0.97$ \\
&   & DANN-Gate & $52.13 \pm 0.35$ & $0.9021 \pm 0.0009$ & $0.5141 \pm 0.0036$ & $33.28 \pm 4.33$ \\
&   & \refine   & $\mathbf{54.62 \pm 0.45}$ & $\mathbf{0.9134 \pm 0.0010}$ & $\mathbf{0.5431 \pm 0.0056}$ & $\mathbf{33.90 \pm 3.34}$ \\
\cmidrule(lr){2-7}
& \multirow{7}{*}{Class imbalanace} 
    & NoTrans   & $45.36 \pm 1.39$ & $0.8678 \pm 0.0028$ & $0.4391 \pm 0.0183$ & $16.24 \pm 4.52$ \\
&   & LinearProbe& $48.44 \pm 0.37$ & $0.8749 \pm 0.0008$ & $0.4805 \pm 0.0052$ & $25.94 \pm 6.98$ \\
&   & Adapter   & $47.57 \pm 0.27$ & $0.8678 \pm 0.0029$ & $0.4689 \pm 0.0045$ & $25.26 \pm 4.53$ \\
&   & Distill   & $42.25 \pm 0.63$ & $0.8650 \pm 0.0035$ & $0.3996 \pm 0.0051$ & $3.86 \pm 0.82$ \\
&   & LoRA      & $\mathbf{48.99 \pm 0.30}$ & $0.8759 \pm 0.0007$ & $\mathbf{0.4866 \pm 0.0036}$ & $30.92 \pm 3.71$ \\
&   & DANN-Gate & $48.94 \pm 0.41$ & $\mathbf{0.8766 \pm 0.0009}$ & $0.4860 \pm 0.0051$ & $\mathbf{31.62 \pm 1.52}$ \\
&   & \refine   & $47.81 \pm 0.23$ & $0.8691 \pm 0.0007$ & $0.4755 \pm 0.0026$ & $29.44 \pm 3.26$ \\
\bottomrule
\end{tabular}}
\caption{Single-source transfer learning with label noise, semantic perturbation, and class imbalance for CIFAR-10 using transformers.}
\label{tab:tf_cifar10}
\end{table}

\begin{table}[t!]
\centering
\resizebox{\textwidth}{!}{%
\begin{tabular}{llcccccc}
\toprule
Dataset & Setting & Method & Acc & AUC & F1 & MinCAcc \\
\midrule
\multirow{28}{*}{CIFAR-100}
& \multirow{7}{*}{80\% flips} 
    & NoTrans   & $15.32 \pm 0.33$ & $\mathbf{0.8449 \pm 0.0021}$ & $0.1358 \pm 0.0041$ & $0.00 \pm 0.00$ \\
&   & LinearProbe& $6.70 \pm 0.27$  & $0.7377 \pm 0.0011$ & $0.0390 \pm 0.0014$ & $0.00 \pm 0.00$ \\
&   & Adapter   & $6.54 \pm 0.16$  & $0.7405 \pm 0.0011$ & $0.0348 \pm 0.0009$ & $0.00 \pm 0.00$ \\
&   & Distill   & $11.83 \pm 0.26$ & $0.8130 \pm 0.0027$ & $0.0835 \pm 0.0024$ & $0.00 \pm 0.00$ \\
&   & LoRA      & $6.97 \pm 0.07$  & $0.7390 \pm 0.0015$ & $0.0428 \pm 0.0014$ & $0.00 \pm 0.00$ \\
&   & DANN-Gate & $6.91 \pm 0.23$  & $0.7392 \pm 0.0016$ & $0.0429 \pm 0.0014$ & $0.00 \pm 0.00$ \\
&   & \refine   & $\mathbf{15.50 \pm 0.79}$ & $0.8437 \pm 0.0041$ & $\mathbf{0.1378 \pm 0.0067}$ & $0.00 \pm 0.00$ \\
\cmidrule(lr){2-7}
& \multirow{7}{*}{Domain mismatch} 
    & NoTrans   & $11.28 \pm 0.52$ & $0.8023 \pm 0.0034$ & $0.0984 \pm 0.0033$ & $0.00 \pm 0.00$ \\
&   & LinearProbe& $13.32 \pm 0.52$ & $0.8186 \pm 0.0015$ & $0.1175 \pm 0.0049$ & $0.00 \pm 0.00$ \\
&   & Adapter   & $12.64 \pm 0.32$ & $0.8267 \pm 0.0006$ & $0.1052 \pm 0.0030$ & $0.00 \pm 0.00$ \\
&   & LoRA      & $14.22 \pm 0.26$ & $0.8466 \pm 0.0010$ & $0.1289 \pm 0.0028$ & $0.00 \pm 0.00$ \\
&   & DANN-Gate & $14.08 \pm 0.37$ & $0.8465 \pm 0.0012$ & $0.1280 \pm 0.0023$ & $0.00 \pm 0.00$ \\
&   & \refine   & $\mathbf{14.38 \pm 0.54}$ & $\mathbf{0.8291 \pm 0.0032}$ & $\mathbf{0.1329 \pm 0.0039}$ & $0.00 \pm 0.00$ \\
\cmidrule(lr){2-7}
& \multirow{7}{*}{Schematic confusion} 
    & NoTrans   & $\mathbf{16.24 \pm 0.58}$ & $\mathbf{0.8471 \pm 0.0036}$ & $\mathbf{0.1485 \pm 0.0075}$ & $0.00 \pm 0.00$ \\
&   & LinearProbe& $11.88 \pm 0.28$ & $0.7950 \pm 0.0016$ & $0.1067 \pm 0.0015$ & $0.00 \pm 0.00$ \\
&   & Adapter   & $11.17 \pm 0.43$ & $0.7936 \pm 0.0027$ & $0.0918 \pm 0.0040$ & $0.00 \pm 0.00$ \\
&   & Distill   & $15.01 \pm 0.64$ & $0.8266 \pm 0.0028$ & $0.1260 \pm 0.0081$ & $0.00 \pm 0.00$ \\
&   & LoRA      & $11.36 \pm 0.18$ & $0.7899 \pm 0.0013$ & $0.0991 \pm 0.0015$ & $0.00 \pm 0.00$ \\
&   & DANN-Gate & $11.46 \pm 0.21$ & $0.7893 \pm 0.0013$ & $0.0989 \pm 0.0017$ & $0.00 \pm 0.00$ \\
&   & \refine   & $14.94 \pm 0.49$ & $0.8282 \pm 0.0026$ & $0.1402 \pm 0.0026$ & $0.00 \pm 0.00$ \\
\cmidrule(lr){2-7}
& \multirow{7}{*}{Class imbalance} 
    & NoTrans   & $15.43 \pm 0.32$ & $0.8474 \pm 0.0025$ & $0.1386 \pm 0.0012$ & $0.00 \pm 0.00$ \\
&   & LinearProbe& $\mathbf{25.82 \pm 0.28}$ & $0.8877 \pm 0.0010$ & $\mathbf{0.2529 \pm 0.0020}$ & $3.60 \pm 0.80$ \\
&   & Adapter   & $24.48 \pm 0.32$ & $0.8847 \pm 0.0010$ & $0.2320 \pm 0.0027$ & $0.60 \pm 0.80$ \\
&   & Distill   & $16.01 \pm 0.13$ & $0.8721 \pm 0.0017$ & $0.1252 \pm 0.0021$ & $0.00 \pm 0.00$ \\
&   & LoRA      & $23.52 \pm 0.09$ & $0.8669 \pm 0.0015$ & $0.2250 \pm 0.0023$ & $0.00 \pm 0.00$ \\
&   & DANN-Gate & $23.48 \pm 0.13$ & $0.8671 \pm 0.0018$ & $0.2264 \pm 0.0019$ & $0.00 \pm 0.00$ \\
&   & \refine   & $25.54 \pm 0.43$ & $\mathbf{0.8879 \pm 0.0013}$ & $0.2524 \pm 0.0039$ & $\mathbf{4.80 \pm 0.75}$ \\
\bottomrule
\end{tabular}}
\caption{Single-source transfer learning with label noise, semantic perturbation, and class imbalance for CIFAR-100 using transformers.}
\label{tab:tf_cifar100}
\end{table}

\subsection{Tabular data}
\label{sec:app-tabular}

We demonstrate that \refine is equally effective in handling tabular data. We consider three binary-class datasets, Adult \cite{adult96}, Credit \cite{credit22}, Diabetes \cite{diabetesi19}, and one multi-class dataset, Performance \cite{performance18}. Each raw training data contains $K \times 100$ samples, where $K$ is the number of classes. To assess model complexity, we design two multilayer perceptron (MLP) architectures: MLP1 with a lower complexity, and MLP2 with a more complex structure. We also compare to DirectAug, which refers to directly combining additional data with the raw data to train the classifier.

\Cref{tab:tabular_comparison_oracle} reports the results using the original data, and~\Cref{tab:tabular_comparison_label_noise} reports the results using the noisy data with 80\% flips of class labels. In both settings, \refine consistently improves accuracy, AUC, and F1 over using the raw data alone. Although DirectAug can sometimes perform better through full data merging, \refine surpasses it on several datasets, including Credit and Performance, confirming its ability to exploit useful auxiliary information without over-relying on data merging. In the presence of heavy label noise, DirectAug suffers severe degradation, whereas \refine maintains or slightly improves performance. Overall, these results show that \refine is effective on tabular data, and offers a safe and reliable mechanism for leveraging additional data compared to direct augmentation.

\begin{table}[t!]
\centering
\resizebox{\textwidth}{!}{%
\begin{tabular}{cc|ccc|ccc}
\toprule
\multirow{3}{*}{Dataset} & \multirow{3}{*}{Metric} & \multicolumn{6}{c}{Classifier} \\
\cmidrule(lr){3-8}
& & \multicolumn{3}{c|}{MLP1} & \multicolumn{3}{c}{MLP2} \\
\cmidrule(lr){3-5}\cmidrule(lr){6-8}
& & Raw & DirectAug & \refine & Raw & DirectAug & \refine \\
\midrule
\multirow{3}{*}{Adult} 
& Accuracy & 0.807 \(\pm\) 0.008 & 0.831 \(\pm\) 0.006 & 0.821 \(\pm\) 0.004 & 0.800 \(\pm\) 0.011 & 0.833 \(\pm\) 0.005 & 0.814 \(\pm\) 0.010 \\
& AUC & 0.832 \(\pm\) 0.008 & 0.878 \(\pm\) 0.006 & 0.852 \(\pm\) 0.008 & 0.833 \(\pm\) 0.010 & 0.883 \(\pm\) 0.005 & 0.854 \(\pm\) 0.008 \\
& F1 & 0.547 \(\pm\) 0.037 & 0.619 \(\pm\) 0.015 & 0.595 \(\pm\) 0.030 & 0.570 \(\pm\) 0.028 & 0.627 \(\pm\) 0.021 & 0.612 \(\pm\) 0.021 \\
\midrule
\multirow{3}{*}{Credit} 
& Accuracy & 0.723 \(\pm\) 0.028 & 0.735 \(\pm\) 0.017 & 0.740 \(\pm\) 0.022 & 0.717 \(\pm\) 0.027 & 0.732 \(\pm\) 0.015 & 0.726 \(\pm\) 0.020 \\
& AUC & 0.730 \(\pm\) 0.024 & 0.738 \(\pm\) 0.013 & 0.745 \(\pm\) 0.018 & 0.725 \(\pm\) 0.025 & 0.754 \(\pm\) 0.020 & 0.736 \(\pm\) 0.023 \\
& F1 & 0.490 \(\pm\) 0.043 & 0.524 \(\pm\) 0.030 & 0.520 \(\pm\) 0.038 & 0.515 \(\pm\) 0.041 & 0.541 \(\pm\) 0.030 & 0.535 \(\pm\) 0.037 \\
\midrule
\multirow{3}{*}{Diabetes} 
& Accuracy & 0.565 \(\pm\) 0.015 & 0.573 \(\pm\) 0.008 & 0.571 \(\pm\) 0.008 & 0.561 \(\pm\) 0.015 & 0.596 \(\pm\) 0.007 & 0.572 \(\pm\) 0.010 \\
& AUC & 0.582 \(\pm\) 0.019 & 0.597 \(\pm\) 0.008 & 0.591 \(\pm\) 0.012 & 0.576 \(\pm\) 0.018 & 0.626 \(\pm\) 0.008 & 0.593 \(\pm\) 0.013 \\
& F1 & 0.505 \(\pm\) 0.028 & 0.533 \(\pm\) 0.014 & 0.523 \(\pm\) 0.022 & 0.501 \(\pm\) 0.029 & 0.534 \(\pm\) 0.017 & 0.522 \(\pm\) 0.027 \\
\midrule
\multirow{3}{*}{Performance} 
& Accuracy & 0.684 \(\pm\) 0.019 & 0.724 \(\pm\) 0.011 & 0.711 \(\pm\) 0.014 & 0.683 \(\pm\) 0.018 & 0.668 \(\pm\) 0.084 & 0.702 \(\pm\) 0.022 \\
& AUC & 0.857 \(\pm\) 0.011 & 0.878 \(\pm\) 0.009 & 0.869 \(\pm\) 0.009 & 0.858 \(\pm\) 0.011 & 0.830 \(\pm\) 0.070 & 0.865 \(\pm\) 0.011 \\
& F1 & 0.478 \(\pm\) 0.025 & 0.557 \(\pm\) 0.024 & 0.521 \(\pm\) 0.029 & 0.478 \(\pm\) 0.027 & 0.494 \(\pm\) 0.090 & 0.507 \(\pm\) 0.035 \\
\bottomrule
\end{tabular}%
}
\caption{Single-source transfer learning with original tabular data.}
\label{tab:tabular_comparison_oracle}
\vskip -0.1in
\end{table}

\begin{table}[t!]
\centering
\resizebox{\textwidth}{!}{%
\begin{tabular}{cc|ccc|ccc}
\toprule
\multirow{3}{*}{Dataset} & \multirow{3}{*}{Metric} & \multicolumn{6}{c}{Classifier} \\
\cmidrule(lr){3-8}
& & \multicolumn{3}{c|}{MLP1} & \multicolumn{3}{c}{MLP2} \\
\cmidrule(lr){3-5}\cmidrule(lr){6-8}
& & Raw & DirectAug & \refine & Raw & DirectAug & \refine \\
\midrule
\multirow{3}{*}{Adult} 
& Accuracy & 0.808 \(\pm\) 0.007 & 0.615 \(\pm\) 0.046 & 0.805 \(\pm\) 0.008 & 0.800 \(\pm\) 0.010 & 0.641 \(\pm\) 0.052 & 0.791 \(\pm\) 0.016 \\
& AUC & 0.834 \(\pm\) 0.009 & 0.612 \(\pm\) 0.051 & 0.832 \(\pm\) 0.010 & 0.834 \(\pm\) 0.013 & 0.639 \(\pm\) 0.052 & 0.828 \(\pm\) 0.014 \\
& F1 & 0.549 \(\pm\) 0.046 & 0.383 \(\pm\) 0.039 & 0.555 \(\pm\) 0.029 & 0.564 \(\pm\) 0.032 & 0.395 \(\pm\) 0.047 & 0.562 \(\pm\) 0.027 \\
\midrule
\multirow{3}{*}{Credit} 
& Accuracy & 0.723 \(\pm\) 0.027 & 0.581 \(\pm\) 0.035 & 0.705 \(\pm\) 0.028 & 0.716 \(\pm\) 0.027 & 0.599 \(\pm\) 0.045 & 0.705 \(\pm\) 0.028 \\
& AUC & 0.728 \(\pm\) 0.027 & 0.578 \(\pm\) 0.045 & 0.705 \(\pm\) 0.028 & 0.720 \(\pm\) 0.026 & 0.599 \(\pm\) 0.045 & 0.687 \(\pm\) 0.028 \\
& F1 & 0.483 \(\pm\) 0.049 & 0.417 \(\pm\) 0.048 & 0.481 \(\pm\) 0.035 & 0.512 \(\pm\) 0.041 & 0.433 \(\pm\) 0.041 & 0.493 \(\pm\) 0.034 \\
\midrule
\multirow{3}{*}{Diabetes} 
& Accuracy & 0.587 \(\pm\) 0.007 & 0.516 \(\pm\) 0.007 & 0.575 \(\pm\) 0.006 & 0.614 \(\pm\) 0.006 & 0.551 \(\pm\) 0.016 & 0.609 \(\pm\) 0.004 \\
& AUC & 0.580 \(\pm\) 0.020 & 0.516 \(\pm\) 0.014 & 0.554 \(\pm\) 0.016 & 0.577 \(\pm\) 0.017 & 0.585 \(\pm\) 0.019 & 0.567 \(\pm\) 0.020 \\
& F1 & 0.503 \(\pm\) 0.032 & 0.489 \(\pm\) 0.025 & 0.498 \(\pm\) 0.022 & 0.503 \(\pm\) 0.026 & 0.483 \(\pm\) 0.037 & 0.514 \(\pm\) 0.025 \\
\midrule
\multirow{3}{*}{Performance} 
& Accuracy & 0.682 \(\pm\) 0.020 & 0.637 \(\pm\) 0.088 & 0.696 \(\pm\) 0.023 & 0.684 \(\pm\) 0.018 & 0.650 \(\pm\) 0.079 & 0.696 \(\pm\) 0.023 \\
& AUC & 0.857 \(\pm\) 0.011 & 0.805 \(\pm\) 0.074 & 0.862 \(\pm\) 0.011 & 0.859 \(\pm\) 0.010 & 0.814 \(\pm\) 0.068 & 0.863 \(\pm\) 0.012 \\
& F1 & 0.476 \(\pm\) 0.028 & 0.464 \(\pm\) 0.096 & 0.499 \(\pm\) 0.036 & 0.480 \(\pm\) 0.029 & 0.472 \(\pm\) 0.088 & 0.500 \(\pm\) 0.035 \\
\bottomrule
\end{tabular}%
}
\caption{Single-source transfer learning with noisy tabular data.}
\label{tab:tabular_comparison_label_noise}
\end{table}

\subsection{Ablation Studies}
\label{sec:app-ablation}

We conduct an ablation study to investigate the effect of complexity of the encoder $h$ in \refine, by varying the width and depth of the neural network models used. \Cref{fig:ablate_h_complex} reports the performance of \refine under five different models with increasing complexity for $h$. The left panel reports the total number of trainable parameters, the middle panel reports the classification accuracy using the original data, and the right panel using the noisy data. On the original data,  \refine consistently outperforms NoTrans across all levels of complexity by a considerable margin, demonstrating its ability to leverage useful pretrained features. On the noisy data, \refine performs on par with NoTrans regardless of the complexity of $h$, confirming its robustness to negative transfer. Together, these results show that \refine offers robust and reliable safeguarding against negative transfer.

\begin{figure}[h!]
\centering
\setlength{\tabcolsep}{0pt}           
\renewcommand{\arraystretch}{0.9}     
\begin{tabular}{ccc}
\includegraphics[width=0.32\textwidth]{./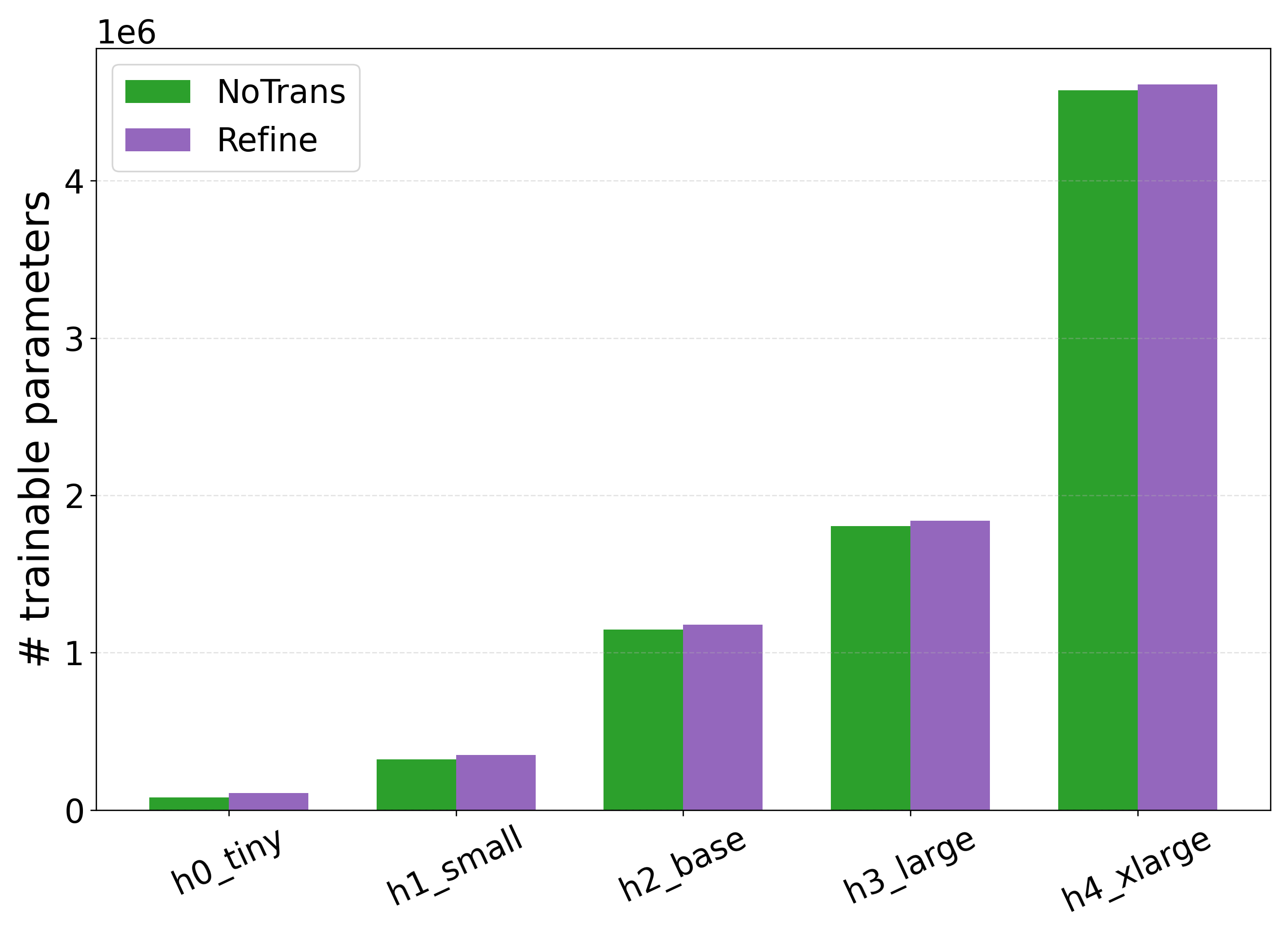} &
\includegraphics[width=0.32\textwidth]{./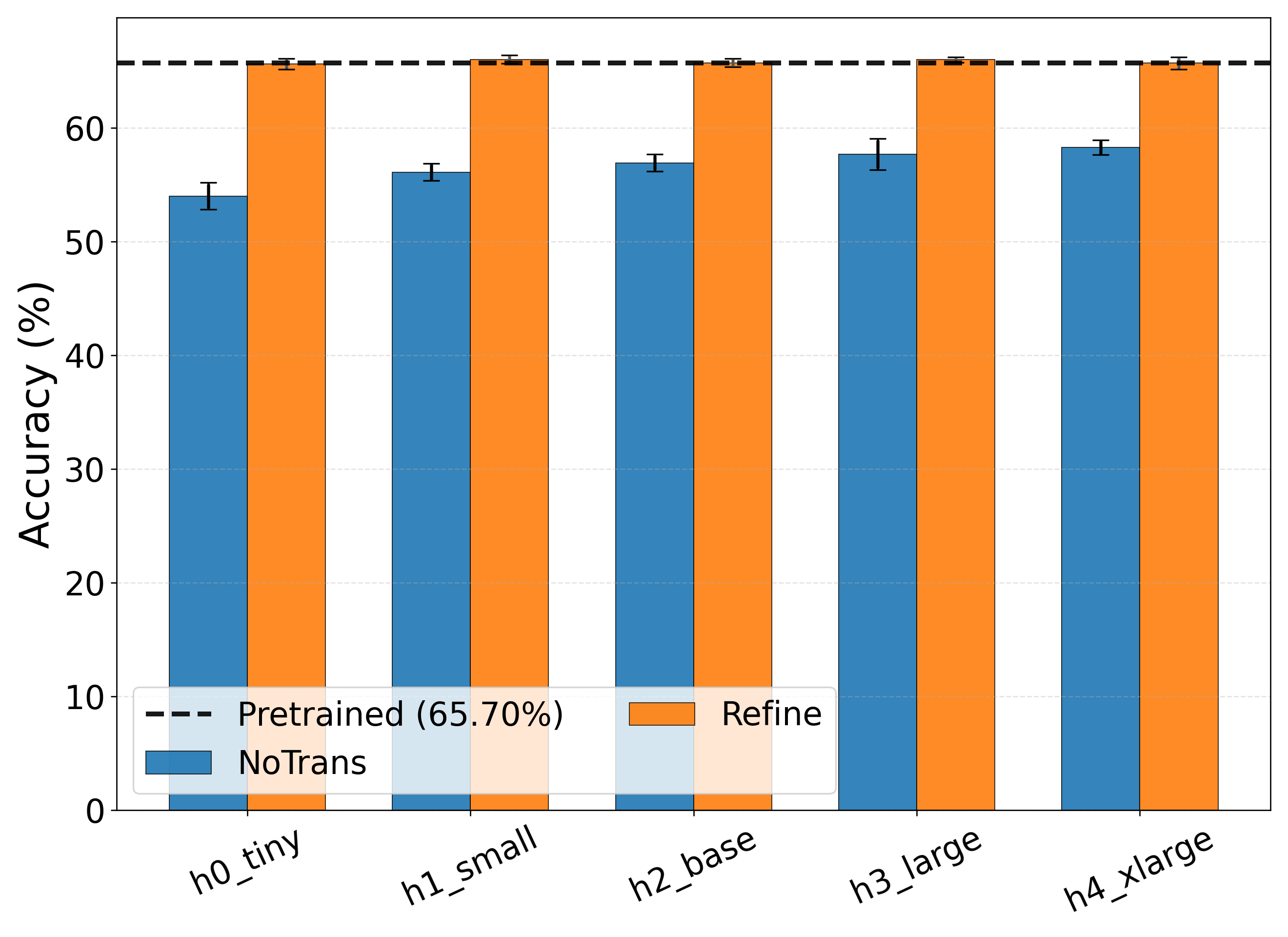} &
\includegraphics[width=0.32\textwidth]{./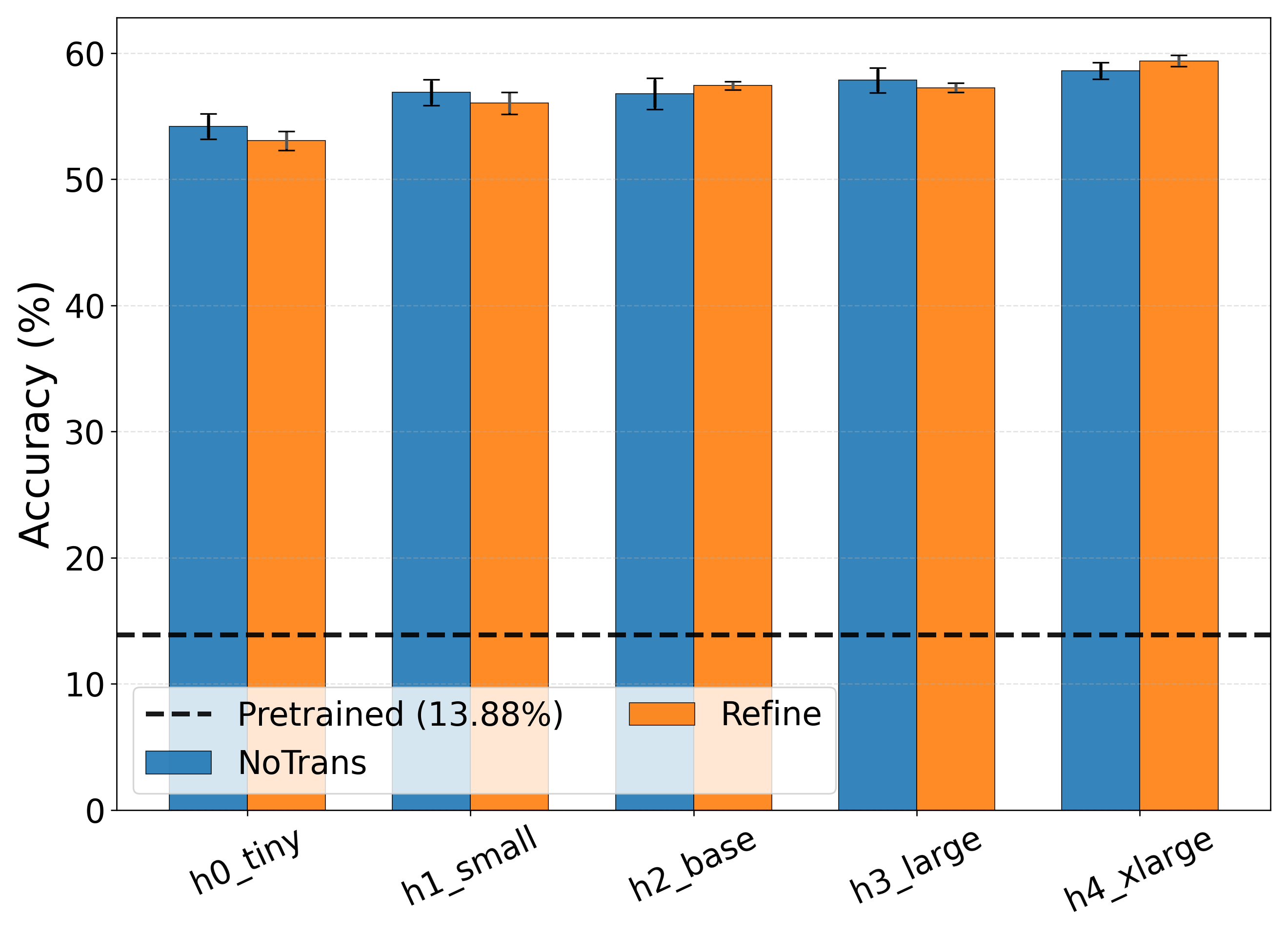} \\
\footnotesize Parameter Number & \footnotesize Accuracy (Original data) & \footnotesize Accuracy (Heavy noise) \\
\end{tabular}
\caption{Ablation study for the encoder $h$ with varying complexity.}
\label{fig:ablate_h_complex}
\end{figure}

A related ablation on adapter size further confirms that negative transfer is not due to insufficient parameter count. Here, 1x corresponds to the same adapter size used in the main experiment in Table~\ref{tab:cross_trans}. As shown in Fig.~\ref{fig:ablate_adapters}, enlarging the adapter from 1x to 500x yields only minor fluctuations around 65–66 percent accuracy, 0.72 AUC, and 0.65 F1, and never approaches the NoTrans baseline at 68.5 percent accuracy or 0.76 AUC. In contrast, \refine reaches 70.3 percent accuracy and 0.79 AUC, clearly surpassing both NoTrans and all adapter scales. These results show that increasing capacity alone cannot overcome the source–target mismatch responsible for negative transfer, while \refine remains the only mechanism that reliably corrects it.

\begin{figure}[htbp]
    \centering
    \setlength{\tabcolsep}{0pt}           
    \renewcommand{\arraystretch}{0.9}     
    \begin{tabular}{ccc}
        \includegraphics[width=0.32\textwidth]{./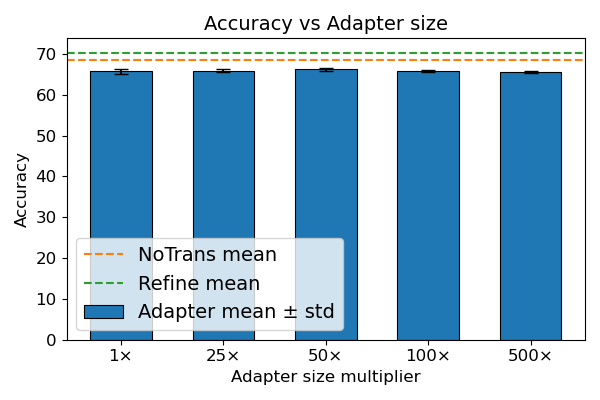} &
        \includegraphics[width=0.32\textwidth]{./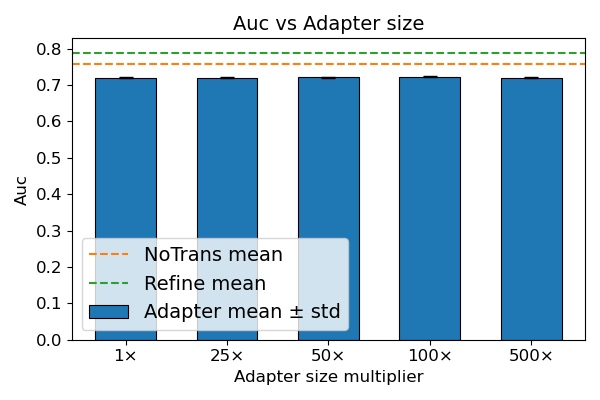} &
        \includegraphics[width=0.32\textwidth]{./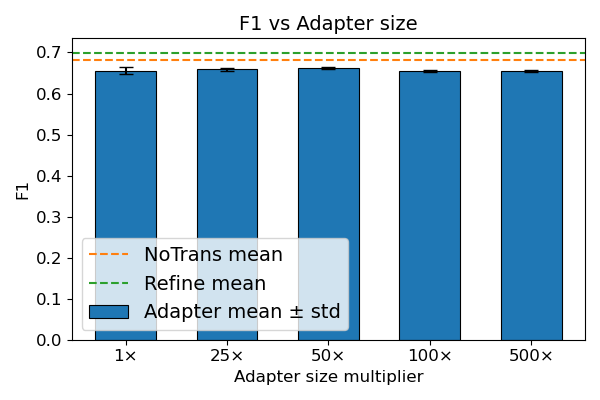} \\
        \footnotesize Accuracy & \footnotesize AUC & \footnotesize F1 \\
    \end{tabular}
    \caption{Performance of Adapter under varying parameter count multipliers.}
    \label{fig:ablate_adapters}
\end{figure}

\section{More Details on Experiment Setup and Implementations}
\label{sec:app-set-detail}

We provide additional details on experiment setup and implementations for better reproducibility. All experiments are conducted on an NVIDIA A10G (Ampere) GPU with 23 GB of GDDR6 memory, driver version 535.183.01, and CUDA 12.2. For semantic confusion in CIFAR-10 and CIFAR-100, we construct 4 and 47 pairs of related classes, respectively, and flip $50\%$ of each pair's samples to its counterpart, while also injecting white noise into image attributes with $\sigma=0.2$. For class imbalance, we create each imbalanced pretrained subset by first sampling 10,000 images from the full training split with a fixed seed (42). In CIFAR-10, classes 0-9 are sampled with proportions $[0.35, 0.30, 0.10, 0.07, 0.06, 0.045, 0.03, 0.02, 0.015, 0.01]$, yielding 3,500 to 100 images per class. In CIFAR-100, the first 10 classes are designated as majority, with 400 images each, and the remaining 90 as minority, with 100 images each, truncated to a total of 10,000 samples. \Cref{tab:settings_main_cross} summarizes the experiment settings. 

\begin{table}[h!]
\centering
\resizebox{\textwidth}{!}{%
\begin{tabular}{lllcccc}
\toprule
Dataset                    & Pretrained Model & Base Model    & Pretrain Size & Fine-tune Size & Adapter Para (\%) & \refine Para (\%) \\
\midrule
CIFAR-CNN-related          & CNN            & CNN           & 10000            & 4000             &   5.46              & 4.88             \\
CIFAR-TF-related           & Transformer    & Transformer   & 10000            & 4000             & 6.49                & 4.63             \\
CIFAR-10$\rightarrow$STL   & CNN            & CNN           & 10000         & 4000           & 5.46              & 4.88           \\
Clipart$\rightarrow$Sketch & ResNet18       & ResNet10      & 3000          & 1000           & 1.36              & 44.2           \\
USPS$\rightarrow$MNIST     & CNN            & CNN           & 5000          & 100            & 5.46              & 4.88           \\
Books$\rightarrow$Kitchen  & Transformer    & Transformer   & 2000          & 400            & 2.25              & 96.58          \\
DVD$\rightarrow$Electronics& Transformer    & Transformer   & 2000          & 400            & 2.25              & 96.58           \\
\bottomrule
\end{tabular}%
}
\caption{Experiment settings for all data examples.}
\label{tab:settings_main_cross}
\end{table}

We clarify the exact model architectures used. For CNN experiments, the finetuned model is a standard three-block convolutional network with channels $\{32,64,64\}$, where each block consists of a $3\times3$ convolution (padding $1$), ReLU activation, and $2\times2$ max pooling, followed by a $512$-dimensional fully connected layer and a linear classifier. The pretrained CNN is a larger backbone with convolutional stages $\{80,160,320,640,640,768\}$, followed by a $2560$-dimensional projection layer and a linear classifier. For transformer experiments, the finetuned model is a lightweight vision transformer with patch size $4$, embedding dimension $128$, two encoder layers, a $512$-dimensional MLP head, and a linear classifier. The pretrained transformer uses patch size $2$, embedding dimension $512$, six encoder layers, a $2560$-dimensional projection head, and a linear classifier. For the DomainNet experiments, following standard practice, we use ResNet-10 as the finetuned model and ResNet-18 (from \texttt{torchvision}) as the pretrained model.

\section{Further discussion about related work}

\noindent \paragraph{Transfer learning.}
The affine model transformation (AMT) approach \cite{minami23} is fundamentally different from our setting. AMT only applies an output-level update of the form
$f_T(x) = a \cdot f_S(x) + b$, which corresponds to a global scale and bias correction on the pretrained model. Such a transformation cannot address representation-level mismatch, nonlinear domain shift, or structured encoder errors, nor can it introduce new features or modalities. In contrast, \refine modifies adaptation at the representation level by keeping the pretrained encoder fixed and introducing an additive residual encoder that corrects the internal representation. This allows the predictor to change its entire functional form rather than merely rescale outputs. The residual structure also provides a natural safety property: when the pretrained model is helpful, the residual remains small; when it is harmful, the residual can override it, yielding performance no worse than training from scratch. AMT does not provide this fallback guarantee and cannot accommodate new modalities, whereas \refine can incorporate additional sources of information at adaptation time (e.g., spatial encoders added atop scGPT in our spatial-omics experiments).

While the deep transfer learning (DTL) framework \cite{jiao25} also introduces an auxiliary component beyond the base representation, its goals and assumptions differ substantially from ours. DTL retrains the representation using all upstream domains jointly with Wasserstein and distance-covariance penalties, requiring full access to multi-domain source data and a fixed set of domains and modalities during pretraining. Only after this upstream retraining is completed is the target-domain predictor then fitted under an independence constraint. In contrast, \refine assumes a fixed pretrained model from the outset and introduces a residual encoder \emph{only at adaptation time} to correct the frozen representation on the target distribution. This design enables our no-negative-transfer guarantee and fallback to the target-only estimator—properties not provided by DTL. Moreover, because DTL assumes that no new modalities appear after upstream training, it cannot handle scenarios such as spatial-omics where new sources of information become available exclusively at adaptation time, precisely the regime targeted by \refine.

\noindent \paragraph{Baseline selection for negative-transfer evaluation.}
Our selection of baselines follows recent recommendations from studies on negative transfer (NT) and parameter-efficient fine-tuning (PEFT). Importantly, our goal is not to benchmark raw target accuracy against the newest domain-alignment algorithms, but to evaluate robustness to negative transfer, for which the modern literature identifies only a small set of meaningful baselines. Recent PEFT analyses \cite{mai25} show that most contemporary PEFT variants behave similarly under distribution shift and provide little or no protection against negative transfer; thus, LoRA serves as a representative and widely used PEFT baseline for NT evaluation. Likewise, the NT survey literature emphasizes that very few modern transfer-learning methods are explicitly designed with safety objectives in mind; accordingly, adversarial domain adaptation approaches such as DANN remain the standard benchmarks used in NT studies \cite{zhang23}. Many newer transfer-learning methods primarily target domain alignment or feature matching but lack any mechanism for safety or fallback, so including additional variants would not meaningfully strengthen the evaluation. Consistent with the NT literature \cite{zhang23}, we therefore adopt a baseline set that directly probes safety: NoTrans, feature-based adaptation (LinearProbe and Adapter), adversarial DA (DANN), and one representative PEFT method (LoRA). These baselines provide the appropriate lens for assessing whether \refine achieves its intended property of avoiding negative transfer rather than merely improving average accuracy.

\noindent \paragraph{Source-free multi-source transfer.}
Our multi-source experiment operates under a source-free, adaptation-time setting in which the pretrained model is fixed and no upstream source data are accessible during adaptation. Under this constraint, classical multi-source transfer algorithms that rely on joint training over all source domains, full access to source datasets, re-optimization of a shared encoder, or traditional boosting \cite{yao10} cannot be applied, including multi-source domain alignment methods, mixture-of-experts training frameworks, and multi-source adversarial domain adaptation approaches. These methods fundamentally assume retraining with all sources present and therefore fall outside the feasible operation regime of our setting. At adaptation time, we only receive a small number of target-like auxiliary sources, often with mismatched structure, and have no ability to revisit any upstream data. Consequently, the only baselines that are valid in this source-free scenario are NoTrans and simple data concatenation. These baselines reflect the operations that a practitioner can realistically perform when upstream data cannot be accessed and isolate the negative-transfer phenomenon that we aim to study, namely how to safely incorporate multiple heterogeneous sources without degrading downstream performance.

\end{document}